\def\eqref#1{equation~\ref{#1}}
\def\1{\bm{1}}
\newcommand{\epsilonv}{\bm{\epsilon}}
\def\va{{\bm{a}}}
\def\vs{{\bm{s}}}
\def\mI{{\bm{I}}}
\DeclareMathAlphabet{\mathsfit}{\encodingdefault}{\sfdefault}{m}{sl}
\SetMathAlphabet{\mathsfit}{bold}{\encodingdefault}{\sfdefault}{bx}{n}
\def\gL{{\mathcal{L}}}
\def\gN{{\mathcal{N}}}
\newcommand{\E}{\mathbb{E}}
\newcommand{\KL}{D_{\mathrm{KL}}}
\title{Aligning Diffusion Behaviors with Q-functions \\ for Efficient Continuous Control}
\author{Huayu Chen$^{1,2}$, Kaiwen Zheng$^{1,2}$, Hang Su$^{1,2,3}$, Jun Zhu$^{1,2,3}$\thanks{The corresponding author.} \\
$^1$Department of Computer Science and Technology, Tsinghua University \\
$^2$Institute for AI, BNRist Center, Tsinghua-Bosch Joint ML Center, THBI Lab, Tsinghua University \\
$^3$Pazhou Lab (Huangpu), Guangzhou, China
}
\theoremstyle{plain}
\newtheorem{theorem}{Theorem}[section]
\newtheorem{proposition}[theorem]{Proposition}
\newtheorem{lemma}[theorem]{Lemma}
\theoremstyle{definition}
\theoremstyle{remark}
\begin{document}

\maketitle

\begin{abstract}
Drawing upon recent advances in language model alignment, we formulate offline Reinforcement Learning as a two-stage optimization problem: First pretraining expressive generative policies on reward-free behavior datasets, then fine-tuning these policies to align with task-specific annotations like Q-values. This strategy allows us to leverage abundant and diverse behavior data to enhance generalization and enable rapid adaptation to downstream tasks using minimal annotations. In particular, we introduce Efficient Diffusion Alignment (EDA) for solving continuous control problems. EDA utilizes diffusion models for behavior modeling. However, unlike previous approaches, we represent diffusion policies as the derivative of a scalar neural network with respect to action inputs. This representation is critical because it enables direct density calculation for diffusion models, making them compatible with existing LLM alignment theories. During policy fine-tuning, we extend preference-based alignment methods like Direct Preference Optimization (DPO) to align diffusion behaviors with continuous Q-functions. Our evaluation on the D4RL benchmark shows that EDA exceeds all baseline methods in overall performance. Notably, EDA maintains about 95\% of performance and still outperforms several baselines given only 1\% of Q-labelled data during fine-tuning. Code: \url{https://github.com/thu-ml/Efficient-Diffusion-Alignment}
\end{abstract}

\section{Introduction}

Learning diverse behaviors is generative modeling; transforming them into optimized policies is reinforcement learning. 
Recent studies have identified diffusion policies as a powerful tool for representing heterogeneous behavior datasets \citep{diffuser,csbc}. However, these behavior policies incorporate suboptimal decisions in datasets, making them unsuitable for direct deployment in downstream tasks. To get optimized policies, typical methods involve either augmenting the behavior policy with an additional guidance/evaluation network \citep{diffuser,qgpo,idql} or training a new evaluation policy supervised by the behavior policy \citep{arq,srpo}. While functional, these methods fail to leverage the full potential of pretrained behaviors as they require constructing some new policy models from scratch. This raises the question: \emph{Can we directly fine-tune pretrained diffusion behaviors into optimized policies?}

Recent advances in Large Language Model (LLM) alignment techniques \citep{ziegler2019fine,instructgpt,chatgpt} offer valuable insights for fine-tuning diffusion behavior policies due to the fundamental similarity of the issues they aim to address (Fig. \ref{fig:motivation}). While pretrained LLMs accurately imitate language patterns from web-scale corpus, they also capture toxic or unwanted content within the dataset. Alignment algorithms, such as Direct Preference Optimization (DPO, \citep{DPO}), are designed to remove harmful or useless content learned during pretraining. They enable quick adaptations of pretrained LLMs to human intentions by fine-tuning them on a small dataset annotated with human preference labels. These strategies, due to their simplicity and effectiveness, have seen widespread applications in academia and industry.

Despite the high similarity in problem formulation and the immense potential of LLM alignment techniques, they cannot be directly applied to fine-tune diffusion policy in domains like continuous control. This is primarily because LLMs employ Categorical models to deal with discrete actions (tokens). Their alignment relies on computing data probabilities for maximum likelihood training (Sec. \ref{sec:DPO}). However, diffusion models lack a tractable probability calculation method in continuous action space \citep{sfbc}. Additionally, the data annotation method differs significantly between two areas: LM alignment uses binary preference labels for comparing responses, while continuous control uses scalar Q-functions for evaluating actions (Fig. \ref{fig:motivation}).

To allow aligning diffusion behavior models with Q-functions for policy optimization, we introduce Efficient Diffusion Alignment (EDA). EDA consists of two stages: behavior pretraining and policy fine-tuning. During the pretraining stage, we learn a conditional diffusion behavior model on reward-free datasets. Different from previous work which constructs diffusion models as an end-to-end network, we represent diffusion policies as the derivative of a scalar neural network with respect to action inputs. This representation is critical because it enables direct density calculation for diffusion policies. We demonstrate that the scalar network exactly outputs the unnormalized density of behavior distributions, making diffusion policies compatible with existing LLM alignment theories.

During the fine-tuning stage, we propose a novel algorithm that directly fine-tunes pretrained behavior models into optimized diffusion policies. The training objective is strictly derived by constructing a classification task to predict the optimal action using log-probability ratios between the policy and the behavior model. Our approach innovatively expands DPO by allowing fine-tuning on an arbitrary number of actions annotated with explicit Q-values, beyond just the typical binary preference data.

One main advantage of EDA is that it enables fast and data-efficient adaptations of behavior models in downstream tasks. Our experiments on the D4RL benchmark \citep{d4rl} show that EDA maintains 95 \% of its performance and still surpasses baselines like IQL \citep{iql} with just 1\% of Q-labelled data relative to the pretraining phase. Besides, EDA exhibits rapid convergence during fine-tuning, requiring only about 20K gradient steps (about 2\% of the typical 1M policy training steps) to achieve convergence. Finally, EDA outperforms all reference baselines in overall performance with access to the full datasets. We attribute the high efficiency of EDA to its exploitation of the diffusion behavior models' generalization ability acquired during pretraining.

Our key contributions: 1. We represent diffusion policies as the derivative of a scalar value network to allow direct density estimation. This makes diffusion policies compatible with the existing alignment framework. 2. We extend preference-based alignment methods and propose EDA to align diffusion behaviors with scalar Q-functions, showcasing its vast potential in continuous control.

\begin{figure}[t]
    \centering
    \includegraphics[width=\linewidth]{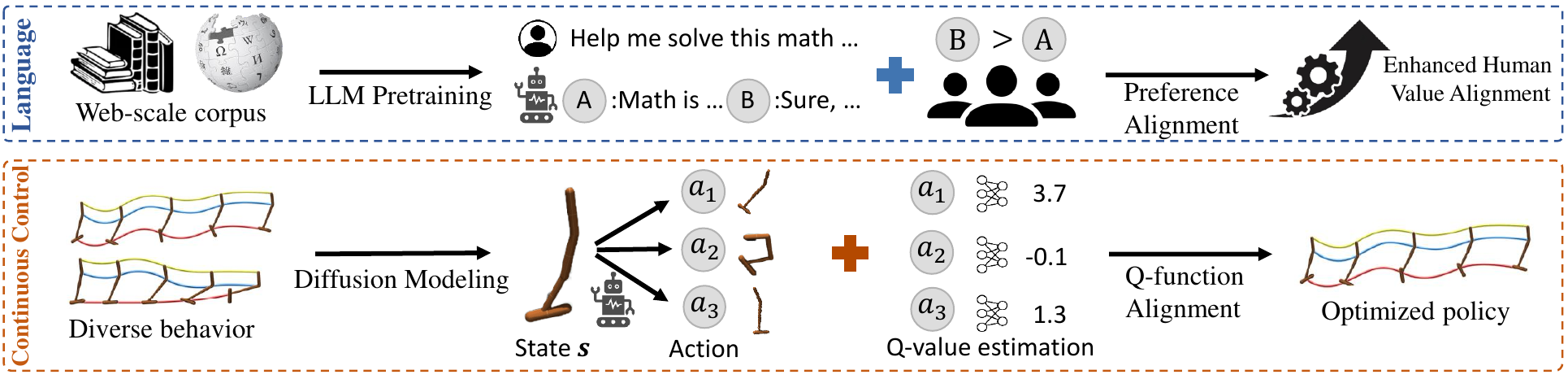}
	\caption{\label{fig:motivation} Comparison between alignment strategies for LLMs and diffusion policies (ours).}
\vspace{-0.05in}
\end{figure}

\section{Background}
\subsection{Offline Reinforcement Learning}
Offline RL aims to tackle decision-making problems by solely utilizing a pre-collected behavior dataset. Consider a typical Markov Decision Process (MDP) described by the tuple $\langle\mathcal{S},\mathcal{A}, P,r,\gamma\rangle$. $\mathcal{S}$ is the state space, $\mathcal{A}$ is the action space, $P(\vs'|\vs,\va)$ is the transition function, $r(\vs, \va)$ is the reward function and $\gamma$ is the discount factor. Given a static dataset $\mathcal{D}^\mu:=\{\vs, \va, r, \vs'\}$ representing interaction history between an implicit policy $\mu$ and the MDP environment, our goal is to learn a new policy that maximizes cumulative rewards in this MDP while staying close to the behavior policy $\mu$. 

Offline RL can be formalized as a constrained policy optimization problem \citep{bear, awac, crr}:
\begin{equation}
\label{eq:cpo}
    \max_{\pi} \mathbb{E}_{\vs \sim \mathcal{D}^\mu, \va \sim \pi(\cdot|\vs)} Q(\vs, \va) - \beta \KL \left[\pi(\cdot |\vs) || \mu(\cdot |\vs) \right],
\end{equation}
where $Q(\vs, \va) $ is an action evaluation network that can be learned from $\mathcal{D}^\mu$. $\beta$ is a temperature coefficient. Previous work \citep{rwr, awr} proves that the optimal solution for solving Eq. \ref{eq:cpo} is:
\begin{equation}
    \pi^*(\va|\vs) = \frac{1}{Z(\vs)} \mu(\va|\vs) e^{ Q(\vs, \va) /\beta}.
\label{Eq:pi_optimal}
\end{equation}
In this paper, we focus on how to efficiently learn parameterized policies for modeling $\pi^*$.
\subsection{Direct Preference Optimization for Language Model Alignment}
\label{sec:DPO}
Direct Preference Optimization (DPO, \citep{DPO}) is a fine-tuning technique for aligning pretrained LLMs with human feedback. Suppose we already have a pretrained LLM model $\mu(\va|\vs)$, where $\vs$ represents user instructions, and $\va$ represents generated responses. The goal is to align $\mu_\phi$ with some implicit evaluation rewards $r^\text{LM}(\vs,\va)$ that reflect human preference. Our target model is $\pi^*(\va|\vs)\propto\mu_\phi(\va|\vs)e^{r^\text{LM}(\vs,\va) / \beta}$. 

DPO assumes we only have access to some pairwise preference data $\{\vs \rightarrow (\va_w > \va_l)\}$ and the preference probability is influenced by $r^\text{LM}(\vs,\va)$. Formally,
\begin{equation}
    p(\va_w \succ \va_l|\vs) := \frac{e^{r^\text{LM}(\vs, \va_w)}}{e^{r^\text{LM}(\vs, \va_l)}+e^{r^\text{LM}(\vs, \va_w)}} = \sigma(r^\text{LM}(\vs, \va_w) - r^\text{LM}(\vs, \va_l)),
\end{equation}
where $\sigma$ is the sigmoid function. 

In order to learn $\pi_\theta \approx \pi^*(\va|\vs) \propto \mu_\phi(\va|\vs) e^{r^\text{LM}(\vs,\va)/\beta}$, DPO first parameterizes a reward model using the log-probability ratio between $\pi_\theta$ and $\mu_\phi$, and then optimizes this reward model through maximum likelihood training:
\begin{equation}
\label{Eq:loss_DPO}
    \mathcal{L}_\text{DPO} =- \E_{\{\vs, \va_w \succ \va_l\}} \log \sigma (r_\theta^\text{LM}(\vs, \va_w) - r_\theta^\text{LM}(\vs,\va_l)),
\end{equation}
\begin{equation*}
\label{Eq:para_DPO}
   \hspace{-2.2cm} \text{where}  \quad \quad r_\theta^\text{LM}(\vs, \va):= \beta \log \frac{\pi_\theta(\va|\vs)}{\mu_\phi(\va|\vs)}
\end{equation*} 
The key insight behind DPO's loss function is the equivalence and mutual convertibility between the policy model and the reward model. This offers a new perspective for solving generative policy optimization problems by applying discriminative classification loss.
\subsection{Diffusion Modeling for Estimating Behavior Score Functions}

Recent studies show that diffusion models \citep{sohl2015deep, diffusion,sde} excel at representing heterogeneous
behavior policies in continuous control \citep{diffuser, sfbc, csbc}. To train a diffusion behavior model, we first gradually inject Gaussian noise into action points according to the forward diffusion process:
\begin{equation}
\label{eq:forward_process}
    \va_t = \alpha_t \va + \sigma_t \epsilonv,
\end{equation}
where $t\in [0,1]$, and $\epsilonv$ is standard Gaussian noise. $\alpha_t,\sigma_t \in [0,1]$ are manually defined so that at time $t=0$, we have $\va_t$ = $\va$ and at time $t=1$, we have $\va_t \approx \epsilonv$. When $\va$ is sampled from the behavior policy $\mu(\va|\vs)$, the marginal distribution of $\va_t$ at various time $t$ satisfies 
\begin{equation}
\label{eq:diffusion_forward_process}
    \mu_{t}(\va_t|\vs,t)=\int \gN(\va_t | \alpha_t\va, \sigma_t^2\mI) \mu(\va|\vs,t) \mathrm{d} \va.
\end{equation}

Intuitively, the diffusion training objective predicts the noise added to the original behavior actions:
\begin{equation}
\label{Eq:diffusion_loss}
\min_\phi \E_{t,\epsilonv,\vs,\va\sim\mu(\cdot|\vs)}\left[
        \|\epsilonv_\phi(\va_t|\vs,t) - \epsilonv\|_2^2
    \right]_{\va_t=\alpha_t \va + \sigma_t \epsilonv}.
\end{equation}
More formally, it can be proved that the learned "noise predictor" $\epsilonv_\phi$ actually represents the \textit{score function} $\nabla_{\va_t}\log \mu_t(\va_t|\vs,t)$ of the diffused behavior distribution $\mu_t$ \citep{sde}:
\begin{equation}
\label{Eq:diffusion_score}
\nabla_{\va_t}\log \mu_t(\va_t|\vs,t) = -\epsilonv^*(\va_t|\vs,t)/\sigma_t \approx -\epsilonv_\phi(\va_t|\vs,t)/\sigma_t.
\end{equation}
With such a score-function estimator, we can employ existing numerical solvers \citep{song2020denoising, lu2022dpm} to reverse the diffusion process, and sample actions from the learned behavior policy $\mu_\phi$.

\begin{figure}[t]
\label{fig:main_arch}
    \centering
    \includegraphics[width=\linewidth]{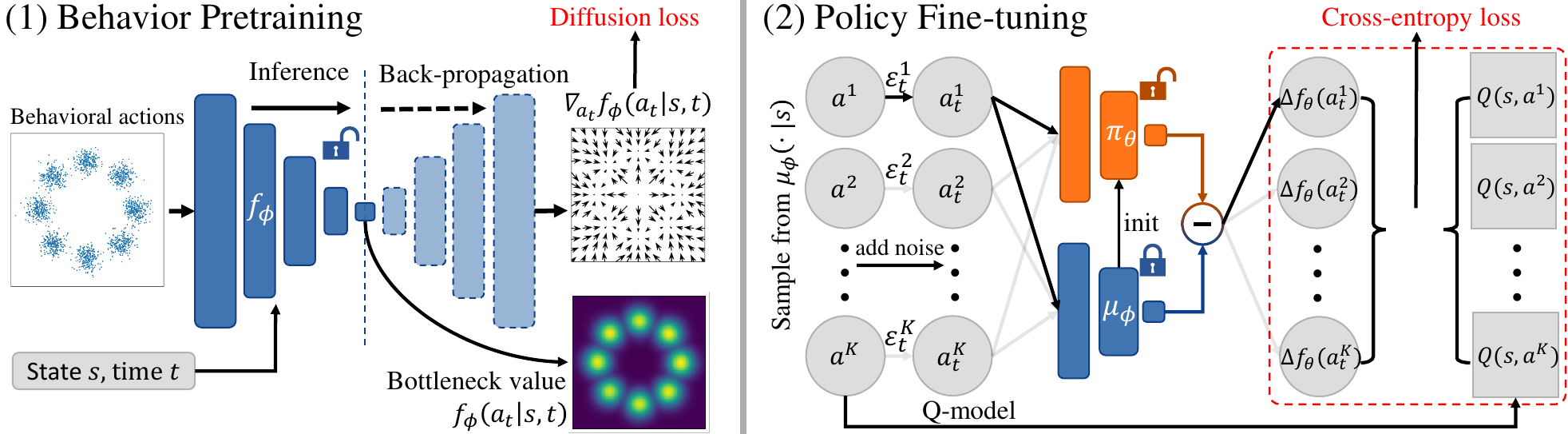}
	\caption{\label{fig:main} Algorithm overview. \textbf{Left:} In behavior pretraining, the diffusion behavior model is represented as the derivative of a scalar neural network with respect to action inputs. The scalar outputs of the network can later be utilized to estimate behavior density. \textbf{Right:} In policy fine-tuning, we predict the optimality of actions in a contrastive manner among $K$ candidates. The prediction logit for each action is the density gap between the learned policy model and the frozen behavior model. We use cross-entropy loss to align prediction logits $\triangle f_\theta := f_\theta^\pi- f_\theta^\mu$ with dataset Q-labels.}
\vspace{-0.05in}
\end{figure}

\section{Method}
We decompose the policy optimization problem into two stages: behavior pretraining (Sec. \ref{Sec:BDM}) and policy alignment (Sec. \ref{sec:alignment}).
\subsection{Bottleneck Diffusion Models for Efficient Behavior Density Estimation}
\label{Sec:BDM}
Recent advances in alignment techniques cannot be readily applied to continuous control tasks. Their successful applications in LLM fine-tuning require two essential prerequisites:
\begin{enumerate}
\item A powerful foundation model capable of capturing diverse behaviors within datasets.
\item A tractable probability calculation method that allows direct density estimation (Eq. \ref{Eq:loss_DPO}).
\end{enumerate}
Language models primarily deal with discrete actions (tokens) defined by a vocabulary set $\mathcal{V}$, and thus employ Categorical models. This modeling method enables easy calculation of data probability through a softmax operation and is capable of representing \emph{any} discrete distribution. 
In contrast, for continuous action space, direct density estimation is not so feasible. Diffusion policies only estimate the gradient field (a.k.a., score) of data density instead of the density value itself \citep{sde}, making it impossible to directly apply LLM alignment techniques \citep{DPO,KTO,NCA}. Conventional Gaussian policies have a tractable probability formulation but lack enough expressivity and multimodality needed to accurately model behavior datasets \citep{diffusionql, arq, sfbc}, and catastrophically fail in our initial experiments.

To address the above limitation of diffusion models, we propose a new diffusion modeling technique to enable direct density estimation. Normally, a conditional diffusion policy $\epsilon_\phi(a_t|s,t): \mathcal{A} \times \mathcal{S} \times \mathbb{R} \rightarrow \mathbb{R}^{|\mathcal{A}|}$ maps noisy actions $a_t$ to predicted noises $\epsilon \in \mathbb{R}^{|\mathcal{A}|}$. In our approach, we redefine $\epsilon_\phi$ as the derivative of a scalar network $f_\phi(a_t| s, t): \mathcal{A} \times \mathcal{S} \times \mathbb{R} \rightarrow \mathbb{R}$ with respect to input $a_t$:
\begin{equation}
    \epsilon_\phi(a_t|s,t):= -\sigma_t \nabla_{a_t} f_\phi(a_t| s, t).
\end{equation}
Given that $f_\phi$ is a parameterized network, its gradient computation can be conveniently performed by auto-differential libraries. The new training objective for $f_\phi$ can then be reformulated from Eq. \ref{Eq:diffusion_loss}:
\begin{equation}
\label{Eq:diffusion_lossEDM}
    \min_\phi \gL_{\mu}(\phi) =  \E_{t,\epsilonv, (\vs,\va) \sim \mathcal{D}^\mu}\left[
        \|\sigma_t \nabla_{a_t} f_\phi(a_t| s, t) + \epsilonv\|_2^2
    \right]_{\va_t=\alpha_t \va + \sigma_t \epsilonv}.
\end{equation}
As noted by \citep{sde}, with unlimited model capacity, the optimal solution for solving Eq. \ref{Eq:diffusion_lossEDM} is:
\begin{equation}
    \epsilonv^*(\va_t|\vs, t) = - \sigma_t \nabla_{\va_t} \log \mu_t(\va_t|\vs,t) \Longrightarrow  f^*(a_t| s, t) = \log \mu_t(\va_t|\vs,t) + C(\vs, t).
\end{equation}
An illustration is provided in Figure \ref{fig:main} (left). Intuitively, our proposed modeling method first compresses the input action into a scalar value with one single dimension. Then, this bottleneck value is expanded back to $\mathbb{R}^{|\mathcal{A}|}$ through back-propagation. We thus refer to $f_\phi$ as \emph{Bottleneck} Diffusion Models (BDMs).

The primary advantage of BDMs is their ability to efficiently estimate behavior densities in a single forward pass. Moreover, BDMs are fully compatible with existing diffusion-based codebases regarding training and sampling procedures, inheriting their key benefits such as training stability and model expressivity. BDMs can also be viewed as a diffused version of Energy-Based Models (EBMs, \citep{EBM}). We refer interested readers to Appendix \ref{appendix:EBM} for a detailed discussion.

\begin{figure}[t]
\centering
\begin{minipage}{0.03\linewidth}
\rotatebox{90}{
\normalsize{\ $t=1.0$\ \ \ \ \ \   $t=0.3$   \ \ \ \  \  $t=0.2$   \ \ \ \  \  $t=0.0$ \ }
}
\end{minipage}
\begin{minipage}{0.479\linewidth}
    \begin{minipage}{0.23\linewidth}
		\centering
	{\scriptsize Dataset\vphantom{Pretrained $f_\phi^\mu$}}
		\includegraphics[width=\linewidth]{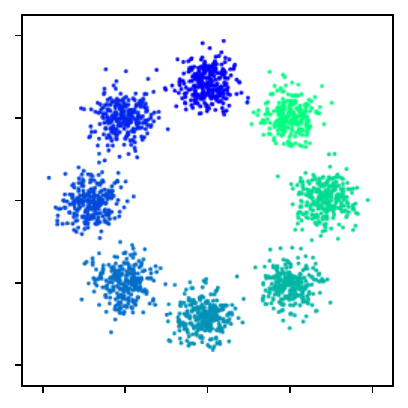}\\
  		\includegraphics[width=\linewidth]{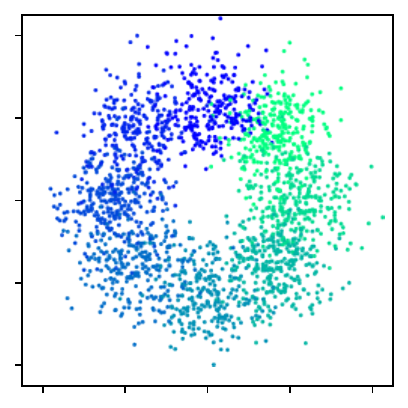}\\
		\includegraphics[width=\linewidth]{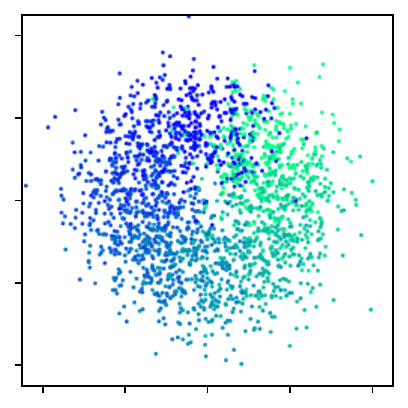}\\
		\includegraphics[width=\linewidth]{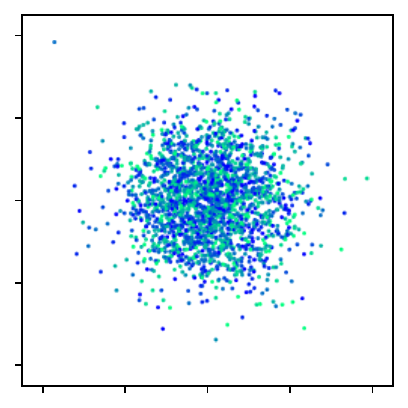}
	\end{minipage} 
    \begin{minipage}{0.23\linewidth}
		\centering
  	{\scriptsize Pretrained $f_\phi^\mu$}
		\includegraphics[width=\linewidth]{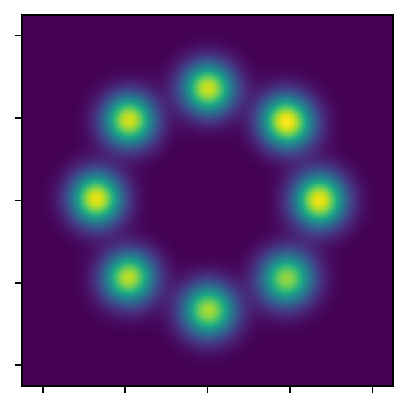}\\
		\includegraphics[width=\linewidth]{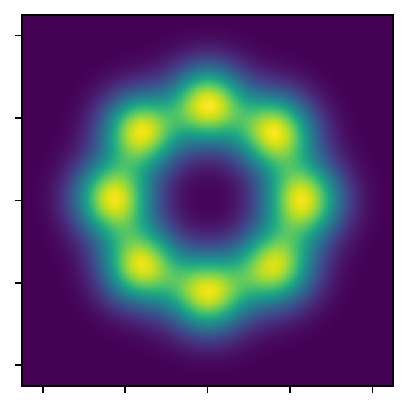}\\
		\includegraphics[width=\linewidth]{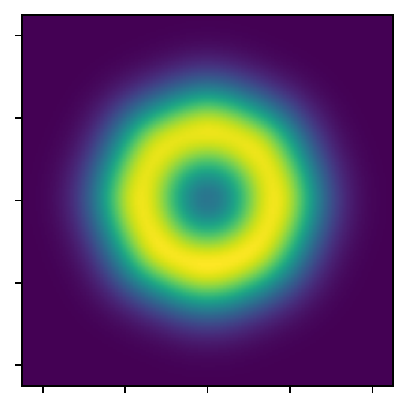}\\
		\includegraphics[width=\linewidth]{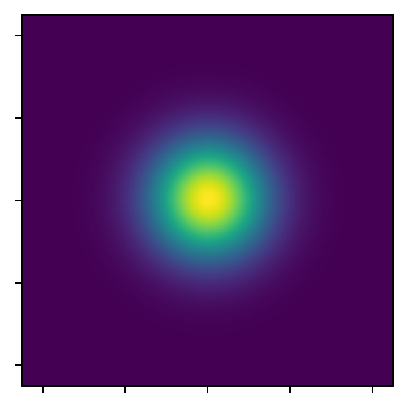}
	\end{minipage} 
    \begin{minipage}{0.23\linewidth}
		\centering
  	{\scriptsize fine-tuned $f_\theta^\pi$\vphantom{Pretrained $f_\phi^\mu$}}
		\includegraphics[width=\linewidth]{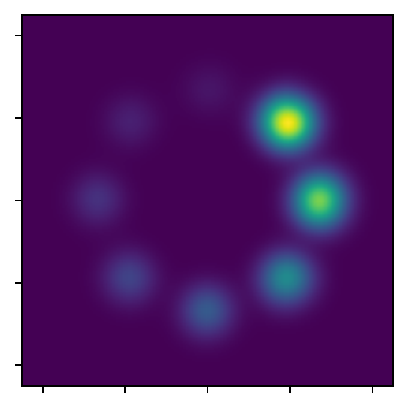}\\
		\includegraphics[width=\linewidth]{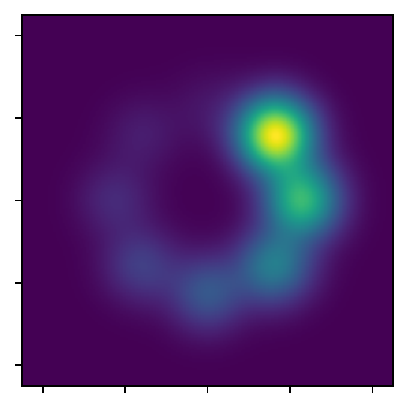}\\
		\includegraphics[width=\linewidth]{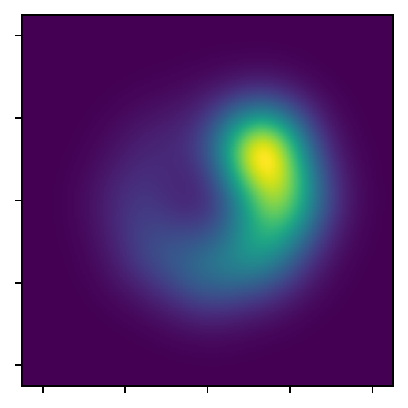}\\
		\includegraphics[width=\linewidth]{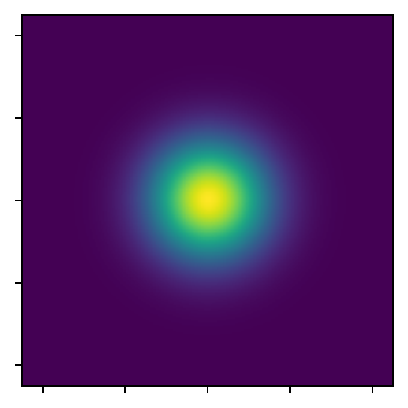}
	\end{minipage} 
    \begin{minipage}{0.23\linewidth}
		\centering
		{\scriptsize $Q_\theta\!\!:=\!\!f_\theta^\pi\!\!-\!\!f_\phi^\mu$}
		\includegraphics[width=\linewidth]{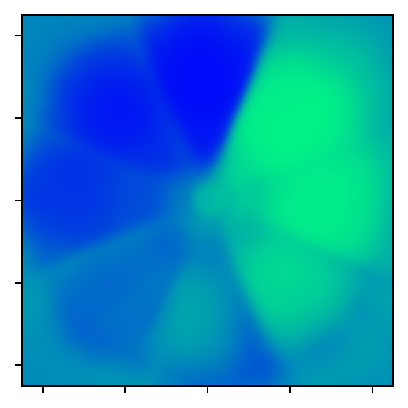}\\
		\includegraphics[width=\linewidth]{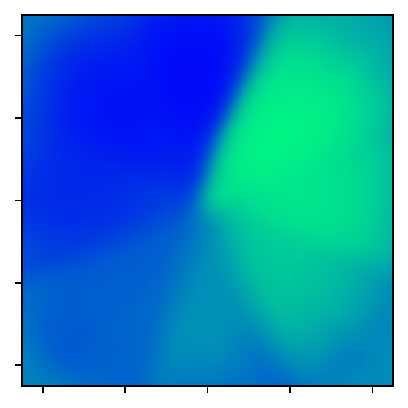}\\
		\includegraphics[width=\linewidth]{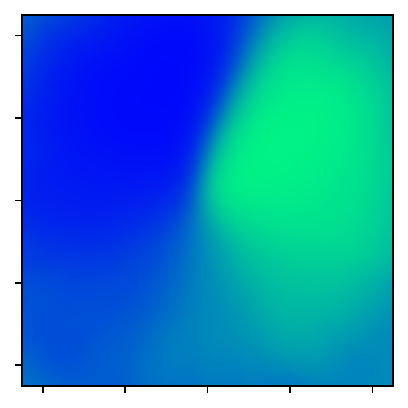}\\
		\includegraphics[width=\linewidth]{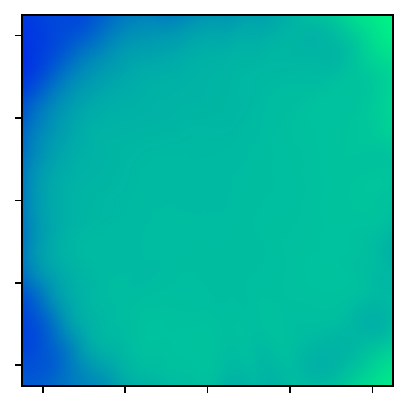}\\
	\end{minipage}
\centering {\footnotesize (a) Value-based optimization} 
\end{minipage}
\begin{minipage}{0.479\linewidth}
    \begin{minipage}{0.23\linewidth}
		\centering
	{\scriptsize Dataset\vphantom{Pretrained $f_\phi^\mu$}}
		\includegraphics[width=\linewidth]{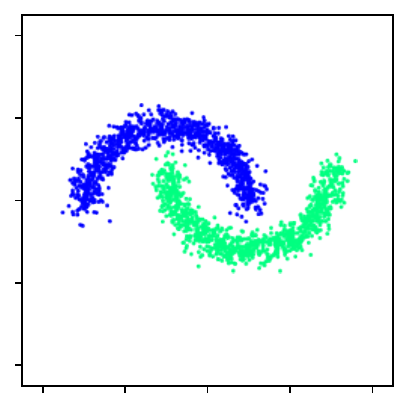}\\
		\includegraphics[width=\linewidth]{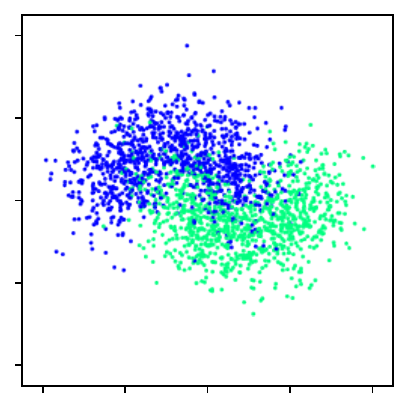}\\
		\includegraphics[width=\linewidth]{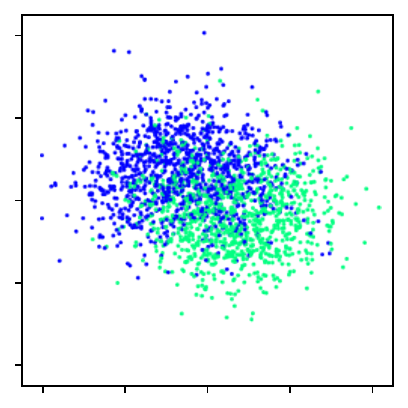}\\
		\includegraphics[width=\linewidth]{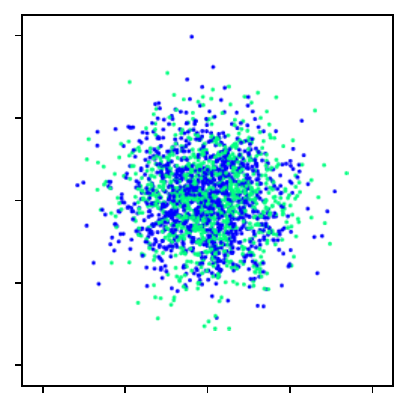}
	\end{minipage} 
    \begin{minipage}{0.23\linewidth}
		\centering
  	{\scriptsize Pretrained $f_\phi^\mu$}
		\includegraphics[width=\linewidth]{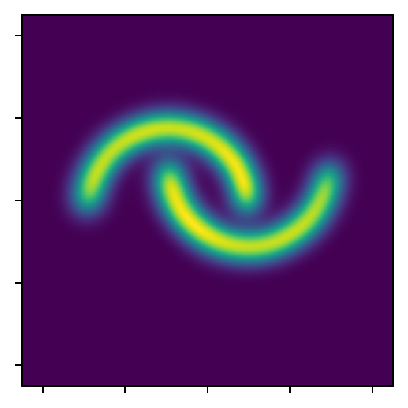}\\
		\includegraphics[width=\linewidth]{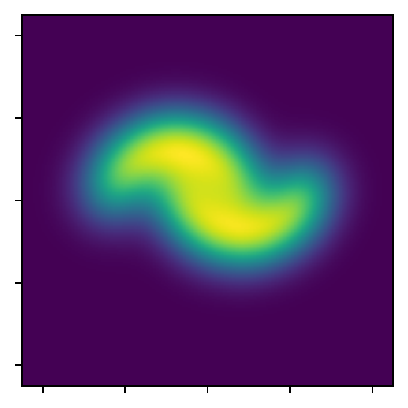}\\
		\includegraphics[width=\linewidth]{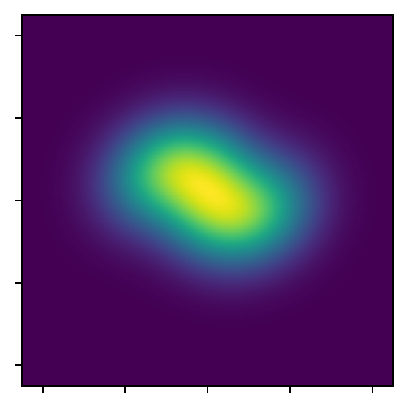}\\
		\includegraphics[width=\linewidth]{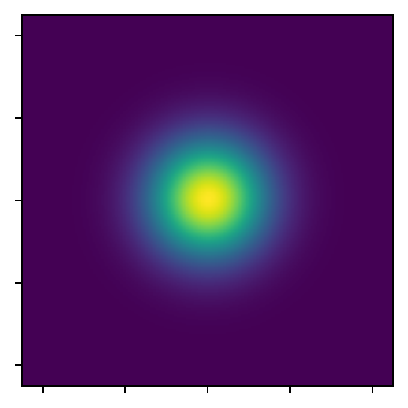}
	\end{minipage} 
    \begin{minipage}{0.23\linewidth}
		\centering
  	{\scriptsize fine-tuned $f_\theta^\pi$\vphantom{Pretrained $f_\phi^\mu$}}
		\includegraphics[width=\linewidth]{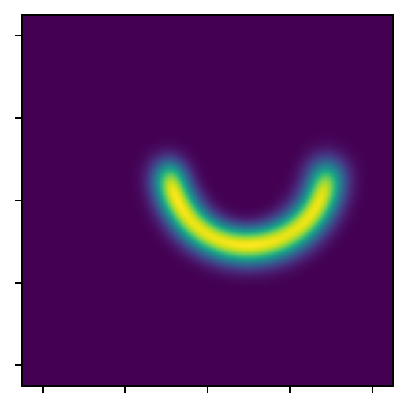}\\
		\includegraphics[width=\linewidth]{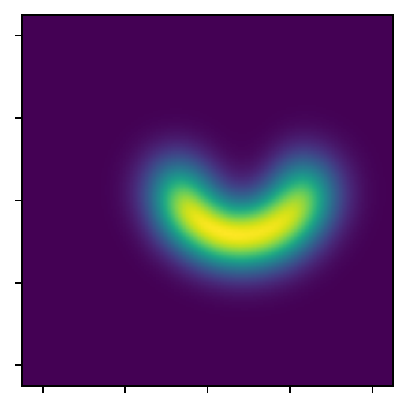}\\
		\includegraphics[width=\linewidth]{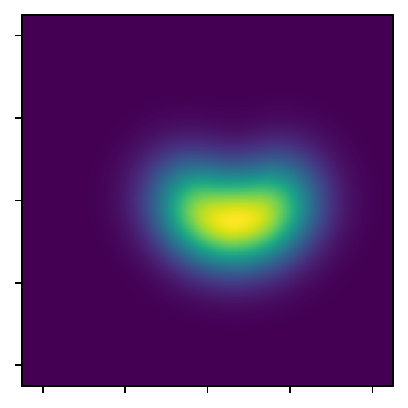}\\
		\includegraphics[width=\linewidth]{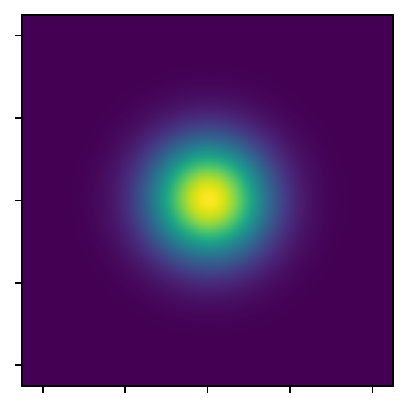}
	\end{minipage} 
    \begin{minipage}{0.23\linewidth}
		\centering
		{\scriptsize $Q_\theta\!\!:=\!\!f_\theta^\pi\!\!-\!\!f_\phi^\mu$}
		\includegraphics[width=\linewidth]{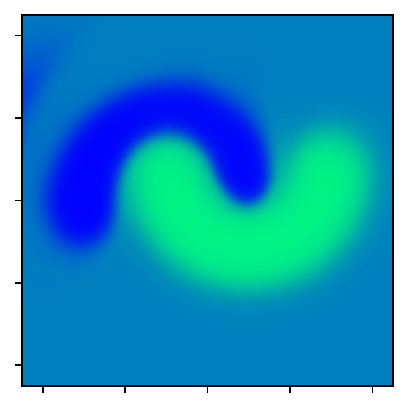}\\
		\includegraphics[width=\linewidth]{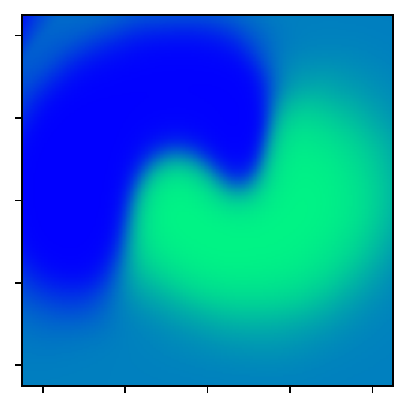}\\
		\includegraphics[width=\linewidth]{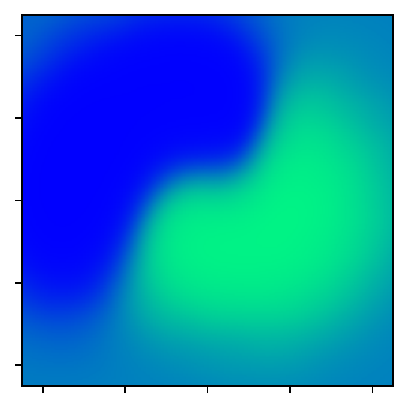}\\
		\includegraphics[width=\linewidth]{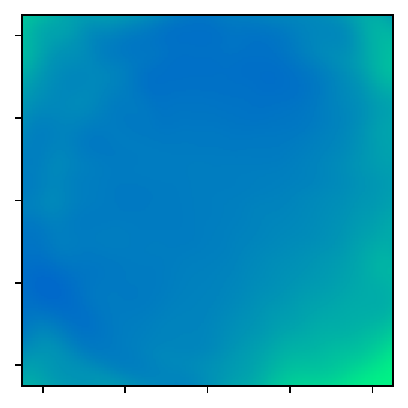}
	\end{minipage}\\
\centering {\footnotesize (b) Preference-based optimization} 
\end{minipage}
    \begin{minipage}{0.06\linewidth}
    \end{minipage}
    \begin{minipage}{0.94\linewidth}
    \hspace{0.45cm}\includegraphics[width=0.95\linewidth]{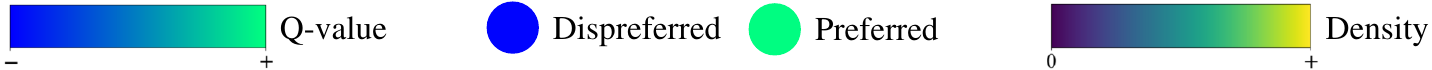}
    \end{minipage}

	\caption{\label{fig:2d_bandit} Experimental results of EDA in 2D bandit settings at different diffusion times. \textbf{Column 1:} Visualization of diversified behavior datasets. Each dot represents a two-dimensional behavioral action. Its color reflects the action's Q-value. \textbf{Column 2 \& 3:} Density maps of the action distribution as estimated by the pretrained or fine-tuned BDM models. The density for low-Q-value actions has been effectively decreased after fine-tuning. \textbf{Column 4:} The predicted action Q-values, calculated by Eq. \ref{eq:modelQ}, align with dataset Q-values in Column 1. See appendix \ref{appendix:toy_more} for complete results.}
\end{figure}

\subsection{Policy Optimization by Aligning Diffusion Behaviors with Q-functions}
\label{sec:alignment}
In this section, our goal is to learn a new policy $\pi_\theta \propto \mu_\phi e^{Q}$ by fine-tuning the previously pretrained behavior policy $\mu_\phi$ on a new dataset annotated by an existing Q-function $Q(\vs,\va)$. We show that this policy optimization problem can actually be transformed into a simple classification task for predicting the optimal action among multiple candidates. We elaborate on our method below.

\textbf{Dataset construction.} For any state $\vs$, we draw $K>1$ independent action samples $\va^{1:K}$ from $\mu_\phi(\cdot|\vs)$. Assume we already have an $Q$-function $Q(\vs,\va)$ that evaluates input state-action pairs in scalar values, our dataset is formed as $\mathcal{D}^f:=\{\vs, \va^{1:K}, Q(\vs, \va^k)|_{k\in 1:K} \}$.

\textbf{Action optimality.} We first introduce a formal notion of action optimality. We draw from the control-as-probabilistic-inference framework \citep{levine2018reinforcement} and define a random optimality variable $\mathcal{O}_K$, which is a one-hot vector of length $K$. The $k$-th index of $\mathcal{O}_K$ being 1 indicates that $\va^k$ is the optimal action within $K$ action candidates $\va^{1:K}$. We have
\begin{equation*}
    p(\mathcal{O}_K^k = 1| \vs, \va^{1:K}) = \frac{e^{Q(\vs, \va^k)}}{\sum_{i=1}^K e^{Q(\vs,\va^i)}}.
\end{equation*}
The optimality probability of a behavioral action $\va$ is proportional to the exponential of its Q-value, aligning with the optimal policy definition $\pi^*(\va|\vs) \propto \mu(\va|\vs) e^{Q(\vs, \va) / \beta}$.

\textbf{From policy optimization to action classification.} We construct a classification task by predicting the optimal action among $K$ candidates. This requires learning a Q-model termed $Q_\theta$ first. Drawing inspiration from DPO (Sec. \ref{sec:DPO}), we parameterize $Q_\theta$ as the log probability ratio between $\pi_\theta$ and $\mu_\phi$:
\begin{equation*}
    Q_\theta(\vs, \va) := \beta \log \frac{\pi_\theta(\va|\vs)}{\mu_\phi(\va|\vs)} + \beta \log Z(\vs), 
\end{equation*}
This parameterization allows us to directly optimize $\pi_\theta$ during training because $\pi_\theta(\va|\vs) = \frac{1}{Z(\vs)} \mu_\phi(\va|\vs) e^{Q_\theta(\vs, \va)/\beta}$ constantly holds. Since $Q$-values in datasets define the probability of being the optimal action, the training objective can be derived by applying cross-entropy loss:
\begin{equation}
\label{eq:NCA_loss}
    \max_\theta \gL_{\pi}(\theta) = \E_{(\vs,\va^{1:K}) \sim \mathcal{D}^f}\Bigg[\sum_{k=1}^K \underbrace{\frac{e^{Q(\vs, \va^k)}}{\sum_{i=1}^{K} e^{Q(\vs, \va^i)}}}_{\text{optimality probability}} \log \underbrace{\frac{e^{\beta \log \frac{\pi_\theta(\va^k|\vs)}{\mu_\phi(\va^k|\vs)}\bcancel{+ \beta \log Z(\vs)}}}{\sum_{i=1}^{K} e^{\beta \log \frac{\pi_\theta(\va^i|\vs)}{\mu_\phi(\va^i|\vs)} \bcancel{+ \beta \log Z(\vs)}}}}_{\text{predicted probability}}\Bigg]. 
\end{equation}

The unknown partition function $Z(\vs)$ automatically cancels out during division. $\beta$ is a hyperparameter that can be tuned to control how far $\pi_\theta$ deviates from $\mu_\phi$.

\textbf{Expanding to bottleneck diffusion behavior.} A reliable behavior density estimation of $\mu_\phi(\va|\vs)$ is critical and is a main challenge for applying Eq. \ref{eq:NCA_loss}. Our initial experiments tested with Gaussian policies drastically failed and even underperformed vanilla behavior cloning. This highlights the necessity of adopting a much more powerful generative policy, such as the BDM model (Sec. \ref{Sec:BDM}).

Diffusion policies define a series of distributions $\pi_t(\va_t|\vs,t)\!=\!\int\gN(\va_t | \alpha_t\va_0, \sigma_t^2\mI) \pi_0(\va_0|\vs,t) \mathrm{d} \va_0$ at different timesteps $t\in[0,1]$, rather than just a single distribution $\pi=\pi_0$. Consequently, instead of directly predicting the optimal action given raw actions $\va^{1:K}$, we perturb all actions with $K$ independent Gaussian noises according to the diffusion forward process by applying $\va_t^k=\alpha_t \va^k + \sigma_t \epsilonv^k$. Then we predict action optimality given $K$ noisy action $\va_t^{1:K}$:
\begin{align}
\label{eq:modelQ}
    Q_{\theta}(\vs, \va_t, t) := &\beta \log \frac{\pi_{t, \theta}(\va_t|\vs, t)}{\mu_{t, \phi}(\va_t|\vs, t)} + \beta \log Z(\vs) \nonumber\\
    = &\beta [f_\theta^\pi(\va_t|\vs,t) - f_\phi^\mu(\va_t|\vs,t)] + \beta [\log Z(\vs, t)- C^\pi(\vs, t) + C^\mu(\vs, t)], 
\end{align}
Similar to Eq. \ref{eq:NCA_loss}, all unknown terms above automatically cancel out in the training objective:
\begin{equation}
\label{eq:CEP_DPO_loss}
    \max_\theta \gL_{f}(\theta) = \E_{t,\epsilonv^{1:K},(\vs,\va^{1:K}) \sim \mathcal{D}^f}\Bigg[\sum_{k=1}^K \underbrace{\frac{e^{Q(\vs, \va^k)}}{\sum_{i=1}^{K} e^{Q(\vs, \va^i)}}}_{\text{optimality probability}} \log \underbrace{\frac{e^{\beta [f_\theta^\pi(\va_t^k|\vs,t) - f_\phi^\mu(\va_t^k|\vs,t)]}}{\sum_{i=1}^{K} e^{\beta [f_\theta^\pi(\va_t^i|\vs,t) - f_\phi^\mu(\va_t^i|\vs,t)]}}}_{\text{predicted probability on noisy actions}}\Bigg]. 
\end{equation}

\begin{proposition}\label{proposition}(Proof in Appendix \ref{appendix:proof}) Let $f_\theta^*$ be the optimal solution of Problem \ref{eq:CEP_DPO_loss} and $\pi_{t, \theta}^* \propto e^{f_\theta^*}$ be the optimal diffusion policy. Assuming unlimited model capacity and
data samples, we have the following results:

\textbf{(a) Optimality Guarantee.} At time $t=0$, the learned policy $\pi_{\theta}^*$ converges to the optimal target policy.
\begin{equation*}
    \pi_{\theta}^*(\va|\vs) =\pi_{t=0, \theta}^*(\va|\vs) \propto \mu_\phi(\va|\vs) e^{Q(\vs, \va)/\beta}
\end{equation*}
\textbf{(b) Diffusion Consistency.} At time $t>0$, $\pi_{t>0, \theta}$ models the diffused distribution of $\pi_{\theta}^*$:
\begin{equation*}
    \pi_{t, \theta}^*(\va|\vs,t)=\int \gN(\va_t | \alpha_t\va, \sigma_t^2\mI) \pi_{\theta}^*(\va_0|\vs) \mathrm{d} \va_0
\end{equation*}
asymptotically holds when $K \rightarrow \infty$ and $\beta = 1$, satisfying the definition of diffusion process (Eq. \ref{eq:diffusion_forward_process}). 
\end{proposition}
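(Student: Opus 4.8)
The plan is to analyze the population version of the objective in Problem~\ref{eq:CEP_DPO_loss} separately for each fixed state $\vs$ and diffusion time $t$, since the objective is an expectation over these. Writing $g_\theta(\va_t|\vs,t) := \beta[f_\theta^\pi(\va_t|\vs,t) - f_\phi^\mu(\va_t|\vs,t)]$, the predicted probability in each $K$-way term is exactly $\softmax_k(g_\theta(\va_t^{1:K}))$, while the target is $\softmax_k(Q(\va^{1:K}))$, so the per-realization summand is the negative cross-entropy between these two categorical distributions. Under unlimited capacity $f_\theta^\pi$ is free, hence $g_\theta$ ranges over all functions of $\va_t$, and the pretraining identity $f_\phi^\mu(\va_t|\vs,t) = \log\mu_t(\va_t|\vs,t) + C^\mu(\vs,t)$ lets me translate any statement about the optimal logit $g^*$ into one about $f_\theta^{\pi,*}$ and thus about $\pi_{t,\theta}^*\propto e^{f_\theta^{\pi,*}}$.

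For part~(a), at $t=0$ the forward process is the identity, so $\va_t^k=\va^k$ and both target and prediction are softmaxes over the \emph{same} actions. By Gibbs' inequality each per-realization term is maximized iff $\softmax(g)=\softmax(Q)$, and since $g$ acts on one action per slot, matching the softmax for almost every realization forces the pairwise differences $g(\va)-g(\va')=Q(\va)-Q(\va')$ (this is where $K>1$ is used), hence $g^*=Q+\text{const}$ on the support of $\mu_\phi$. Substituting $f_\phi^\mu=\log\mu_\phi+C^\mu$ at $t=0$ and exponentiating gives $\pi_{0,\theta}^*\propto e^{f_\theta^{\pi,*}}\propto\mu_\phi\,e^{Q/\beta}$, the claimed optimal policy.

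For part~(b) with $t>0$ the target depends on the clean action $\va^k$ while the prediction sees only $\va_t^k$, so I compute the stationarity condition by functionally differentiating the cross-entropy in $g$. The cross-entropy gradient at slot $m$ is $p_m-q_m$, and symmetry across the $K$ i.i.d. slots collapses the optimality condition to $\E[p_1\mid\va_t^1=\vu]=\E[q_1\mid\va_t^1=\vu]$ for $\mu_t$-a.e. $\vu$. Sending $K\to\infty$, the two softmax normalizers concentrate by the law of large numbers ($\sum_j e^{Q(\va^j)}\approx K\,\E_{\mu_\phi}[e^{Q}]$ and $\sum_j e^{g(\va_t^j)}\approx K\,\E_{\mu_t}[e^{g}]$), so after multiplying through by $K$ the condition reduces to $e^{g^*(\vu)}\propto\E[\,e^{Q(\va)}\mid\va_t=\vu\,]$. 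Applying Bayes' rule to the posterior $\mu_\phi(\va\mid\va_t=\vu)=\gN(\vu\mid\alpha_t\va,\sigma_t^2\mI)\mu_\phi(\va)/\mu_t(\vu)$ and setting $\beta=1$ so that $\pi_\theta^*\propto\mu_\phi e^{Q}$, I recognize $\E[e^{Q}\mid\va_t=\vu]\,\mu_t(\vu)\propto\int\gN(\vu\mid\alpha_t\va,\sigma_t^2\mI)\pi_\theta^*(\va)\,\mathrm{d}\va$, i.e. the diffused target distribution. Combining with $f_\phi^\mu=\log\mu_t+C^\mu$ yields $f_\theta^{\pi,*}=\log\big[\int\gN\,\pi_\theta^*\big]+\text{const}$, so $\pi_{t,\theta}^*$ equals the diffused $\pi_\theta^*$ after normalization.

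The main obstacle is the $K\to\infty$ analysis in part~(b): because $p_k$ and $q_k$ each vanish like $1/K$, the stationarity condition only becomes informative after rescaling by $K$, and I must justify the concentration of both normalizers together with the exchange of the limit and the conditional expectation (via uniform integrability / dominated convergence). This asymptotics is also precisely where $\beta=1$ enters — it is needed so that the exponent $e^{Q}$ surfacing from the normalizer of $q$ matches the $e^{Q/\beta}$ appearing in the target $\pi_\theta^*$; for $\beta\neq1$ the reweighting by $\mu_\phi$ inside the Gaussian convolution no longer reproduces the diffused optimal policy, so diffusion consistency holds only asymptotically and at unit temperature. Part~(a), by contrast, is exact for every finite $K>1$ and every $\beta$, since there the clean and noisy actions coincide and no limit is required.
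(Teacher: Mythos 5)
Your proposal is correct, and part (a) is essentially the paper's own argument: the paper's Lemma \ref{lemma1} (imported from CEP) is the same Gibbs-inequality statement that the $K$-way cross-entropy is maximized iff the predicted softmax equals the target softmax for a.e.\ realization, forcing $\hat{Q}^* = Q + C(\vs)$ and hence $\pi^*_{t=0,\theta} \propto \mu_\phi\, e^{Q/\beta}$, exact for any finite $K>1$ and any $\beta$. For part (b), however, you take a genuinely different route. The paper uses $K\to\infty$ \emph{only} to replace the target normalizer $\sum_i e^{Q(\vs,\va^i)}$ by a constant; the resulting objective is then handled exactly at finite $K$ by its Lemma \ref{lemma2}, which factors the sampling law through the posterior, $\mu(\va|\vs)\,p(\va_t|\va,t) = \mu_t(\va_t|\vs,t)\,\mu_t(\va|\va_t,\vs,t)$, pushes the expectation over clean actions inside the weights, absorbs the new normalizer into a reweighted sampling measure, and invokes Lemma \ref{lemma1} with $Q$ replaced by $\log \E_{\mu_t(\va|\va_t,\vs,t)} e^{Q(\vs,\va)}$ --- so the prediction normalizer never needs to concentrate. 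You instead differentiate the cross-entropy to get the stationarity condition $\E[p_1\mid \va_t^1=\vu]=\E[q_1\mid \va_t^1=\vu]$ and apply the law of large numbers to \emph{both} normalizers, which is exactly why you need the $1/K$ rescaling and the limit-exchange/uniform-integrability justification you flag as the main obstacle; you should also state explicitly that stationarity characterizes the global optimum because log-softmax is concave in the logits, so the population objective is concave in $g$. Your closing step --- Bayes' rule identifying $\E[e^{Q}\mid\va_t=\vu]\,\mu_t(\vu)$ with the Gaussian convolution of $\mu_\phi e^{Q}$ --- is precisely the paper's Lemma \ref{lemma3}, and your account of the role of $\beta=1$ matches the paper's: for $\beta\neq 1$ one obtains $\pi^*_{t,\theta}\propto \mu_t\,\bigl(\E[e^{Q}\mid\va_t]\bigr)^{1/\beta}$, which is not the diffusion of $\mu_\phi e^{Q/\beta}$ since $\bigl(\E[e^{Q}\mid\va_t]\bigr)^{1/\beta} \neq \E[e^{Q/\beta}\mid\va_t]$ in general. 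In short: the paper's reduction buys exactness after a single (informally justified) asymptotic step, while your variational argument is more self-contained --- it avoids the importance-reweighting trick inside Lemma \ref{lemma2} --- but is analytically heavier at precisely the point you identify, and to the same standard of rigor as the paper's own treatment of the $K\to\infty$ limit.
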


\textbf{Fine-tuning Efficiency.} In practice, the policy and the behavior model share the same architecture, so we can initialize $\theta = \phi$ to fully exploit the generalization capabilities acquired during the pretraining phase. This fine-tuning technique allows us to perform policy optimization with an incredibly small amount of $Q$-labelled data (e.g., 10k samples) and optimization steps (e.g., 20K gradient steps). These are less than 5\% of previous approaches (Sec. \ref{sec:efficiency}).

\section{Related Work}
\subsection{Diffusion Modeling for Offline Continuous Control}
Recent advancements in offline RL have identified diffusion models as an effective approach for behavior modeling \citep{csbc, he2024diffusion}, which excels at representing complex and multimodal distributions compared with other modeling methods like Gaussians \citep{brac, SBAC, kostrikov2021offline, awac, crr} or VAEs \citep{bcq, bear, emaq}. However, optimizing diffusion models can be a bit more tricky due to the unavailability of a tractable probability calculation method \citep{sfbc, diffusionDPO}. Existing approaches include learning a separate guidance network to guide the sampling process during evaluation \citep{diffuser,qgpo}, applying classifier-free guidance \citep{dd, dong2024aligndiff}, backpropagating sampled actions through the diffusion policy model to maximize Q-values \citep{diffusionql, efficientdiffusionrl}, distilling new policies from diffusion behaviors \citep{arq, srpo}, using rejection sampling to filter out behavioral actions with low Q-values \citep{sfbc, idql} and applying planning-based techniques \citep{pmlr-v202-li23ad, ni2023metadiffuser, he2024diffusion}. Our proposed method differs from all previous work in that it directly fine-tunes the pretrained behavior to align with task-specific annotations, instead of learning a new downstream policy.  

\subsection{Preference-based Alignment in Offline Reinforcement Learning}
Existing alignment algorithms are largely preference-based methods. Preference-based Reinforcement Learning (PbRL) assumes the reward function is unknown, and must be learned from data. Previous work usually applies inverse RL to learn a reward model first and then uses this reward model for standard RL training \citep{gail, ziegler2019fine, lee2021bpref, shin2021offline, mathieu2023alphastar}. This separate learning of a reward model adds complexity to algorithm implementation and thus limits its application. To address this issue, recent work like OPPO \citep{kang2023beyond}, DPO \citep{DPO}, and CPL \citep{hejna2024inverse} respectively proposes methods to align Gaussian or Categorical policies directly with human preference. Despite their simplicity and effectiveness, these techniques require calculating model probability, and thus cannot be applied to diffusion policies. Existing diffusion alignment strategies \citep{dong2024aligndiff, zhang2024flow, ddpo, diffusionDPO} are incompatible with mainstream PbRL methods. Our work effectively closes this gap by introducing bottleneck diffusion models. We also extend existing PbRL methods to align with continuous Q-functions instead of just binary preference data.

\section{Experiments}
We conduct experiments to answer the following questions:
\begin{itemize}
    \item How does EDA perform compared with other baselines in standard benchmarks? (Sec. \ref{sec:d4rl_results})
    \item Is the alignment stage data-efficient and training-efficient? How many training steps and annotated data does EDA require for aligning pretrained behavior models? (Sec. \ref{sec:efficiency})
    \item Does value-based alignment outperform preference-based alignment? (Sec. \ref{sec:preference_value})
    \item How do contrastive action number $K$ and other choices affect the performance? (Sec. \ref{sec:ablation})
\end{itemize}
\begin{table*}[t]
\centering
\small
\resizebox{1.0\textwidth}{!}{
\begin{tabular}{llcccccccccc}
\toprule
\multicolumn{1}{c}{\bf Environment} & \multicolumn{1}{c}{\bf Dataset} & \multicolumn{1}{c}{\bf BCQ} & \multicolumn{1}{c}{\bf CQL} & \multicolumn{1}{c}{\bf IQL} & \multicolumn{1}{c}{\bf DT} & \multicolumn{1}{c}{\bf D-Diffuser} & \multicolumn{1}{c}{\bf IDQL} & \multicolumn{1}{c}{\bf Diffusion-QL} & \multicolumn{1}{c}{\bf QGPO} & \multicolumn{1}{c}{\bf BDM (Ours)} \\
\midrule
HalfCheetah    & Medium-Expert &  $64.7$ &  $91.6$     &  $86.7$     & $86.8$        & $90.6$       & $\bf{95.9}$ & $\bf{96.8}$ & $\bf{93.5}$ & $\bf{93.2\pm1.2}$  \\
HalfCheetah    & Medium        &  $40.7$ &  $44.0$     &  $47.4 $    & $42.6$        & $49.1$       & $51.0$ & $51.1$ & $54.1$ & $\bf{57.0\pm0.5}$    \\
HalfCheetah    & Medium-Replay &  $38.2$ &  $45.5$     &  $44.2$     &   $36.6$      & $39.3$       & $45.9$ & $47.8$ & $47.6$  & $\bf{51.6\pm0.9}$ \\
\midrule
Hopper         & Medium-Expert &  $100.9$ & $105.4$     &  $91.5$     & $\bf{107.6}$  & $\bf{111.8}$ & $\bf{108.6}$ & $\bf{111.1}$ & $\bf{108.0}$ & $104.9\pm7.4$           \\
Hopper         & Medium        &  $54.5$ &  $58.5$     &  $66.3$     & $67.6$        & $79.3$ & $65.4$ & $90.5$ & $\bf{98.0}$ & $\bf{98.4\pm3.9}$ \\
Hopper         & Medium-Replay &  $33.1$ &  $95.0$     &  $94.7$     &   $82.7$      & $\bf{100.0}$ & $92.1$ & $\bf{100.7}$ & $\bf{96.9}$ & $92.7\pm10.0$ \\
\midrule
Walker2d       & Medium-Expert & $57.5$  &  $\bf{108.8}$     & $\bf{109.6}$& $\bf{108.1}$  & $\bf{108.8}$ & $\bf{112.7}$ & $\bf{110.1}$ & $\bf{110.7}$ & $\bf{111.1\pm0.7}$      \\
Walker2d       & Medium        &  $53.1$ &  $72.5$     &  $78.3$     & $74.0$        & $82.5$ & $82.5$ & $\bf{87.0}$ & $\bf{86.0}$ & $\bf{87.4\pm1.1}$ \\
Walker2d       & Medium-Replay &  $15.0$ &  $77.2$     &  $73.9$     &   $66.6$      & $75.0$ & $85.1$ & $\bf{95.5}$ & $84.4$ & $89.2\pm5.5$ \\
\midrule
\multicolumn{2}{c}{\bf Average (D4RL Locomotion)}&  $50.9$ & $77.6$ & $76.9$ &   $74.7$ &      $81.8$ & $82.1$ & $\bf{88.0}$ & $\bf{86.6}$ & $\bf{87.3}$ \\
\midrule
AntMaze          & Umaze       &  $73.0$ & $74.0$ & $87.5$ & $59.2$   & - & $\bf{94.0}$ & $\bf{93.4}$ & $\bf{96.4}$ & $\bf{93.0\pm4.5}$ \\
AntMaze          & Umaze-Diverse &  $61.0$ & $84.0$ & $62.2$ & $\bf{53.0}$   & - & $\bf{80.2}$ & $66.2$ & $74.4$ & $\bf{81.0\pm7.4}$ \\
AntMaze          & Medium-Play &  $0.0$ & $61.2$ & $71.2$ & $0.0$   & - & $\bf{84.5}$ & $76.6$ & $\bf{83.6}$ & $79.0\pm4.2$ \\
AntMaze          & Medium-Diverse &  $8.0$ & $53.7$ & $70.0$ & $0.0$   & - & $\bf{84.8}$ & $78.6$ & $\bf{83.8}$ & $\bf{84.0\pm8.2}$ \\
\midrule
\multicolumn{2}{c}{\bf Average (D4RL Locomotion)}&  $35.5$ & $68.2$ & $72.7$ &   $28.1$ &      -    & $\bf{85.9}$ & $78.7$ & $\bf{84.6}$ & $\bf{84.3}$ \\
\midrule
Kitchen          & Complete    &  $8.1$ & $43.8$ & $62.5$ & -   & -      & - & $\bf{84.0}$ & $62.8$ & $\bf{81.5\pm7.3}$ \\
Kitchen          & Partial     &  $18.9$ & $49.8$ & $46.3$ & -   & $57.0$ & - & $60.5$ & $\bf{66.0}$ & $\bf{69.3\pm4.6}$ \\
Kitchen          & Mixed       &  $8.1$ & $51.0$ & $51.0$    & -   & $\bf{65.0}$ & - & $\bf{62.6}$ & $45.5$ & $\bf{65.3\pm2.2}$ \\
\midrule
\multicolumn{2}{c}{\bf Average (D4RL Locomotion)}&  $11.7$ & $48.2$ & $53.3$ &  -  &      -  &-  & $\bf{69.0}$ & $58.1$ & $\bf{72.0}$ \\
\midrule
\midrule
\multicolumn{2}{c}{\bf Average (D4RL Overall)}&  $39.7$ & $69.7$ & $71.4$ &   -    &   -   & - &  $\bf{82.1}$ & $\bf{80.7}$ & $\bf{83.7}$ \\
\midrule
\bottomrule
\end{tabular}
}
\caption{Evaluation results of D4RL benchmarks (normalized according to \citep{d4rl}). We report mean ± standard deviation of algorithm performance across 5 random seeds at the end of training. Numbers within 5 \% of the maximum are highlighted.}
\label{tbl:main_results}
\vspace{-0.2in}
\end{table*}

\subsection{D4RL Evaluation}
\label{sec:d4rl_results}
\textbf{Benchmark.} In Table \ref{tbl:main_results}, we evaluate the performance of EDA in D4RL benchmarks \citep{d4rl}. All evaluation tasks have a continuous action space and can be broadly categorized into three types: \texttt{MuJoCo locomotion} are tasks for controlling legged robots to move forward, where datasets are generated by a variety of policies, including expert, medium, and mixed levels. \texttt{Antmaze} is about an ant robot navigating itself in a maze and requires both low-level control and high-level navigation. \texttt{FrankaKitchen} are manipulation tasks containing real-world datasets. It is critical to faithfully imitate human behaviors in these tasks. 

\textbf{Experimental setup.} Throughout our experiments, we set the contrastive action number $K=16$. For each task, we train an action evaluation model $Q_\psi(\vs, \va)$ for annotating behavioral data during the fine-tuning stage. Implicit Q-learning \citep{iql} is employed for training $Q_\psi$ due to its simplicity and orthogonality to policy training. We compare with other critic training methods in Figure \ref{fig:ablation_critic}. The rest of the implementation details are in Appendix \ref{sec:details}.

\textbf{Baselines.} We mainly consider diffusion-based RL methods with various optimization techniques. \texttt{Decision Diffuser} \citep{dd} employs classifier-free guidance for optimizing behavior models. \texttt{QGPO} employs energy guidance. \texttt{IDQL} \citep{idql} does not optimize the behavior policy and simply uses rejection sampling during evaluation. \texttt{Diffusion-QL} \citep{diffusionql} has no explicit behavior model, but instead adopts a diffusion regularization loss. We also reference well-studied conventional algorithms for different classes of generative policies. \texttt{BCQ} \citep{bcq} features a VAE-based policy. \texttt{DT} \citep{dt} has a transformer-based architecture. \texttt{CQL} \citep{cql} and \texttt{IQL} \citep{iql} targets Gaussian policies.

\begin{wrapfigure}{r}{0.5\textwidth}
    \centering
    \vspace{-3mm}
    \includegraphics[width=\linewidth]{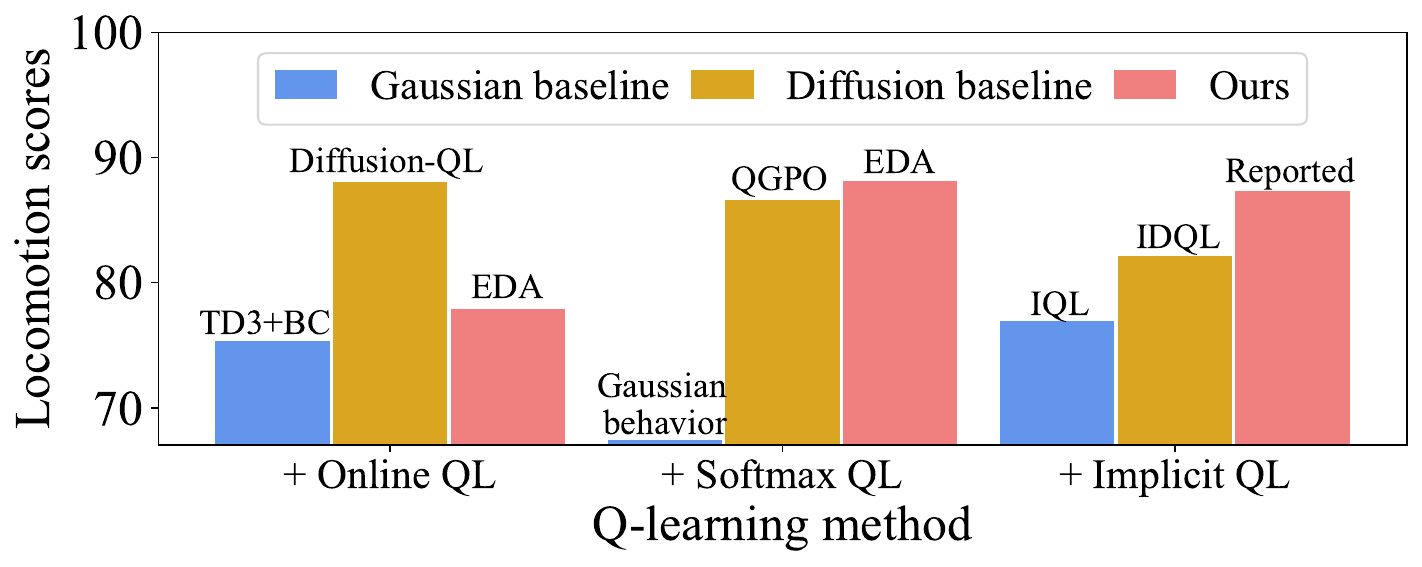}
    \vspace{-5mm}
    \caption{Average performance of EDA combined with different Q-learning methods in Locomotion tasks.}
    \label{fig:ablation_critic}
\end{wrapfigure}
\textbf{Result analysis.} From table \ref{tbl:main_results}, we find that EDA surpasses all referenced baselines regarding overall performance and provides a competitive number in each D4RL task. To ascertain whether this improvement stems from the alignment algorithm rather than from a superior critic model or policy class, we conduct additional controlled experiments. As outlined in Figure \ref{fig:ablation_critic}, we evaluate three variants of EDA using different Q-learning approaches and compare these against both diffusion and Gaussian baselines. The experimental results highlight the superiority of diffusion policies and further validate the effectiveness of our proposed method.

\begin{figure}[h]
    \begin{minipage}{0.32\linewidth}
		\centering
		\includegraphics[width=\linewidth]{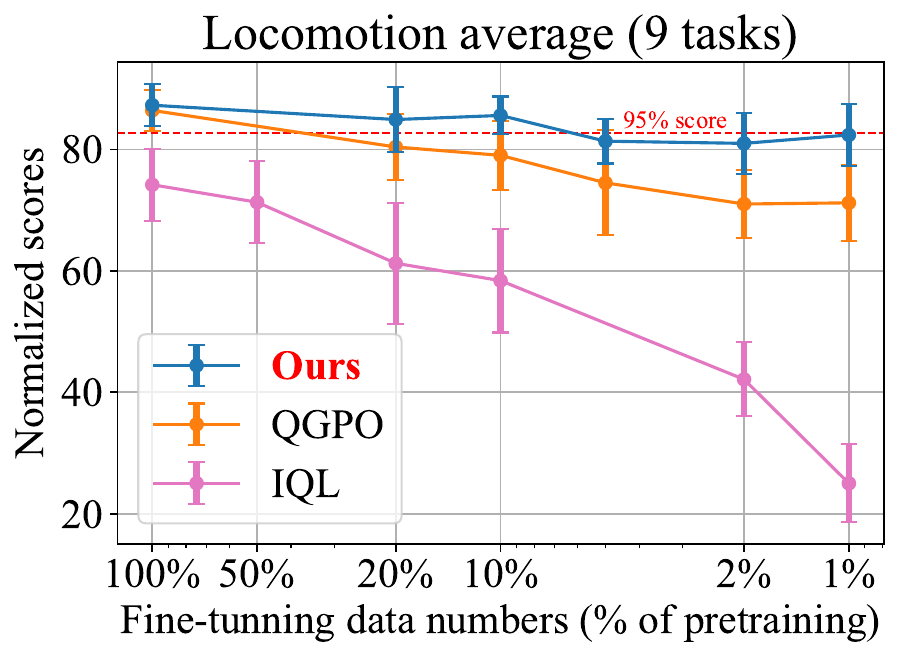}
        { (a) Sample Efficiency}
	\end{minipage} 
    \begin{minipage}{0.65\linewidth}
		\centering
		\includegraphics[width=0.49\linewidth]{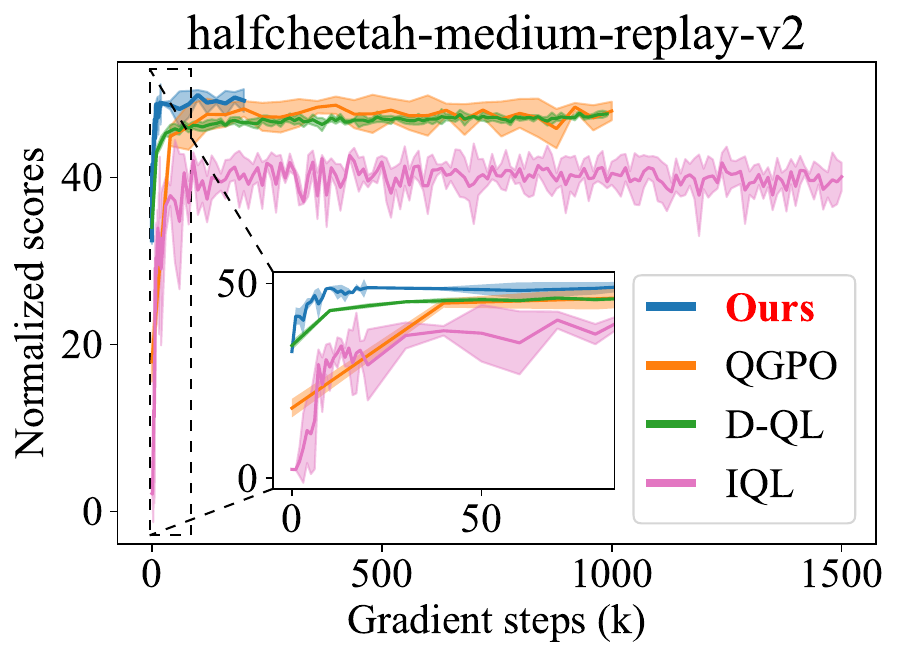}
		\includegraphics[width=0.49\linewidth]{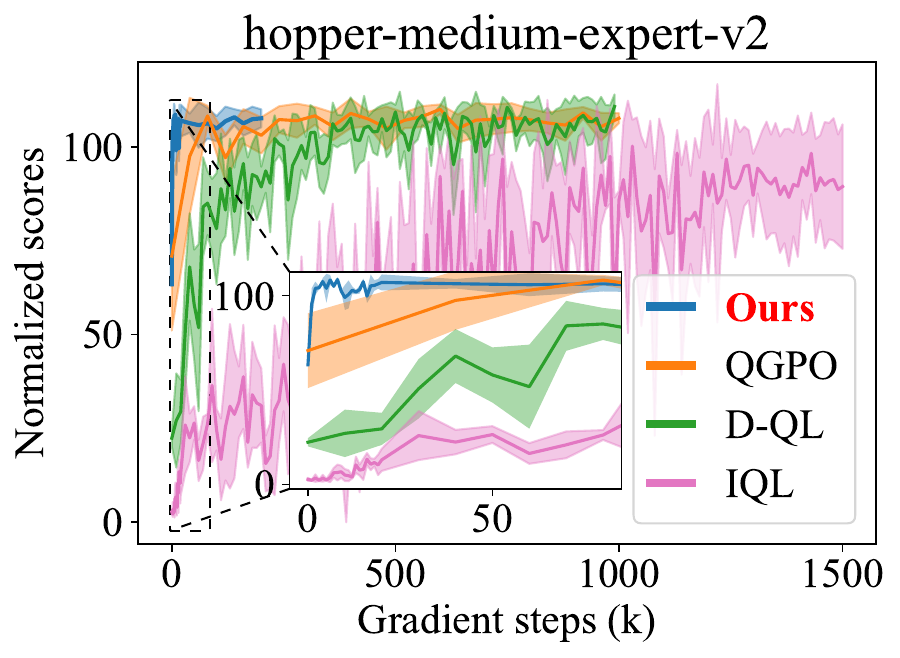}
		{ (b) Training Efficiency.}
	\end{minipage} 
	\caption{\label{fig:efficiency} Aligning pretrained diffusion behaviors with task Q-functions is fast and data-efficient.}
\vspace{-0.05in}
\end{figure}

\subsection{Fine-tuning Efficiency}
\label{sec:efficiency}

The success of the pretraining, fine-tuning paradigm in language models is largely due to its high fine-tuning efficiency, allowing pretrained models to adapt quickly to various downstream tasks. Similarly, we aim to explore data efficiency and training efficiency during EDA's fine-tuning phase.

To investigate EDA's data efficiency, we reduce the training data used for aligning with pretrained Q-functions by randomly excluding a portion of the available dataset (Figure \ref{fig:efficiency} (a)). We compare this with IQL, which uses the same Q-model as EDA but extracts a Gaussian policy via weighted regression. We also compare with QGPO, which shares our pretrained diffusion behavior models but employs a separate guidance network to augment the behavior model during evaluation, instead of directly fine-tuning the pretrained policy. Experimental results reveal that EDA is significantly more data-efficient than the QGPO and IQL baselines. Notably, EDA maintains about 95\% of its performance in locomotion tasks when using only 1\% of the Q-labeled data for policy fine-tuning. This even surpasses several baselines that use the full dataset for policy training.

In Figure \ref{fig:efficiency} (b), we plot EDA's performance throughout the fine-tuning phase. EDA rapidly converges in roughly 20K gradient steps, a negligible count compared to the typical ~1M steps used for behavior pretraining. Note that behavior modeling and task-oriented Q-definition are largely orthogonal in offline RL. The high fine-tuning efficiency of EDA demonstrates the vast potential of pretraining on large-scale diversified behavior data and quickly adapting to individual downstream tasks.

\subsection{Value Optimization v.s. Preference Optimization}
\label{sec:preference_value}
A significant difference between EDA and existing preference-based RL methods such as DPO is that EDA is tailored for alignment with scalar Q-values instead of just preference data.

For preference data without explicit Q-labels, we can similarly derive an alignment loss:
\begin{equation}
\label{eq:preference_loss}
    \max_\theta \gL_{f}^\text{pref}(\theta) = \E_{t,\epsilonv^{1:K},(\vs,\va^{1:K}) \sim \mathcal{D}^f}\Bigg[ \log \frac{e^{\beta [f_\theta^\pi(\va_t^w|\vs,t) - f_\phi^\mu(\va_t^w|\vs,t)]}}{\sum_{i=1}^{K} e^{\beta [f_\theta^\pi(\va_t^i|\vs,t) - f_\phi^\mu(\va_t^i|\vs,t)]}}\Bigg]. 
\end{equation}

Here $\va^w$ represents the most preferred action among $\va^{1:K}$. In practice, we select $\va^w$ as the action with the highest Q-value but abandon the absolute number. We'd like to note that when $K=2$, the above objective becomes exactly the DPO objective (Eq. \ref{Eq:loss_DPO}) in preference learning:
\begin{equation}
\label{eq:preference_loss2}
    \max_\theta \gL_{f}^\text{DPO}(\theta) = \E_{t,\epsilonv^{\{w,l\}},\vs, \va^w \succ \va^l} \log \sigma \bigg[\beta [f_\theta^\pi(\va_t^w|\vs,t) - f_\phi^\mu(\va_t^w|\vs,t)] - \beta [f_\theta^\pi(\va_t^l|\vs,t) - f_\phi^\mu(\va_t^l|\vs,t)]\bigg]
\end{equation}
\begin{wrapfigure}{r}{0.5\textwidth}
    \centering
    \includegraphics[width=\linewidth]{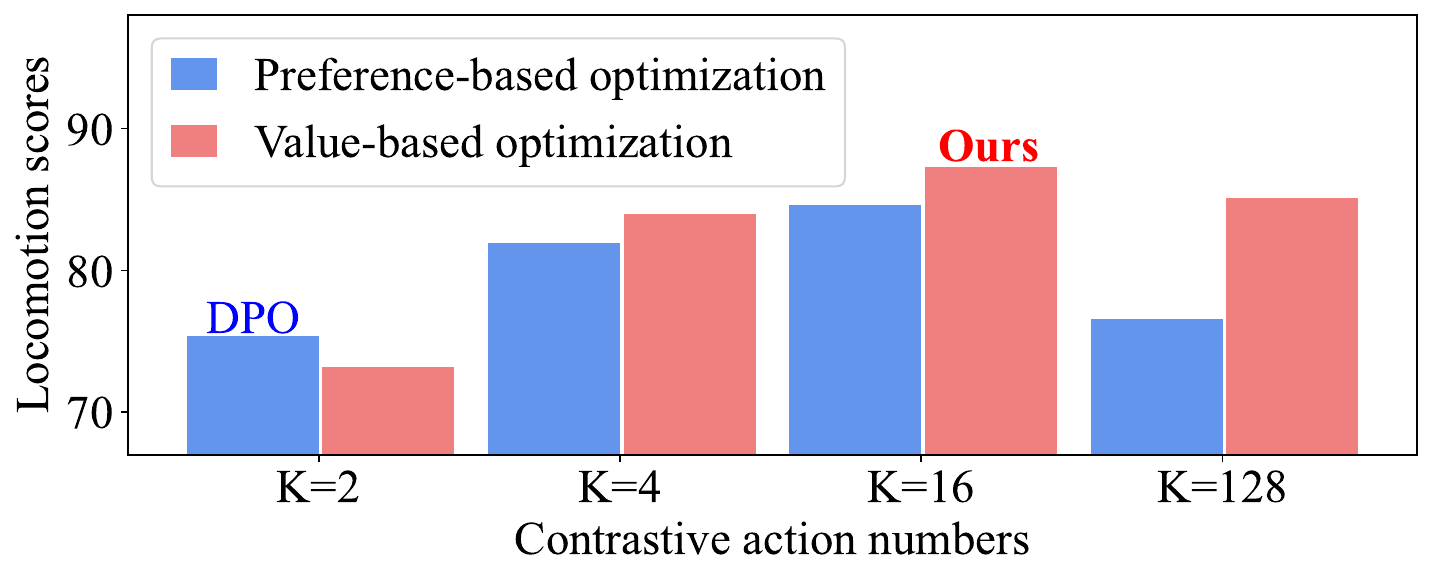}
    \caption{Ablation of action numbers $K$ and optimization methods.}
    \label{fig:ablation}
\end{wrapfigure}
We ablate different choices of $K$ and compare our proposed value-based alignment with preference methods in 9 D4RL Locomotion tasks (Figure \ref{fig:ablation}). Results show that EDA generally outperforms preference-based alignment approaches. We attribute this improvement to its ability to exploit the Q-value information provided by the pretrained Q-model. Besides, we notice the performance gap between the two methods becomes larger as $K$ increases. This is expected because preference-based optimization greedily follows a single action with the highest Q-value. However, as more action candidates are sampled from the behavior model, the final selected action will have a higher probability of being out-of-behavior-distribution data. This further hurts performance. In contrast, EDA is a softer version of preference learning. This leads to greater tolerance for $K$.

\subsection{Ablation Studies}
\label{sec:ablation}
We study the impact of varying temperature $\beta$ on algorithm performance in Appendix \ref{appendix:ablation}. We also illustratively compare our proposed diffusion models with other generative models for behavior modeling in 2D settings in Appendix \ref{appendix:EBM}.

\section{Conclusion}
We propose Efficient Diffusion Alignment (EDA) for solving offline continuous control tasks. EDA allows leveraging abundant and diverse behavior data to enhance generalization through behavior pretraining and enables rapid adaptation to downstream tasks using minimal annotations. Our experimental results show that EDA exceeds numerous baseline methods in D4RL tasks. It also demonstrates high sample efficiency during the fine-tuning stage. This indicates its vast potential in learning from large-scale behavior datasets and efficiently adapting to individual downstream tasks.

\clearpage
\begin{ack}
We would like to thank Chengyang Ying and Cheng Lu for discussion. 
This work was supported by NSFC Projects (Nos. 62350080, 92248303, 92370124, 62276149,
62061136001), BNRist (BNR2022RC01006), Tsinghua Institute for Guo Qiang, and the
High Performance Computing Center, Tsinghua University. J. Zhu was also supported by the XPlorer
Prize.
\end{ack}

\bibliographystyle{plain}
\bibliography{neurips_2024}


\newpage
\appendix

\section{Comparing Bottleneck Diffusion Models with Energy-Based Models}
\label{appendix:EBM}
Our proposed Bottleneck Diffusion Models (BDMs) can be viewed as a diffused variant of Energy-Based Models (EBMs, \citep{EBM}). Both methods aim to model data distribution's unnormalized log probability:
\begin{equation*}
p_\theta(\mathbf{x}) \propto e^{-E_\theta(\mathbf{x})}.
\end{equation*}
Despite their conceptual similarity, the sampling and training approaches differ between diffusion models and EBMs. A prevalent sampling method for EBMs is Langevin MCMC \citep{grenander1994representations}, which employs the energy gradient $\nabla_{\mathbf{x}} E_\theta(\mathbf{x})$ to progressively transform Gaussian noise into data samples. Langevin MCMC generally necessitates hundreds or thousands of iterative steps to achieve convergence, a significantly higher number compared with 15-50 steps required by diffusion models (Figure \ref{fig:compareEBM}).

Furthermore, the maximum-likelihood training of EBMs (e.g., Contrastive Divergence \citep{hinton2002training}) is more computationally expensive, as it involves online data sampling from the model during the training process \citep{song2021train}. To avoid online data sampling, subsequent research \citep{vincent2011connection, song2019generative} has shifted away from directly modeling $E_\theta(\mathbf{x})$ towards developing score-based models defined as $s_\theta(\mathbf{x}):= - \nabla_{\mathbf{x}} E_\theta(\mathbf{x})$. The score-matching objectives utilized in training these score-based models have subsequently been adapted for training diffusion models \citep{sde}. Our work is inspired by some prior work \citep{saremi2018deep, song2019generative, li2023learning} that employed such score-matching objectives for training EBMs.

\begin{figure}[!hbt]
\centering
\begin{minipage}{0.15\linewidth}
	\centering
	\includegraphics[width=\linewidth]{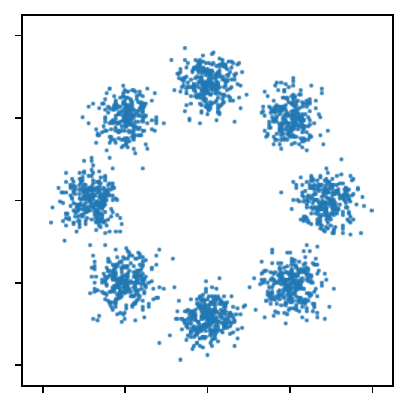}	\includegraphics[width=\linewidth]{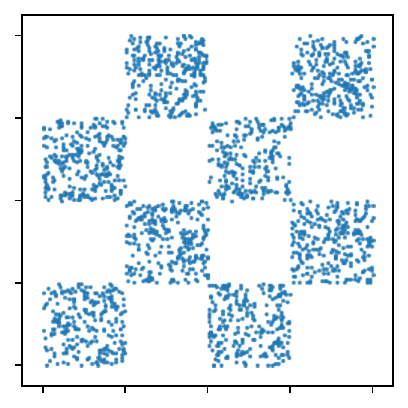}	\includegraphics[width=\linewidth]{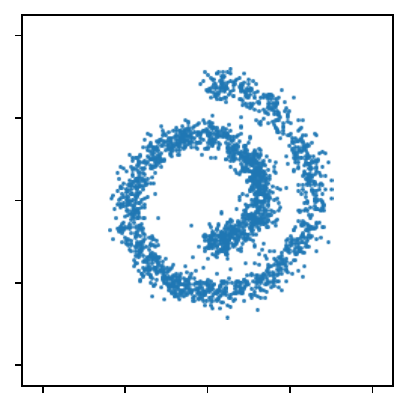}	\includegraphics[width=\linewidth]{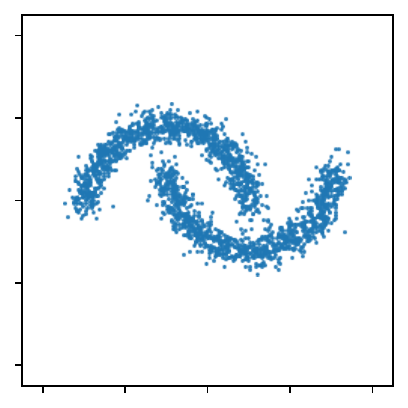}	\includegraphics[width=\linewidth]{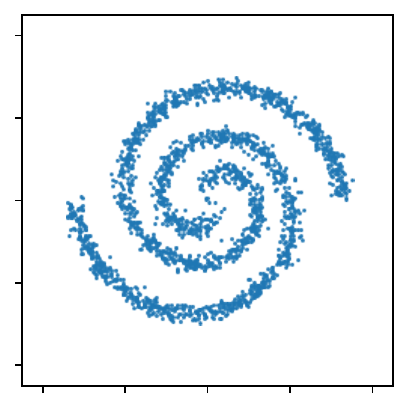}	\includegraphics[width=\linewidth]{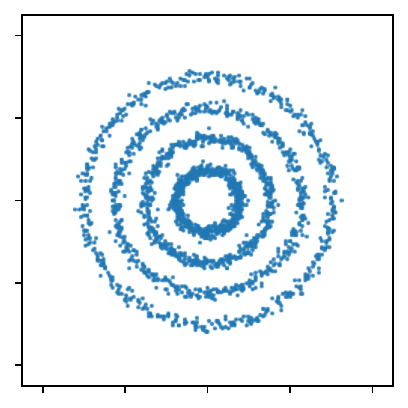}
 \centering {\scriptsize (a) Ground truth}
\end{minipage} 
\begin{minipage}{0.15\linewidth}
	\centering
	\includegraphics[width=\linewidth]{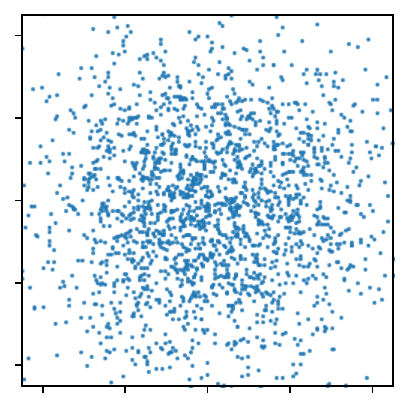}	\includegraphics[width=\linewidth]{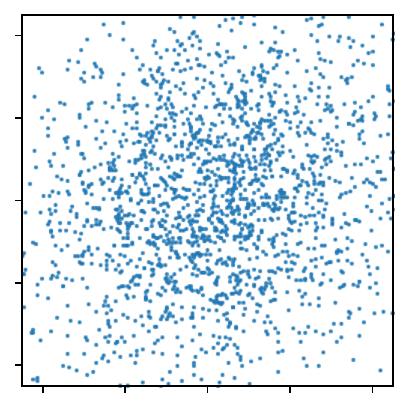}	\includegraphics[width=\linewidth]{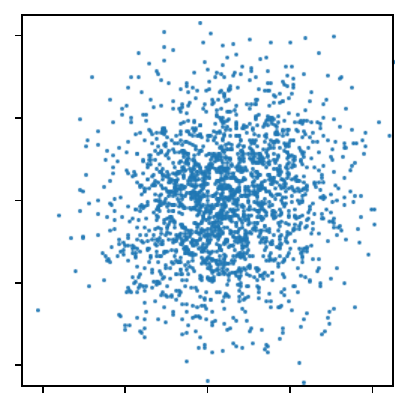}	\includegraphics[width=\linewidth]{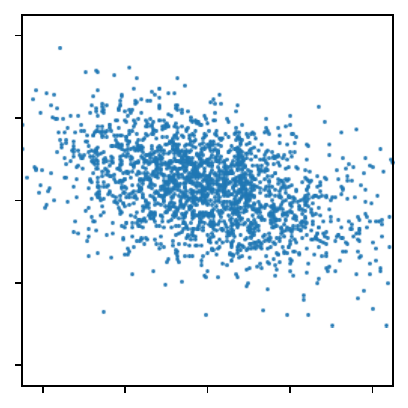}	\includegraphics[width=\linewidth]{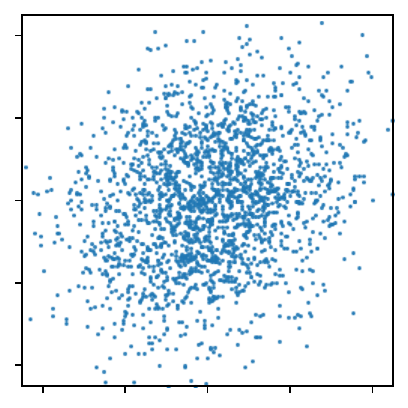}	\includegraphics[width=\linewidth]{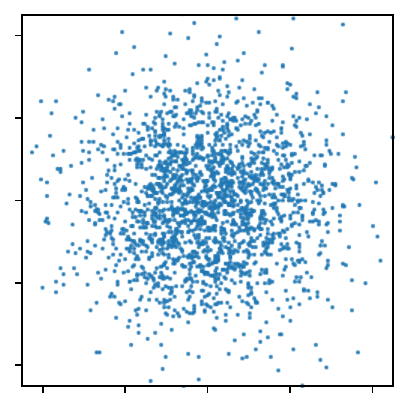}
 \centering {\scriptsize (b) Gaussians}
\end{minipage}
\begin{minipage}{0.15\linewidth}
	\centering
	\includegraphics[width=\linewidth]{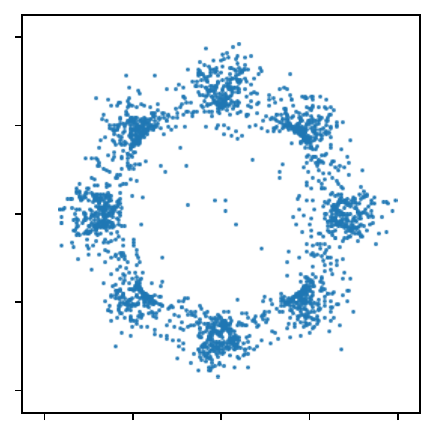}	\includegraphics[width=\linewidth]{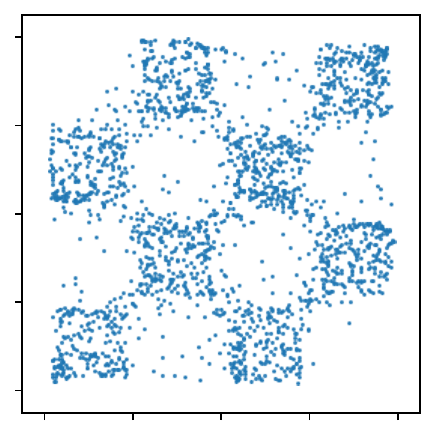}	\includegraphics[width=\linewidth]{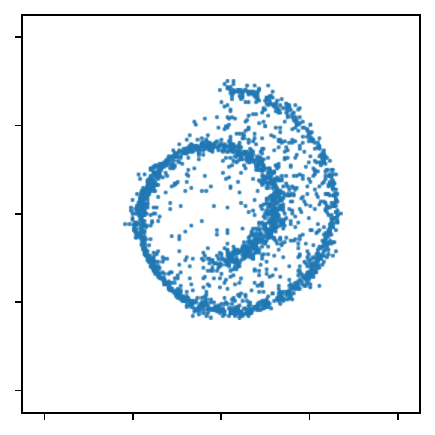}	\includegraphics[width=\linewidth]{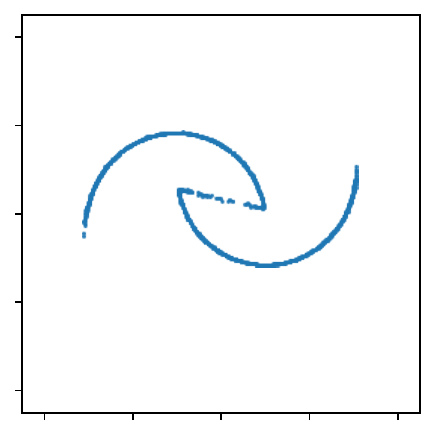}	\includegraphics[width=\linewidth]{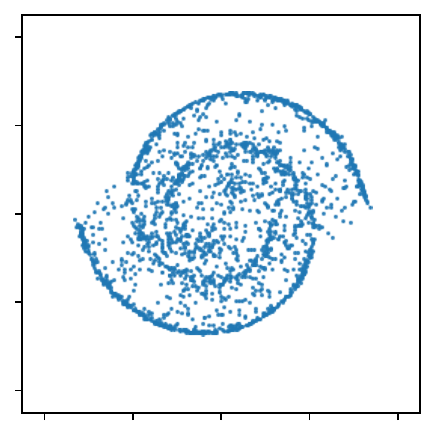}	\includegraphics[width=\linewidth]{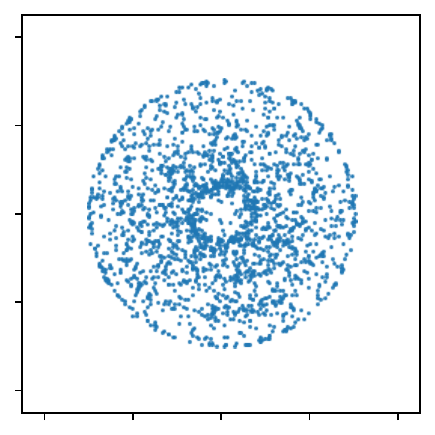}
 \centering {\scriptsize (c) VAEs}
\end{minipage}
\begin{minipage}{0.15\linewidth}
	\centering
	\includegraphics[width=\linewidth]{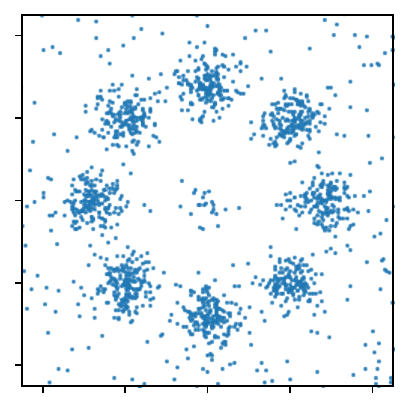}	\includegraphics[width=\linewidth]{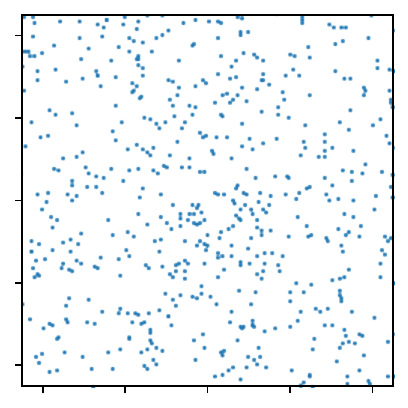}	\includegraphics[width=\linewidth]{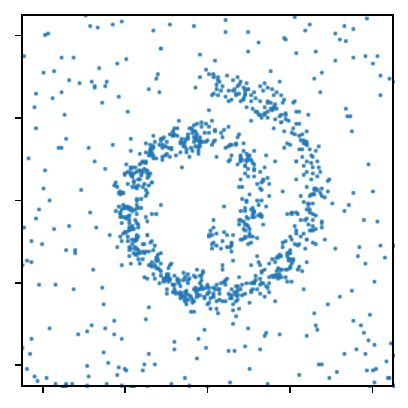}	\includegraphics[width=\linewidth]{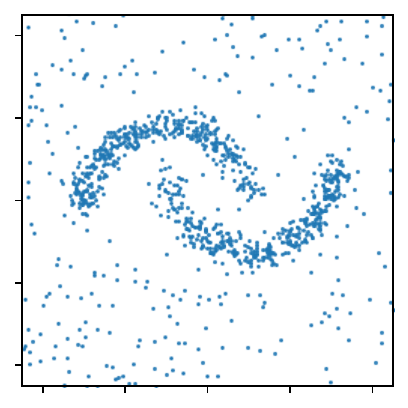}	\includegraphics[width=\linewidth]{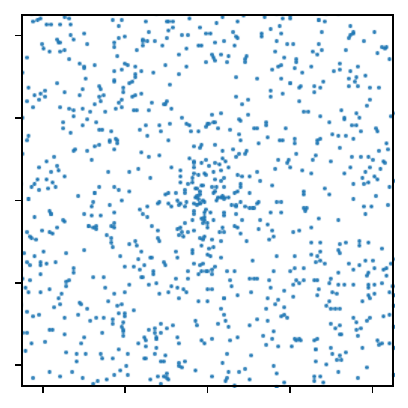}	\includegraphics[width=\linewidth]{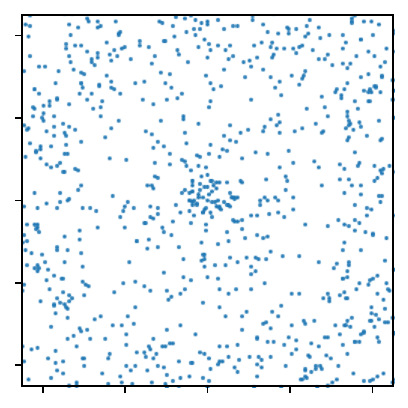}
 \centering {\scriptsize (d) EBMs (100 steps)}
\end{minipage} 
\begin{minipage}{0.15\linewidth}
	\centering
	\includegraphics[width=\linewidth]{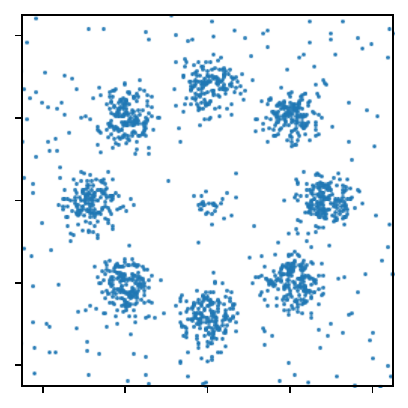}	\includegraphics[width=\linewidth]{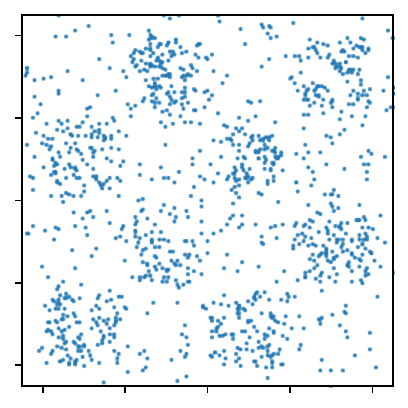}	\includegraphics[width=\linewidth]{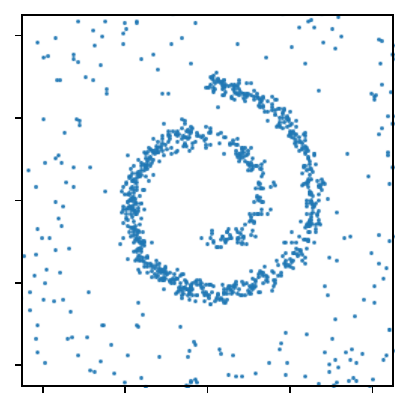}	\includegraphics[width=\linewidth]{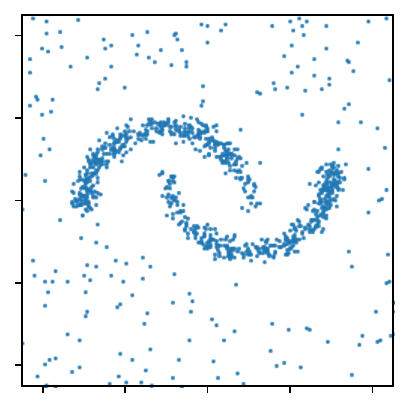}	\includegraphics[width=\linewidth]{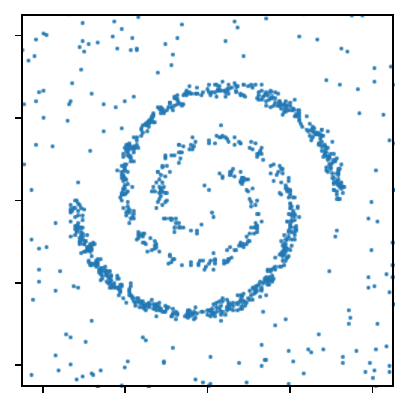}	\includegraphics[width=\linewidth]{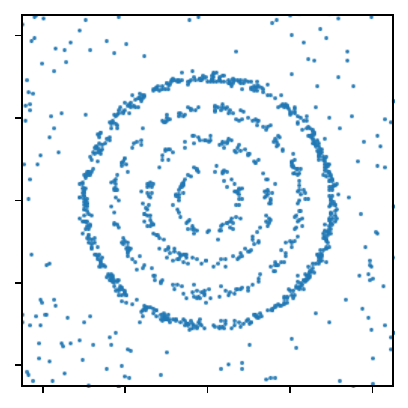}
 \centering {\scriptsize (e) EBMs (1k steps)}
\end{minipage} 
\begin{minipage}{0.15\linewidth}
	\centering
	\includegraphics[width=\linewidth]{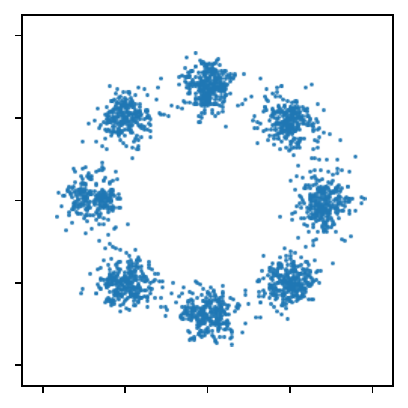}	\includegraphics[width=\linewidth]{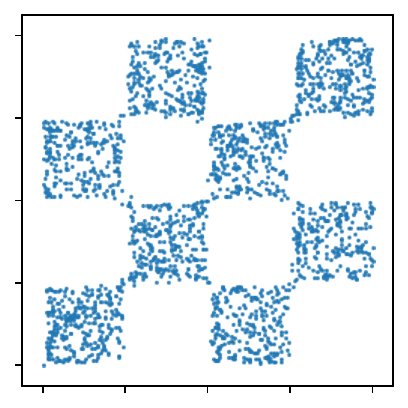}	\includegraphics[width=\linewidth]{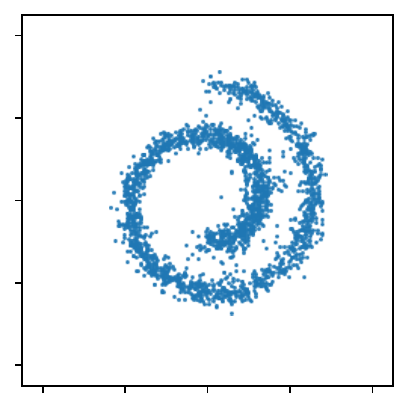}	\includegraphics[width=\linewidth]{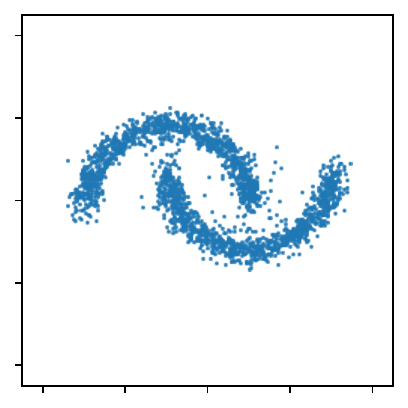}	\includegraphics[width=\linewidth]{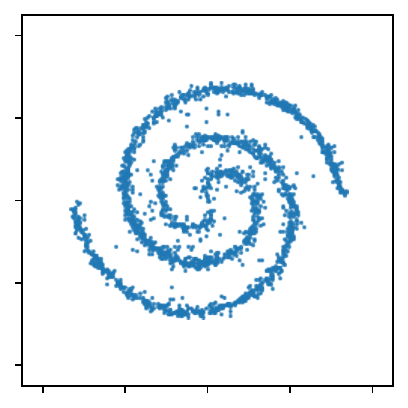}	\includegraphics[width=\linewidth]{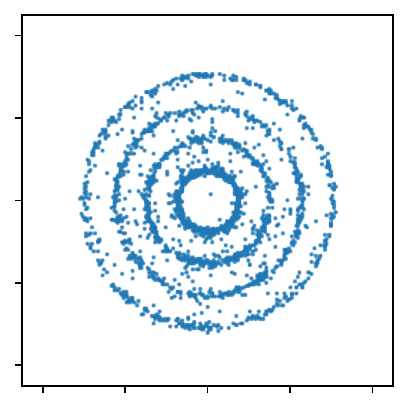}
 \centering {\scriptsize (f) \textbf{Ours (25 steps)}}
\end{minipage} 
\label{fig:compareEBM}
\caption{Comparison of various generative modeling methods in 2D modeling and sampling.}
\end{figure}
\clearpage
\newpage
\section{Complete 2D-bandit Experiment Results}
\label{appendix:toy_more}
\begin{figure}[h]
\centering
\begin{minipage}{0.03\linewidth}
\rotatebox{90}{
\normalsize{\ \ $t=1.0$ \ \ \ \ \   $t=0.3$   \ \ \ \  \  $t=0.1$   \ \ \ \  \  $t=0.0$   \ \ \ \  \  $t=1.0$ \ \ \ \ \   $t=0.3$   \ \ \ \  \  $t=0.15$   \ \ \  \  $t=0.0$   \ \ \ \  \  $t=1.0$ \ \ \ \ \   $t=0.3$   \ \ \ \  \  $t=0.2$   \ \ \ \  \  $t=0.0$ \ }
}
\end{minipage}
\begin{minipage}{0.479\linewidth}
    \begin{minipage}{0.23\linewidth}
		\centering
	{\scriptsize Dataset\vphantom{Pretrained $f_\phi^\mu$}}
		\includegraphics[width=\linewidth]{pics/toy_colored/8gaussians_dataset_color_t0.0.pdf}\\
            \includegraphics[width=\linewidth]{pics/toy_colored/8gaussians_dataset_color_t0.2.pdf}\\
		\includegraphics[width=\linewidth]{pics/toy_colored/8gaussians_dataset_color_t0.3.pdf}\\
		\includegraphics[width=\linewidth]{pics/toy_colored/8gaussians_dataset_color_t1.0.pdf}
            \includegraphics[width=\linewidth]{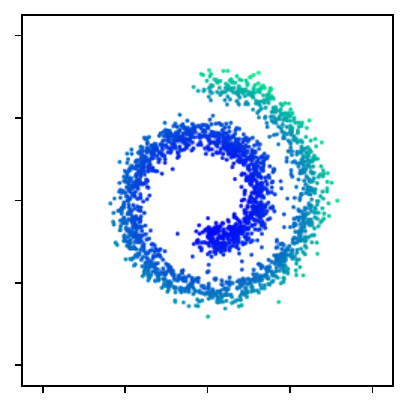}\\
            \includegraphics[width=\linewidth]{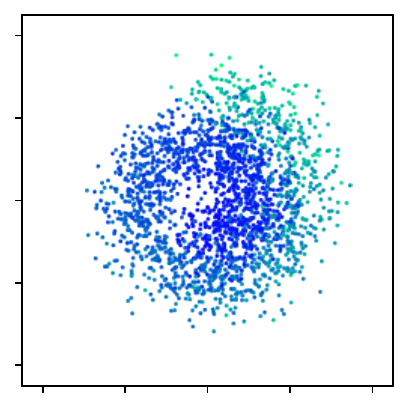}\\
		\includegraphics[width=\linewidth]{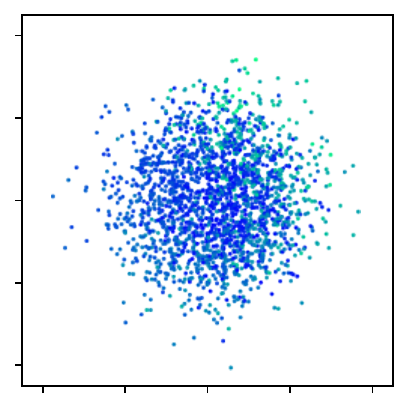}\\
		\includegraphics[width=\linewidth]{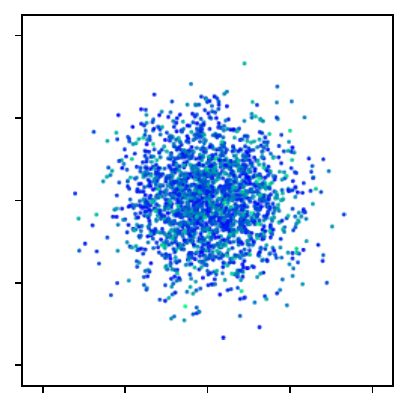}
            \includegraphics[width=\linewidth]{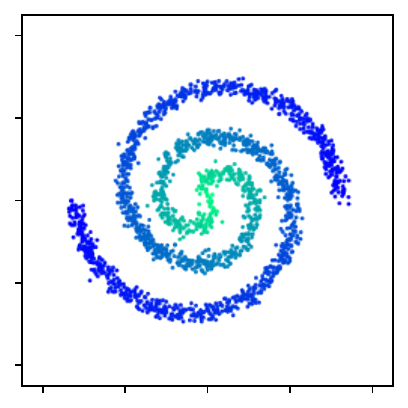}\\
            \includegraphics[width=\linewidth]{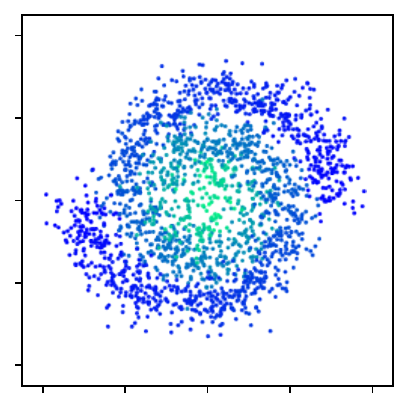}\\
		\includegraphics[width=\linewidth]{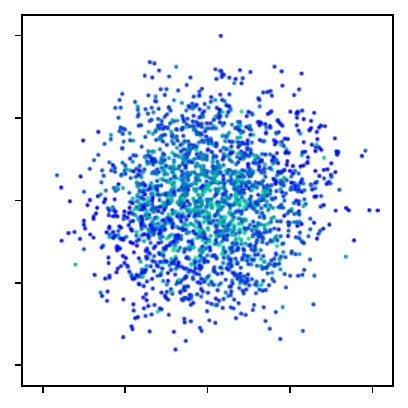}\\
		\includegraphics[width=\linewidth]{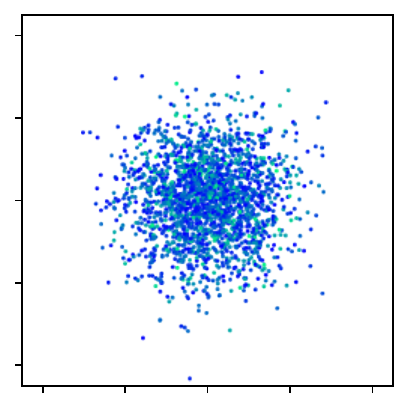}
	\end{minipage} 
    \begin{minipage}{0.23\linewidth}
		\centering
  	{\scriptsize Pretrained $f_\phi^\mu$}
		\includegraphics[width=\linewidth]{pics/toy_colored/8gaussians_pretrained_color_t0.0.pdf}\\
        \includegraphics[width=\linewidth]{pics/toy_colored/8gaussians_pretrained_color_t0.2.pdf}\\
		\includegraphics[width=\linewidth]{pics/toy_colored/8gaussians_pretrained_color_t0.3.pdf}\\
		\includegraphics[width=\linewidth]{pics/toy_colored/8gaussians_pretrained_color_t1.0.pdf}
		\includegraphics[width=\linewidth]{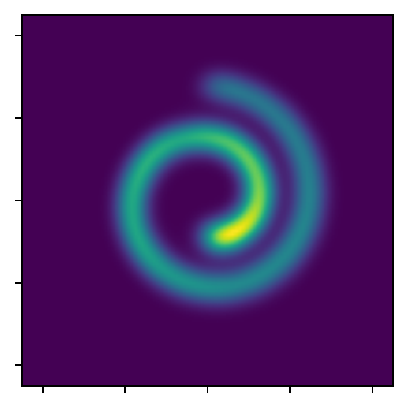}\\
        \includegraphics[width=\linewidth]{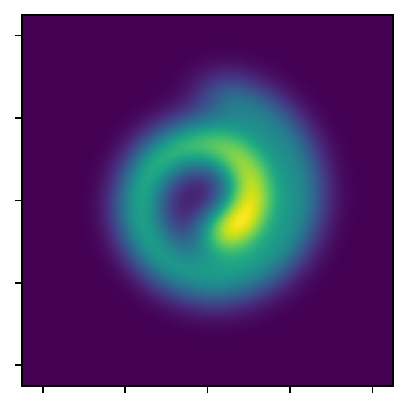}\\
		\includegraphics[width=\linewidth]{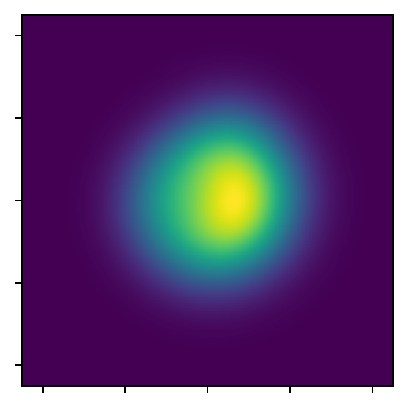}\\
		\includegraphics[width=\linewidth]{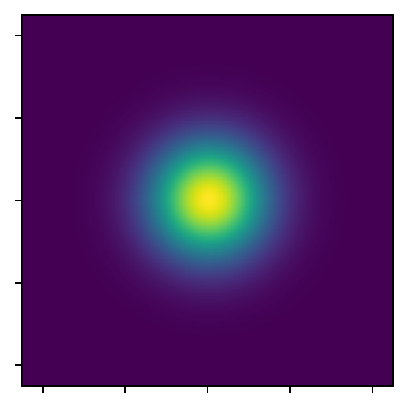}
		\includegraphics[width=\linewidth]{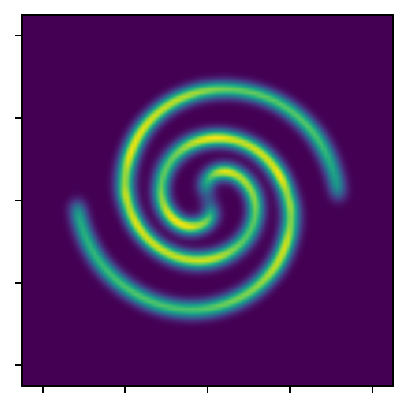}\\
        \includegraphics[width=\linewidth]{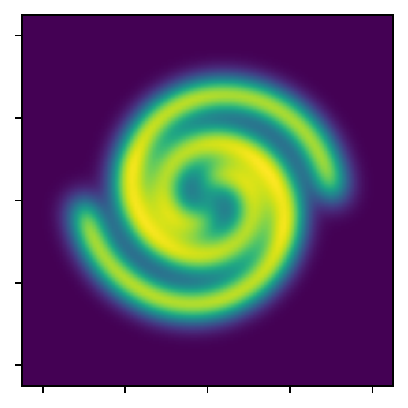}\\
		\includegraphics[width=\linewidth]{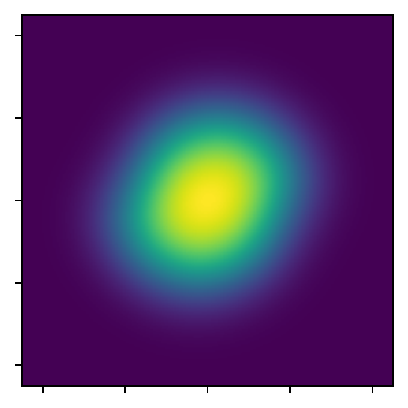}\\
		\includegraphics[width=\linewidth]{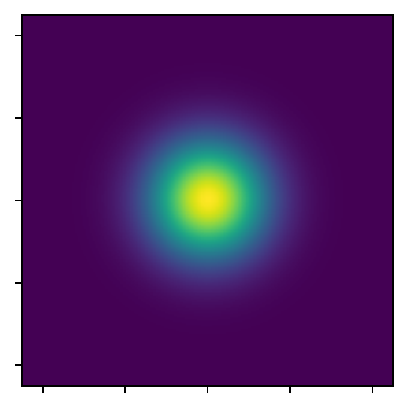}
	\end{minipage} 
    \begin{minipage}{0.23\linewidth}
		\centering
  	{\scriptsize fine-tuned $f_\theta^\pi$\vphantom{Pretrained $f_\phi^\mu$}}
		\includegraphics[width=\linewidth]{pics/toy_colored/8gaussians_finetuned_color_t0.0.pdf}\\
            \includegraphics[width=\linewidth]{pics/toy_colored/8gaussians_finetuned_color_t0.2.pdf}\\
		\includegraphics[width=\linewidth]{pics/toy_colored/8gaussians_finetuned_color_t0.3.pdf}\\
		\includegraphics[width=\linewidth]{pics/toy_colored/8gaussians_finetuned_color_t1.0.pdf}
		\includegraphics[width=\linewidth]{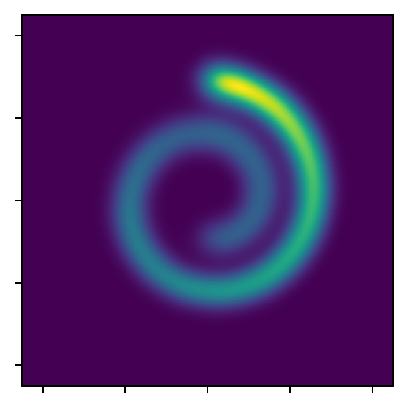}\\
            \includegraphics[width=\linewidth]{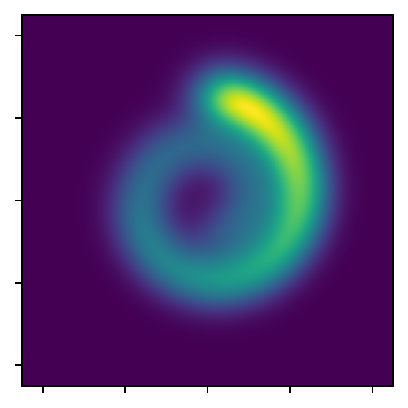}\\
		\includegraphics[width=\linewidth]{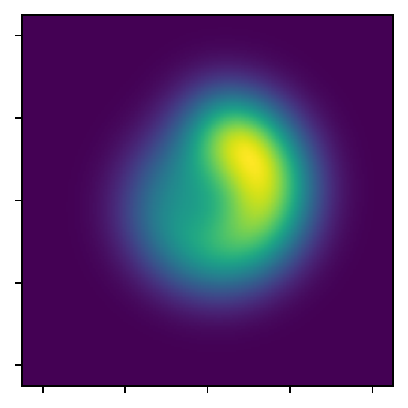}\\
		\includegraphics[width=\linewidth]{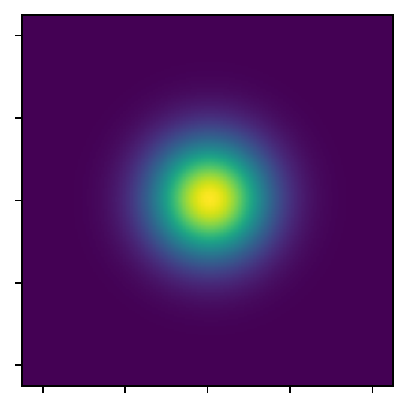}
		\includegraphics[width=\linewidth]{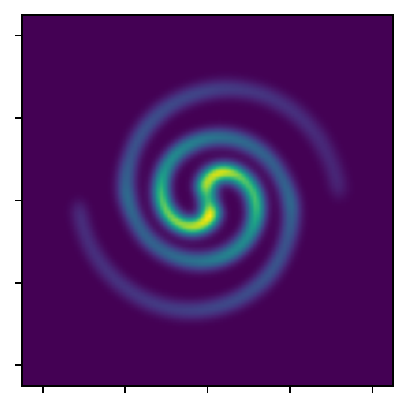}\\
            \includegraphics[width=\linewidth]{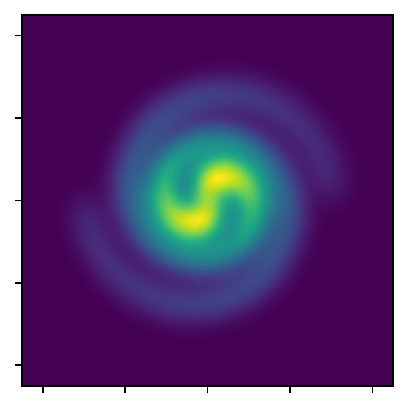}\\
		\includegraphics[width=\linewidth]{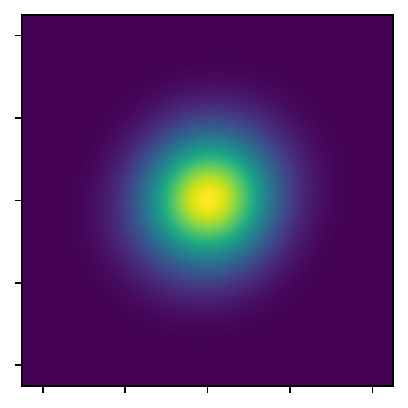}\\
		\includegraphics[width=\linewidth]{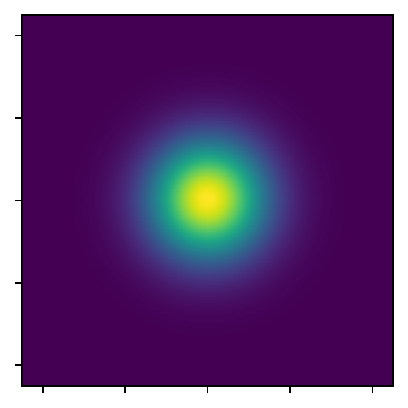}
	\end{minipage} 
    \begin{minipage}{0.23\linewidth}
		\centering
		{\scriptsize $Q_\theta\!\!:=\!\!f_\theta^\pi\!\!-\!\!f_\phi^\mu$}
		\includegraphics[width=\linewidth]{pics/toy_colored/8gaussians_diff_color_t0.0.pdf}\\
            \includegraphics[width=\linewidth]{pics/toy_colored/8gaussians_diff_color_t0.2.pdf}\\
		\includegraphics[width=\linewidth]{pics/toy_colored/8gaussians_diff_color_t0.3.pdf}\\
		\includegraphics[width=\linewidth]{pics/toy_colored/8gaussians_diff_color_t1.0.pdf}\\
		\includegraphics[width=\linewidth]{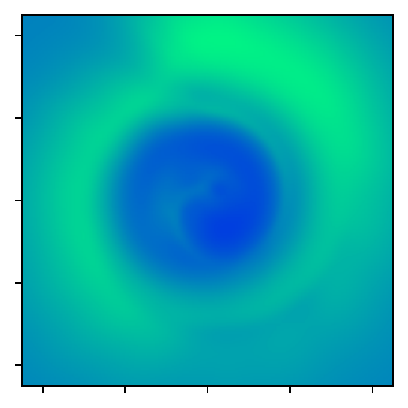}\\
            \includegraphics[width=\linewidth]{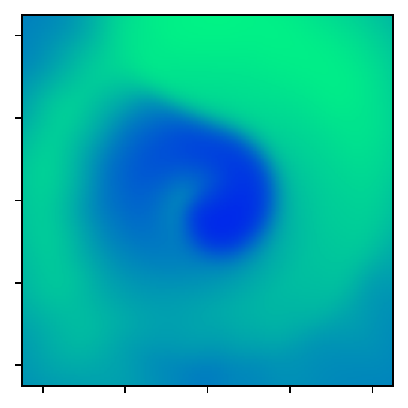}\\
		\includegraphics[width=\linewidth]{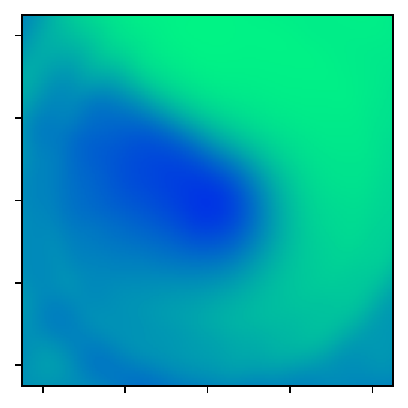}\\
		\includegraphics[width=\linewidth]{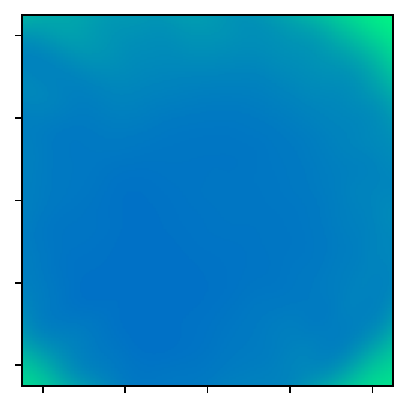}\\
		\includegraphics[width=\linewidth]{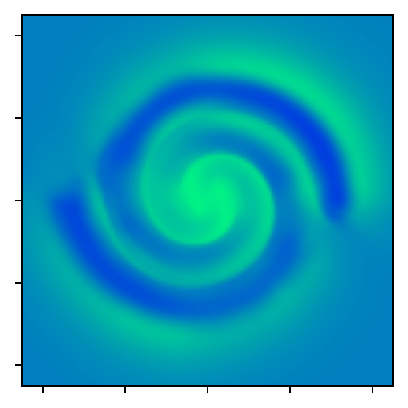}\\
            \includegraphics[width=\linewidth]{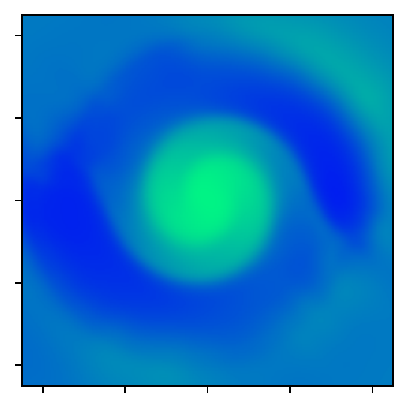}\\
		\includegraphics[width=\linewidth]{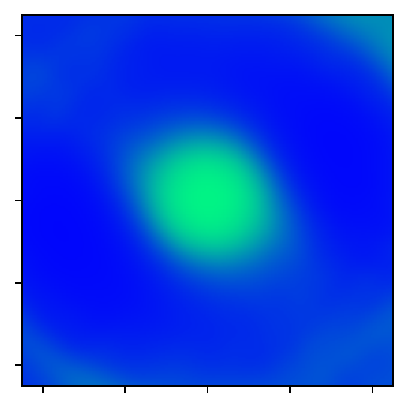}\\
		\includegraphics[width=\linewidth]{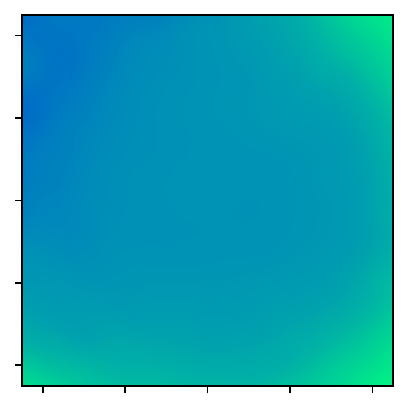}\\
	\end{minipage}
\centering (a) Value-based optimization
\end{minipage}
\begin{minipage}{0.479\linewidth}
    \begin{minipage}{0.23\linewidth}
		\centering
	{\scriptsize Dataset\vphantom{Pretrained $f_\phi^\mu$}}
		\includegraphics[width=\linewidth]{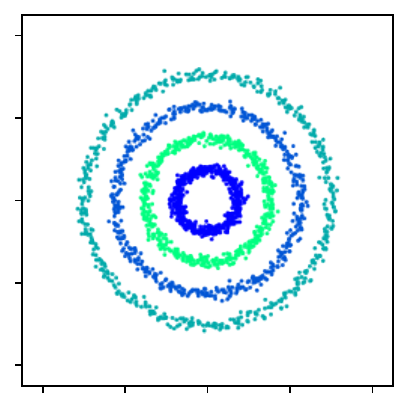}\\
            \includegraphics[width=\linewidth]{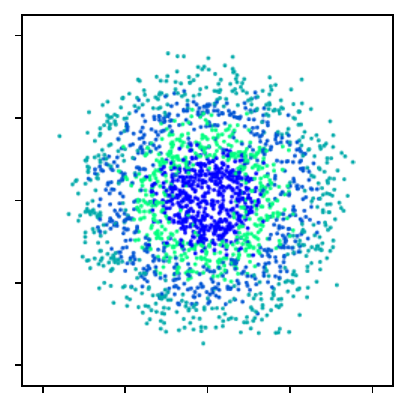}\\
		\includegraphics[width=\linewidth]{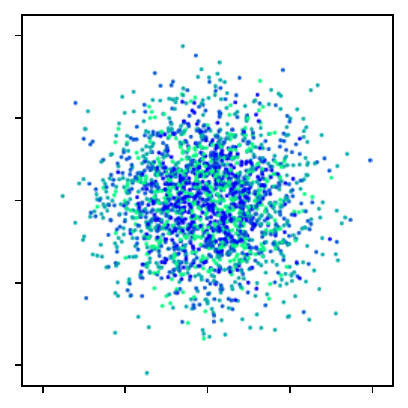}\\
		\includegraphics[width=\linewidth]{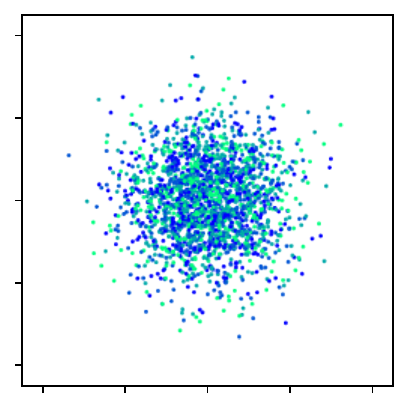}
		\includegraphics[width=\linewidth]{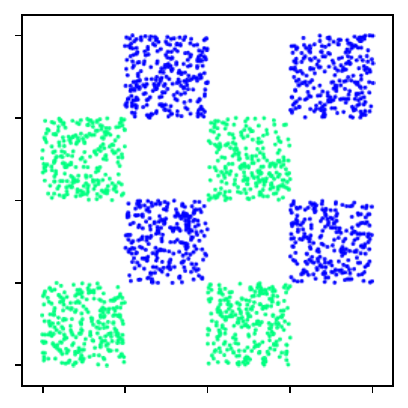}\\
            \includegraphics[width=\linewidth]{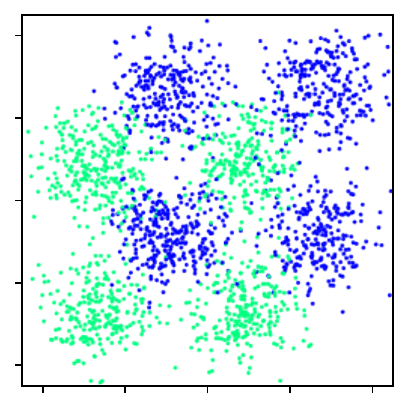}\\
		\includegraphics[width=\linewidth]{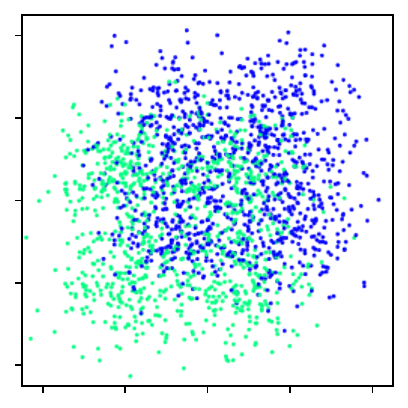}\\
		\includegraphics[width=\linewidth]{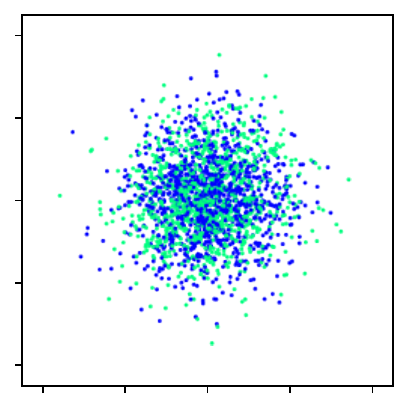}
		\includegraphics[width=\linewidth]{pics/toy_colored/moons_dataset_color_t0.0.pdf}\\
            \includegraphics[width=\linewidth]{pics/toy_colored/moons_dataset_color_t0.2.pdf}\\
		\includegraphics[width=\linewidth]{pics/toy_colored/moons_dataset_color_t0.3.pdf}\\
		\includegraphics[width=\linewidth]{pics/toy_colored/moons_dataset_color_t1.0.pdf}
	\end{minipage} 
    \begin{minipage}{0.23\linewidth}
		\centering
  	{\scriptsize Pretrained $f_\phi^\mu$}
		\includegraphics[width=\linewidth]{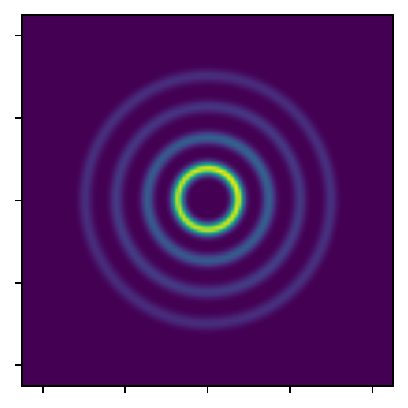}\\
            \includegraphics[width=\linewidth]{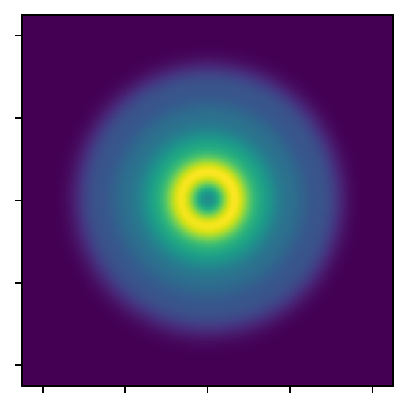}\\
		\includegraphics[width=\linewidth]{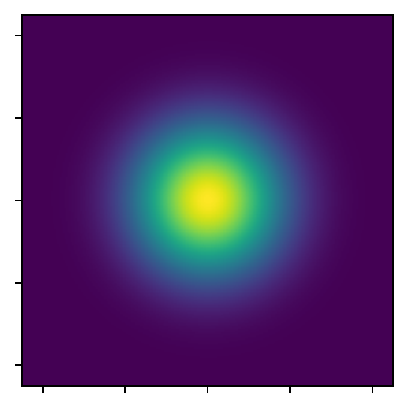}\\
		\includegraphics[width=\linewidth]{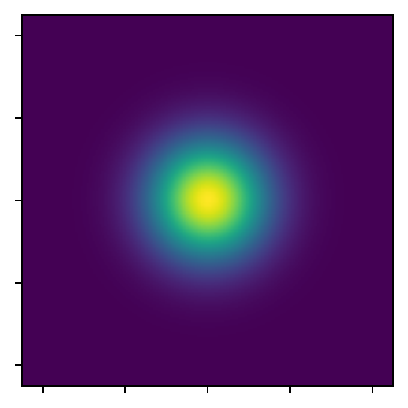}
		\includegraphics[width=\linewidth]{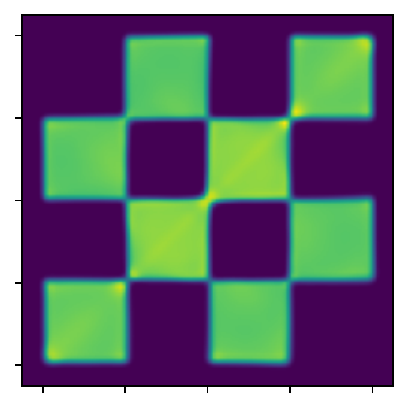}\\
            \includegraphics[width=\linewidth]{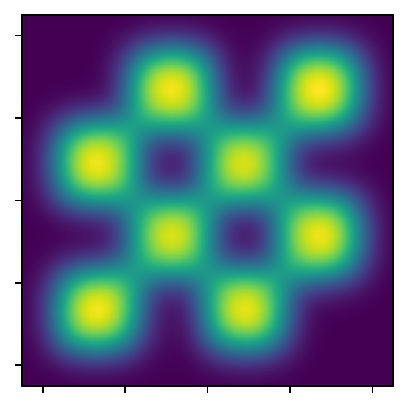}\\
		\includegraphics[width=\linewidth]{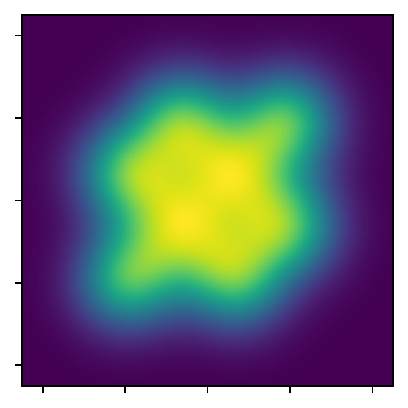}\\
		\includegraphics[width=\linewidth]{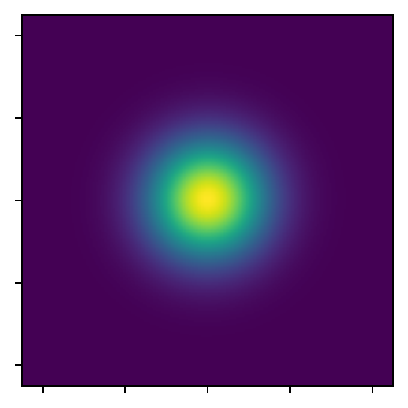}
		\includegraphics[width=\linewidth]{pics/toy_colored/moons_pretrained_color_t0.0.pdf}\\
            \includegraphics[width=\linewidth]{pics/toy_colored/moons_pretrained_color_t0.2.pdf}\\
		\includegraphics[width=\linewidth]{pics/toy_colored/moons_pretrained_color_t0.3.pdf}\\
		\includegraphics[width=\linewidth]{pics/toy_colored/moons_pretrained_color_t1.0.pdf}
	\end{minipage} 
    \begin{minipage}{0.23\linewidth}
		\centering
  	{\scriptsize fine-tuned $f_\theta^\pi$\vphantom{Pretrained $f_\phi^\mu$}}
		\includegraphics[width=\linewidth]{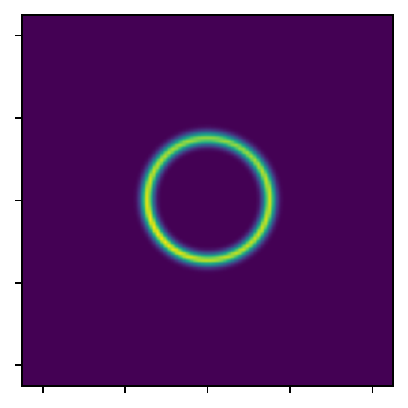}\\
            \includegraphics[width=\linewidth]{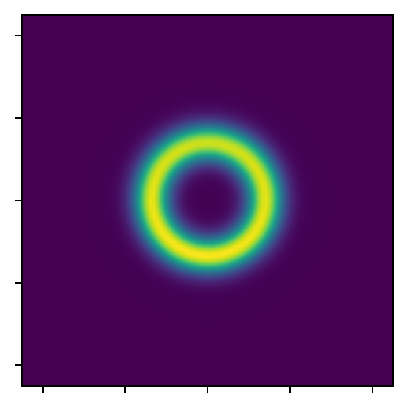}\\
		\includegraphics[width=\linewidth]{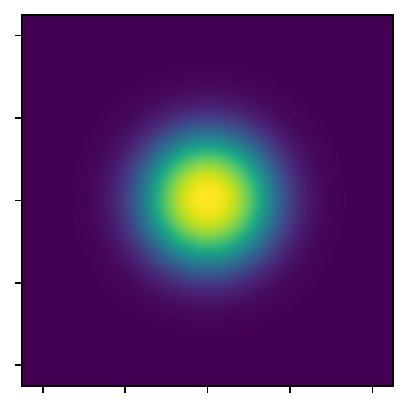}\\
		\includegraphics[width=\linewidth]{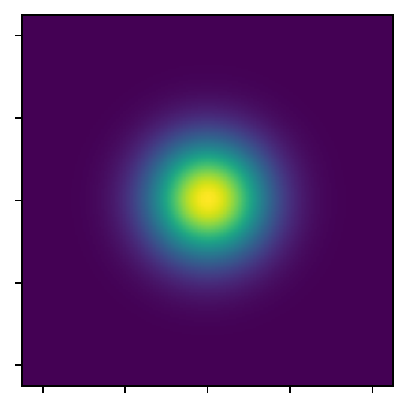}
		\includegraphics[width=\linewidth]{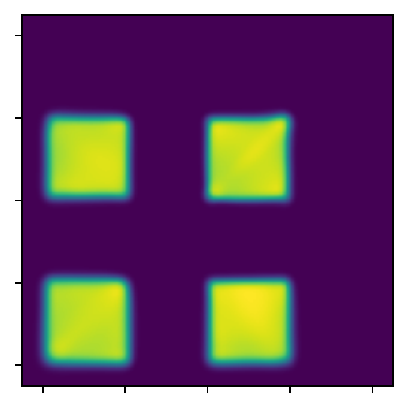}\\
            \includegraphics[width=\linewidth]{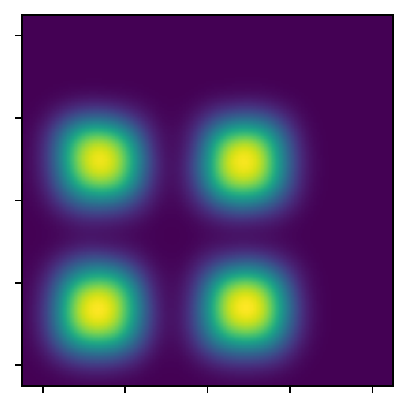}\\
		\includegraphics[width=\linewidth]{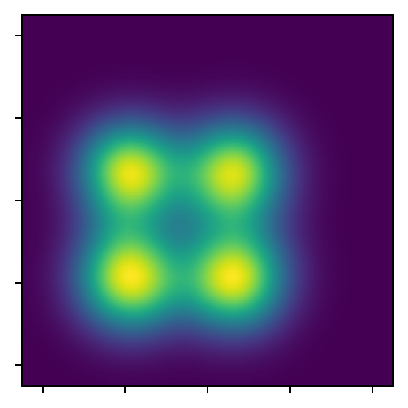}\\
		\includegraphics[width=\linewidth]{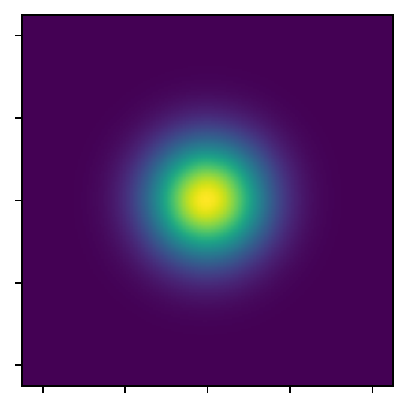}
		\includegraphics[width=\linewidth]{pics/toy_colored/moons_finetuned_color_t0.0.pdf}\\
            \includegraphics[width=\linewidth]{pics/toy_colored/moons_finetuned_color_t0.2.pdf}\\
		\includegraphics[width=\linewidth]{pics/toy_colored/moons_finetuned_color_t0.3.pdf}\\
		\includegraphics[width=\linewidth]{pics/toy_colored/moons_finetuned_color_t1.0.pdf}
	\end{minipage} 
    \begin{minipage}{0.23\linewidth}
		\centering
		{\scriptsize $Q_\theta\!\!:=\!\!f_\theta^\pi\!\!-\!\!f_\phi^\mu$}
		\includegraphics[width=\linewidth]{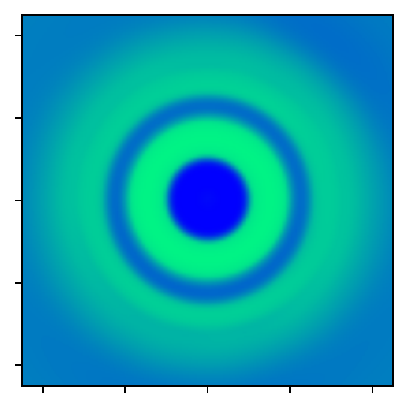}\\
            \includegraphics[width=\linewidth]{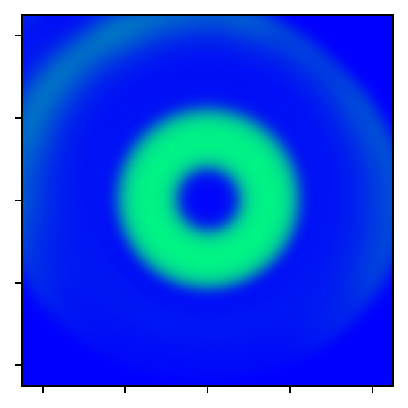}\\
		\includegraphics[width=\linewidth]{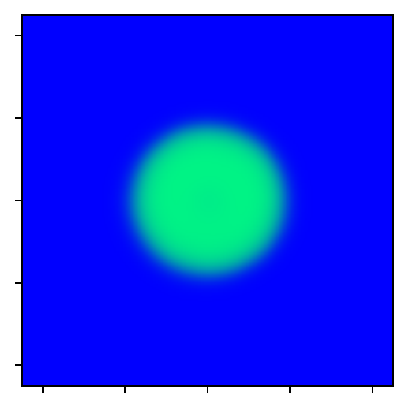}\\
		\includegraphics[width=\linewidth]{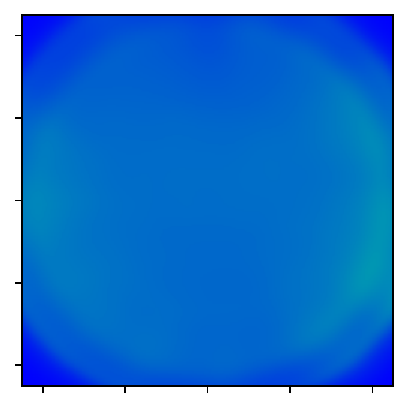}
		\includegraphics[width=\linewidth]{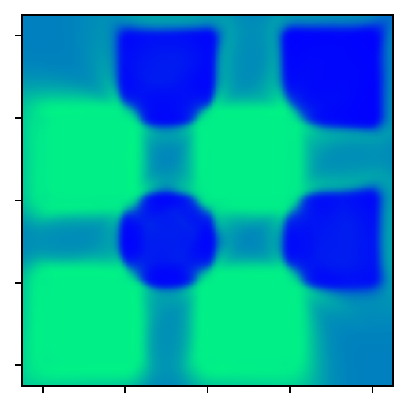}\\
            \includegraphics[width=\linewidth]{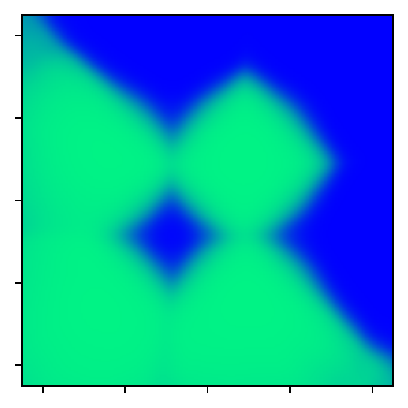}\\
		\includegraphics[width=\linewidth]{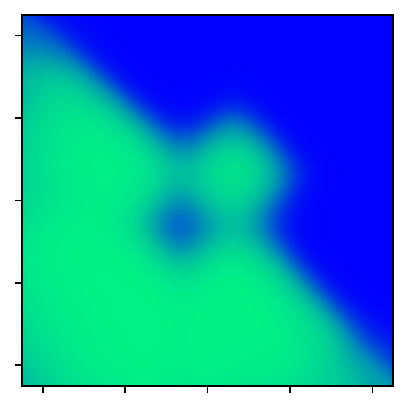}\\
		\includegraphics[width=\linewidth]{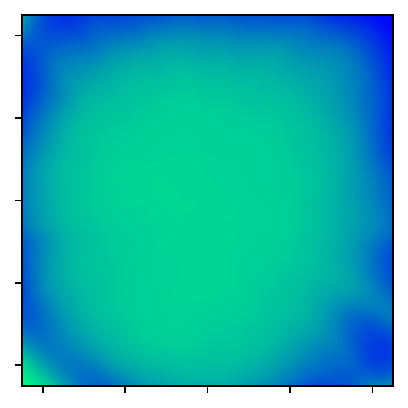}
		\includegraphics[width=\linewidth]{pics/toy_colored/moons_diff_color_t0.0.pdf}\\
            \includegraphics[width=\linewidth]{pics/toy_colored/moons_diff_color_t0.2.pdf}\\
		\includegraphics[width=\linewidth]{pics/toy_colored/moons_diff_color_t0.3.pdf}\\
		\includegraphics[width=\linewidth]{pics/toy_colored/moons_diff_color_t1.0.pdf}
	\end{minipage}
\centering (b) Preference-based optimization
\end{minipage}
	\caption{Illustration of EDA’s performance in 2D bandit settings at different diffusion times.}
\end{figure}

\clearpage
\newpage
\section{Theoretical Analysis}
\label{appendix:proof}
In this section, we present the theoretical proof for Proposition \ref{proposition}.

Our proof is based on the theoretical framework of Contrastive Energy Prediction (CEP) for diffusion energy guidance \citep{qgpo}. For the ease of readers, we incorporate the relevant theories from their work as lemmas below.

\begin{lemma}\label{lemma1} Let $\hat{Q}(\vs, \va): \mathcal{A} \times \mathcal{S} \rightarrow \mathbb{R}$ be a scalar function approximator. Consider the optimization problem
\begin{equation}
\label{eq:t0CEP_loss}
    \max_{\hat{Q}} \E_{\mu(\vs) \prod_{i=1}^K \mu(\va^i|\vs)}\Bigg[\sum_{k=1}^K \frac{e^{Q(\vs, \va^k)}}{\sum_{i=1}^{K} e^{Q(\vs, \va^i)}} \log \frac{e^{\hat{Q}(\vs, \va^k)}}{\sum_{i=1}^{K} e^{\hat{Q}(\vs, \va^i)}}\Bigg].
\end{equation}
The solution for Problem \ref{eq:t0CEP_loss} satisfies
\begin{equation*}
    \hat{Q}^*(\vs,\va) = Q(\vs,\va) + C(\vs),
\end{equation*}
where $C(\vs)$ can be arbitrary scalar functions conditioned on $\vs$.
\end{lemma}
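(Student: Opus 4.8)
The plan is to exploit the fact that, once the state $\vs$ and a realization of the $K$ i.i.d. samples $\va^{1:K}\sim\prod_{i}\mu(\cdot|\vs)$ are fixed, the inner summand is exactly the (negative) cross-entropy between two categorical distributions over the index set $\{1,\dots,K\}$: the \emph{target} $p_k := e^{Q(\vs,\va^k)}/\sum_i e^{Q(\vs,\va^i)}$ and the \emph{prediction} $\hat{p}_k := e^{\hat{Q}(\vs,\va^k)}/\sum_i e^{\hat{Q}(\vs,\va^i)}$. Since $\hat{Q}$ enters the objective only through the softmax values $\hat{p}_{1:K}$ at the sampled points, I would first establish the pointwise bound via Gibbs' inequality: for every realization, $\sum_k p_k \log \hat{p}_k \le \sum_k p_k \log p_k$, with equality if and only if $\hat{p}_k = p_k$ for all $k$ (equivalently, $D_{\mathrm{KL}}(p \,\|\, \hat{p})=0$).

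Next I would exhibit a function that attains this pointwise optimum simultaneously across \emph{all} draws. The crucial observation is that the softmax is invariant under adding any state-dependent constant: substituting $\hat{Q}(\vs,\va) = Q(\vs,\va) + C(\vs)$ makes the factor $e^{C(\vs)}$ cancel between numerator and denominator, giving $\hat{p}_k = p_k$ for every realization of $\va^{1:K}$. Hence this candidate achieves the per-draw upper bound for every sample and therefore maximizes the expectation. This yields the achievability half of the claim and confirms the advertised form $\hat{Q}^*(\vs,\va)=Q(\vs,\va)+C(\vs)$. Note that this step is what legitimizes swapping ``maximize the expectation'' for ``maximize pointwise'': a single admissible $\hat{Q}$ is simultaneously optimal for every draw.

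The remaining and more delicate part is uniqueness, i.e., showing that \emph{any} maximizer must take this form up to the additive $C(\vs)$. Here I would argue that, because the integrand is pointwise dominated by $\sum_k p_k \log p_k$ and the maximizer attains the expectation of this dominating bound, equality $\hat{p}_k = p_k$ must hold for $\mu(\cdot|\vs)$-almost every draw. Taking the log-ratio of any two matched softmax coordinates then gives $\hat{Q}(\vs,\va^j) - \hat{Q}(\vs,\va^k) = Q(\vs,\va^j) - Q(\vs,\va^k)$, equivalently $\hat{Q}(\vs,\va^j) - Q(\vs,\va^j) = \hat{Q}(\vs,\va^k) - Q(\vs,\va^k)$, for almost every pair of independently sampled actions. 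This forces $\hat{Q} - Q$ to be constant on the support of $\mu(\cdot|\vs)$, i.e. equal to some scalar $C(\vs)$.

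I expect the main obstacle to be making the uniqueness step fully rigorous. One must carefully handle the almost-everywhere qualifier and the restriction to the support of $\mu(\cdot|\vs)$ (the solution is only pinned down where the behavior distribution places mass), and verify that the pairwise matching of softmax coordinates genuinely propagates to a single state-dependent constant rather than to something weaker such as a locally constant offset on disconnected support regions. The achievability direction, by contrast, is essentially immediate once shift-invariance of the softmax is invoked, so the bulk of the care goes into the converse together with the standard measurability and integrability caveats needed to justify the interchange of maximization and expectation.
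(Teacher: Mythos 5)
Your proposal is correct and follows essentially the same route as the paper's proof: both recognize the inner sum as a negative cross-entropy between the target softmax $P$ and the predicted softmax $\hat{P}$, invoke Gibbs' inequality (nonnegativity of KL divergence) to bound the objective by the negative entropy, and characterize the maximizer by equality $P=\hat{P}$ on (almost) every draw. The only difference is one of detail: you spell out the achievability check and the final uniqueness step---passing from matched softmax coordinates to the conclusion that $\hat{Q}-Q$ equals a state-dependent constant $C(\vs)$ on the support of $\mu(\cdot|\vs)$---which the paper asserts without elaboration.
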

\begin{proof} The proof is quite straightforward. Consider two discrete distributions
\begin{equation*}
    P := \Bigg\{\frac{e^{Q(\vs, \va^1)}}{\sum_{i=1}^{K} e^{Q(\vs, \va^i)}}, \frac{e^{Q(\vs, \va^2)}}{\sum_{i=1}^{K} e^{Q(\vs, \va^2)}} ..., \frac{e^{Q(\vs, \va^K)}}{\sum_{i=1}^{K} e^{Q(\vs, \va^i)}}\Bigg\}
\end{equation*}
\begin{equation*}
    \hat{P} := \Bigg\{\frac{e^{\hat{Q}(\vs, \va^1)}}{\sum_{i=1}^{K} e^{\hat{Q}(\vs, \va^i)}}, \frac{e^{\hat{Q}(\vs, \va^2)}}{\sum_{i=1}^{K} e^{\hat{Q}(\vs, \va^i)}} ..., \frac{e^{\hat{Q}(\vs, \va^K)}}{\sum_{i=1}^{K} e^{\hat{Q}(\vs, \va^i)}}\Bigg\}
\end{equation*}
For any $\vs$ and any $\va^{1:K}$, we have
\begin{align*}
    &\E_{\mu(\vs) \prod_{i=1}^K \mu(\va^i|\vs)}\Bigg[\sum_{k=1}^K \frac{e^{Q(\vs, \va^k)}}{\sum_{i=1}^{K} e^{Q(\vs, \va^i)}} \log \frac{e^{\hat{Q}(\vs,\va^k)}}{\sum_{i=1}^{K} e^{\hat{Q}(\vs,\va^i)}}\Bigg]\\
    =&\E_{\mu(\vs) \prod_{i=1}^K \mu(\va^i|\vs)} - \KL(P || \hat{P}) - H(P)\\
    \leq&\E_{\mu(\vs) \prod_{i=1}^K \mu(\va^i|\vs)} - H(P)
\end{align*}
According to the properties of KL divergence, the equality holds if and only if $P = \hat{P}$ for any $\vs$ and $\va^{1:K}$. This implies that
\begin{equation*}
\hat{Q}^*(\vs,\va) = Q(\vs,\va) + C(\vs),
\end{equation*}
constantly holds.
\end{proof}

\begin{lemma} \label{lemma2}Let $\hat{Q}(\vs, \va_t, t): \mathcal{A} \times \mathcal{S} \times \mathbb{R} \rightarrow \mathbb{R}$ be a scalar function approximator. 
$p(\va_t|\va, t)$ is any conditional transition probability. Consider the optimization problem
\begin{equation}
\label{eq:CEP_loss}
    \max_{\hat{Q}} \E_{\mu(\vs) \prod_{i=1}^K \mu(\va^i|\vs) p(\va_t^i| \va^i,t)} \Bigg[\sum_{k=1}^K e^{Q(\vs, \va^k)} \log \frac{e^{\hat{Q}(\vs, \va^k_t,t)}}{\sum_{i=1}^{K} e^{\hat{Q}(\vs, \va^i_t,t)}}\Bigg]. 
\end{equation}
The solution for Problem \ref{eq:CEP_loss} satisfies
\begin{equation*}
    \hat{Q}^*(\vs,\va_t,t) = \log \E_{\mu_t(\va|\va_t,\vs,t)} e^{Q(\vs,\va)} + C(\vs),
\end{equation*}
where $\mu_t(\va|\va_t, \vs,t) = {\mu(\va|\vs)p(\va_t|\va,t)}/\mu_t(\va_t|\vs,t)$ is the posterior action distribution, $C(\vs)$ can be arbitrary scalar functions conditioned on $\vs$.
\end{lemma}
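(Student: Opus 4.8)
The plan is to prove Lemma \ref{lemma2} by the divergence argument already used for Lemma \ref{lemma1}, but only after removing the noise variables via the tower property. Fix a state $\vs$ throughout (the outer $\mu(\vs)$-expectation just averages an independent optimization over each $\vs$). The genuine difficulty, relative to Lemma \ref{lemma1}, is that the weights $e^{Q(\vs,\va^k)}$ are evaluated at the \emph{clean} actions $\va^k$ whereas the predictor $\hat{Q}$ sees only the \emph{noisy} actions $\va_t^k$, so we cannot simply match $\hat{Q}$ to $Q$. The observation that unlocks everything is that, under the sampling law $\prod_{i=1}^K \mu(\va^i|\vs)p(\va_t^i|\va^i,t)$, the clean actions are conditionally independent given the noisy ones, with $\va^i$ depending on $\va_t^i$ alone through the posterior $\mu_t(\va^i|\va_t^i,\vs,t)$.

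Concretely, I would condition on the full tuple $\va_t^{1:K}$. Since the softmax term $m_k := e^{\hat{Q}(\vs,\va_t^k,t)}/\sum_i e^{\hat{Q}(\vs,\va_t^i,t)}$ is measurable with respect to $\va_t^{1:K}$, it factors out of the inner conditional expectation, and each weight collapses to its posterior mean:
\begin{equation*}
\E\big[e^{Q(\vs,\va^k)} \,\big|\, \va_t^{1:K}\big] = \E_{\mu_t(\va|\va_t^k,\vs,t)}\, e^{Q(\vs,\va)} =: h(\vs,\va_t^k,t).
\end{equation*}
Hence the objective of Problem \ref{eq:CEP_loss} equals $\E_{\va_t^{1:K}}\big[\sum_{k=1}^K h(\vs,\va_t^k,t)\,\log m_k\big]$. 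This is precisely where the \emph{unnormalized} weighting (unlike the normalized one in Lemma \ref{lemma1}) is essential: it lets the conditional expectation land cleanly on $h$, with no expectation-of-a-ratio coupling between the numerator and the softmax normalizer.

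I would then optimize this reduced objective configuration by configuration. For fixed $\va_t^{1:K}$, writing $T:=\sum_i h(\vs,\va_t^i,t)$ and $p_k := h(\vs,\va_t^k,t)/T$, the inner sum equals $-T\big(\KL(p\,\|\,m)+H(p)\big)$ and is therefore maximized exactly when $m_k=p_k$, mirroring the KL step of Lemma \ref{lemma1}. The single choice $e^{\hat{Q}(\vs,\va_t,t)}\propto h(\vs,\va_t,t)$ realizes $m_k=p_k$ for every configuration simultaneously, so it attains the pointwise maximum of each term inside the outer expectation and is thus a global maximizer; conversely any maximizer must satisfy $m_k=p_k$ on almost every configuration. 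This gives $\hat{Q}^*(\vs,\va_t,t)=\log \E_{\mu_t(\va|\va_t,\vs,t)}\, e^{Q(\vs,\va)} + C(\vs)$, where the additive $C(\vs)$ is the residual gauge freedom from the shift-invariance of the softmax.

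The step I expect to be the main obstacle is upgrading the per-configuration identity $m_k=p_k$ to the stated functional form of $\hat{Q}^*$ up to a single state-dependent constant. The ratio condition $m_k/m_{k'}=h(\vs,\va_t^k,t)/h(\vs,\va_t^{k'},t)$ pins down $\hat{Q}^*-\log h$ to be independent of $\va_t$ only if any pair $\va_t,\va_t'$ can co-occur among the $K$ samples with positive probability; the Gaussian transition $\gN(\va_t|\alpha_t\va,\sigma_t^2\mI)$ supplies the full support of $\mu_t(\va_t|\vs,t)$ needed for this at every $t>0$. I would also verify the measurability/interchange used when conditioning on $\va_t^{1:K}$, which is immediate from the product structure of the sampling law and positivity of the weights.
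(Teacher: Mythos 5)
Your proof is correct, and its first half coincides with the paper's: the paper's chain of equalities---factoring $\mu(\va|\vs)\,p(\va_t|\va,t)=\mu_t(\va_t|\vs,t)\,\mu_t(\va|\va_t,\vs,t)$ and pushing the expectation over the posterior inside the sum---is exactly your tower-property step collapsing each weight to $h(\vs,\va_t^k,t)=\E_{\mu_t(\va|\va_t^k,\vs,t)}e^{Q(\vs,\va)}$, which is legitimate for precisely the reason you give (the softmax term is $\va_t^{1:K}$-measurable and the pairs $(\va^i,\va_t^i)$ are i.i.d.). Where you diverge is the finish. The paper normalizes the weights by $\sum_i h(\vs,\va_t^i,t)$ and absorbs that factor into a reweighted sampling law $\mu_t'$ so that Lemma \ref{lemma1} applies verbatim; you instead keep the unnormalized weights and maximize configuration-by-configuration, writing the inner sum as $-T\left(\KL(p\,\|\,m)+H(p)\right)$ with $T=\sum_i h(\vs,\va_t^i,t)>0$. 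Your route is arguably tighter: the paper's reweighting is displayed as a product $\prod_{i}\mu_t'(\va_t^i|\vs)$ even though the tilt $\sum_i h$ couples the $K$ coordinates, so strictly it defines an exchangeable joint law rather than a product measure---your pointwise argument sidesteps this wrinkle entirely, since the positive factor $T$ is irrelevant to the per-configuration optimum $m=p$. You also make explicit two steps the paper inherits silently from the ``for any $\vs$ and $\va^{1:K}$'' clause of Lemma \ref{lemma1}: that the single choice $e^{\hat{Q}}\propto h$ attains the pointwise maximum simultaneously for almost every configuration (and conversely any maximizer must match the softmax a.e.), and that upgrading the ratio condition to $\hat{Q}-\log h$ being constant in $\va_t$ needs pairs of noisy actions to co-occur with positive probability---supplied by the Gaussian transition's full support for $t>0$ in the paper's application, while for a general transition $p(\va_t|\va,t)$, as in the lemma's hypothesis, the conclusion should be read $\mu_t$-a.e. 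One cosmetic point: your constant, like the paper's, should in principle be allowed to depend on $t$ as well as $\vs$ when $t$ is not fixed.
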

\begin{proof}
\begin{align*}
   &\E_{\mu(\vs) \prod_{i=1}^K \mu(\va^i|\vs) p(\va_t^i| \va^i,t)} \Bigg[\sum_{k=1}^K e^{Q(\vs, \va^k)} \log \frac{e^{\hat{Q}(\vs, \va^k_t,t)}}{\sum_{i=1}^{K} e^{\hat{Q}(\vs, \va^i_t,t)}}\Bigg] \\
   =&\E_{\mu(\vs) \prod_{i=1}^K \mu_t(\va_t^i|\vs) \mu_t(\va^i|\va_t^i, \vs,t)} \Bigg[\sum_{k=1}^K e^{Q(\vs, \va^k)} \log \frac{e^{\hat{Q}(\vs, \va^k_t,t)}}{\sum_{i=1}^{K} e^{\hat{Q}(\vs, \va^i_t,t)}}\Bigg].\\
   =&\E_{\mu(\vs) \prod_{i=1}^K \mu_t(\va_t^i|\vs) } \Bigg[\sum_{k=1}^K \E_{\mu_t(\va^k|\va_t^k, \vs,t)} e^{Q(\vs, \va^k)} \log \frac{e^{\hat{Q}(\vs, \va^k_t,t)}}{\sum_{i=1}^{K} e^{\hat{Q}(\vs, \va^i_t,t)}}\Bigg].\\
   =&\E_{\mu(\vs) \prod_{i=1}^K \mu_t'(\va_t^i|\vs)  } \Bigg[\sum_{k=1}^K \frac{\E_{\mu_t(\va^k|\va_t^k, \vs,t)}e^{Q(\vs, \va^k)}}{\sum_{i=1}^K \E_{\mu_t(\va^i|\va_t^i, \vs,t)}e^{Q(\vs, \va^i)}}  \log \frac{e^{\hat{Q}(\vs, \va^k_t,t)}}{\sum_{i=1}^{K} e^{\hat{Q}(\vs, \va^i_t,t)}}\Bigg].\\
\end{align*}
According to Lemma \ref{lemma1}, for any state $\va$ and any diffused action $\va_t$ at time $t$, the optimal solution satisfies
\begin{equation*}
    \hat{Q}^*(\vs,\va_t,t) = \log \E_{\mu_t(\va|\va_t,\vs,t)} e^{Q(\vs,\va)} + C(\vs).
\end{equation*}
\end{proof}

\begin{lemma} \label{lemma3}
    Consider the behavior distribution $\mu(\va|\vs)$ and the policy distribution $\pi^*(\va|\vs) \propto \mu(\va|\vs)e^{Q(\vs,\va)}$. Their diffused distribution at time $t$ are both defined by the forward diffusion process (Eq. \ref{eq:forward_process}). Let $p(\va_t|\va,t) := \gN(\va_t | \alpha_t\va, \sigma_t^2\mI)$, such that
\begin{equation*}
    \mu_{t}(\va_t|\vs,t)=\int \gN(\va_t | \alpha_t\va, \sigma_t^2\mI) \mu(\va|\vs,t) \mathrm{d} \va,
\end{equation*}
and
\begin{equation*}
    \pi^*_{t}(\va_t|\vs,t)=\int \gN(\va_t | \alpha_t\va, \sigma_t^2\mI) \pi^*(\va|\vs,t) \mathrm{d} \va.
\end{equation*}
\end{lemma}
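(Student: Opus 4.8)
The plan is to prove the pointwise relation that ties together the two forward-diffused distributions the statement introduces: for every $\vs$, every diffused action $\va_t$, and every $t$,
\[
\pi^*_t(\va_t|\vs,t) = \frac{1}{Z(\vs)}\,\mu_t(\va_t|\vs,t)\,\E_{\mu_t(\va|\va_t,\vs,t)}\!\left[e^{Q(\vs,\va)}\right],
\]
where $Z(\vs)$ normalizes $\pi^*(\va|\vs)\propto\mu(\va|\vs)e^{Q(\vs,\va)}$ and $\mu_t(\va|\va_t,\vs,t)$ is the posterior action distribution already introduced in Lemma~\ref{lemma2}. This relation is exactly the bridge that lets the Lemma~\ref{lemma2} solution $\hat{Q}^*(\vs,\va_t,t)=\log\E_{\mu_t(\va|\va_t,\vs,t)}e^{Q(\vs,\va)}+C(\vs)$ be reinterpreted as a reweighting of the diffused behavior, i.e. $\pi^*_t\propto\mu_t\,e^{\hat{Q}^*}$.

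First I would substitute $\pi^*(\va|\vs)=\tfrac{1}{Z(\vs)}\mu(\va|\vs)e^{Q(\vs,\va)}$ into the defining integral for $\pi^*_t$ and pull the $\vs$-dependent, $t$-independent factor $1/Z(\vs)$ outside the integral over $\va$. Next I would invoke Bayes' rule for the Gaussian forward kernel: since $p(\va_t|\va,t)=\gN(\va_t|\alpha_t\va,\sigma_t^2\mI)$, the joint density of $(\va,\va_t)$ factors two ways, giving the identity $\mu(\va|\vs)\,p(\va_t|\va,t)=\mu_t(\va_t|\vs,t)\,\mu_t(\va|\va_t,\vs,t)$. Substituting this turns the integrand into $\mu_t(\va_t|\vs,t)\,\mu_t(\va|\va_t,\vs,t)\,e^{Q(\vs,\va)}$; because $\mu_t(\va_t|\vs,t)$ is constant in the integration variable $\va$, it factors out and the surviving integral is precisely the posterior expectation $\E_{\mu_t(\va|\va_t,\vs,t)}[e^{Q(\vs,\va)}]$, which yields the displayed identity.

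To close the loop with the earlier lemmas, I would exponentiate the Lemma~\ref{lemma2} solution to write $e^{\hat{Q}^*(\vs,\va_t,t)}=e^{C(\vs)}\,\E_{\mu_t(\va|\va_t,\vs,t)}[e^{Q(\vs,\va)}]$, then fold both $e^{C(\vs)}$ and $1/Z(\vs)$ into a single factor that does not depend on $\va_t$. Holding $\vs$ and $t$ fixed, this gives $\pi^*_t(\va_t|\vs,t)\propto\mu_t(\va_t|\vs,t)\,e^{\hat{Q}^*(\vs,\va_t,t)}$, the form used in establishing the diffusion-consistency part of the Proposition.

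The computation is essentially routine, so the main obstacle is careful bookkeeping of what each normalizer depends on: I must argue that $Z(\vs)$ and $C(\vs)$ are constants with respect to $\va_t$ so that the proportionality is genuinely in $\va_t$ at fixed $(\vs,t)$, and I must ensure the Bayes factorization is everywhere valid, which holds because the Gaussian kernel $\gN(\va_t|\alpha_t\va,\sigma_t^2\mI)$ is strictly positive and hence $\mu_t(\va_t|\vs,t)>0$, making the posterior $\mu_t(\va|\va_t,\vs,t)$ well defined at every $\va_t$.
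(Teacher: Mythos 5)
Your proof is correct and takes essentially the same route as the paper's: both substitute $\pi^*(\va|\vs)=\tfrac{1}{Z(\vs)}\mu(\va|\vs)e^{Q(\vs,\va)}$ into the defining integral for $\pi^*_t$, apply the Bayes factorization $\mu(\va|\vs)\,p(\va_t|\va,t)=\mu_t(\va_t|\vs,t)\,\mu_t(\va|\va_t,\vs,t)$, and pull $\mu_t(\va_t|\vs,t)$ and $1/Z(\vs)$ out of the integral to identify the remainder as the posterior expectation $\E_{\mu_t(\va|\va_t,\vs,t)}e^{Q(\vs,\va)}=e^{Q_t(\vs,\va_t,t)}$. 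Your extra bookkeeping (positivity of the Gaussian kernel so the posterior is well defined, and the $\va_t$-independence of $Z(\vs)$ and $C(\vs)$) makes explicit what the paper leaves implicit but does not alter the argument.
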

Then the relationship between $\pi^*_{t}$ and $\mu_{t}$ can be derived as
\begin{equation*}
    \pi^*_{t}(\va_t|\vs,t) \propto \mu_t(\va_t|\vs,t) e^{Q_t(\vs,\va_t, t)},
\end{equation*}
where $ Q_t(\vs,\va_t, t) := \log \E_{\mu_t(\va|\va_t,\vs,t)} e^{Q(\vs,\va)}$.

\begin{proof}
    According to the definition, we have
\begin{align*}
    \pi^*_{t}(\va|\vs) =& \int \gN(\va_t | \alpha_t\va, \sigma_t^2\mI) \pi^*(\va|\vs,t) \mathrm{d} \va\\
     =& \int \gN(\va_t | \alpha_t\va, \sigma_t^2\mI) \mu(\va|\vs)\frac{e^{Q(\vs,\va)}}{Z(\vs)} \mathrm{d} \va\\
     =& \int p(\va_t|\va,t)  \mu(\va|\vs)\frac{e^{Q(\vs,\va)}}{Z(\vs)} \mathrm{d} \va\\
     =& \int \mu_t(\va|\va_t, \vs, t)  \mu_{t}(\va_t|\vs,t) \frac{e^{Q(\vs,\va)}}{Z(\vs)} \mathrm{d} \va\\
     =& \frac{1}{Z(\vs)} \mu_{t}(\va_t|\vs,t) \int \mu_t(\va|\va_t, \vs, t)   e^{Q(\vs,\va)} \mathrm{d} \va\\
     \propto& \mu_t(\va_t|\vs,t) e^{Q_t(\vs,\va_t, t)}
\end{align*}
\end{proof}

\begin{proposition} Let $f_\theta^*$ be the optimal solution of Problem \ref{eq:CEP_DPO_loss} and $\pi_{t, \theta}^* \propto e^{f_\theta^*}$ be the optimal diffusion policy. Assuming unlimited model capacity and
data samples, we have the following results:

\textbf{(a) Optimality Guarantee.} At time $t=0$, the learned policy $\pi_{\theta}^*$ converges to the optimal target policy.
\begin{equation*}
    \pi_{\theta}^*(\va|\vs) =\pi_{t=0, \theta}^*(\va|\vs) \propto \mu_\phi(\va|\vs) e^{Q(\vs, \va)/\beta}
\end{equation*}
\textbf{(b) Diffusion Consistency.} At time $t>0$, $\pi_{t>0, \theta}$ models the diffused distribution of $\pi_{\theta}^*$:
\begin{equation*}
    \pi_{t, \theta}^*(\va|\vs,t)=\int \gN(\va_t | \alpha_t\va, \sigma_t^2\mI) \pi_{\theta}^*(\va_0|\vs) \mathrm{d} \va_0
\end{equation*}
asymptotically holds when $K \rightarrow \infty$ and $\beta = 1$, satisfying the definition of diffusion process (Eq. \ref{eq:diffusion_forward_process}). 
\end{proposition}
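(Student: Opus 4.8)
The plan is to handle the two parts separately, using Lemmas \ref{lemma1}--\ref{lemma3} as the workhorses and reducing the alignment objective (Eq.~\ref{eq:CEP_DPO_loss}) to the abstract contrastive problems those lemmas already solve. For part (a) I would first specialize the objective to $t=0$. Since the forward process gives $\alpha_0=1,\sigma_0=0$, we have $\va_t^k=\va^k$, the noise variables $\epsilonv^{1:K}$ drop out, and the predicted logit collapses to $\beta[f_\theta^\pi(\va^k|\vs)-f_\phi^\mu(\va^k|\vs)]$. With the identification $\hat{Q}(\vs,\va):=\beta[f_\theta^\pi(\va|\vs)-f_\phi^\mu(\va|\vs)]$ and $\mu=\mu_\phi$, Eq.~\ref{eq:CEP_DPO_loss} is \emph{exactly} the optimization of Lemma~\ref{lemma1}, which forces $\beta[f_\theta^{\pi*}(\va|\vs)-f_\phi^\mu(\va|\vs)]=Q(\vs,\va)+C(\vs)$ at the optimum. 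Substituting the bottleneck density interpretation $f_\theta^\pi(\va|\vs)=\log\pi_\theta(\va|\vs)+C^\pi(\vs)$ and likewise for $f_\phi^\mu$, all $\vs$-dependent constants merge into a single term pinned down by the normalization of $\pi_\theta$, yielding $\pi_\theta^*(\va|\vs)\propto\mu_\phi(\va|\vs)e^{Q(\vs,\va)/\beta}$.

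For part (b) the strategy is to match the learned time-$t$ distribution against the diffused target and then invoke Lemma~\ref{lemma3}. The essential subtlety is that Eq.~\ref{eq:CEP_DPO_loss} uses the \emph{self-normalized} weights $e^{Q(\vs,\va^k)}/\sum_i e^{Q(\vs,\va^i)}$, whereas Lemma~\ref{lemma2} is stated for the \emph{unnormalized} weights $e^{Q(\vs,\va^k)}$. I would bridge this via the $K\to\infty$ hypothesis: by the law of large numbers the random denominator $\tfrac{1}{K}\sum_i e^{Q(\vs,\va^i)}$ concentrates on the deterministic constant $\E_{\mu_\phi(\cdot|\vs)}e^{Q(\vs,\va)}$, so asymptotically the self-normalized weight equals $e^{Q(\vs,\va^k)}$ up to a positive per-state factor that leaves the per-state maximizer unchanged. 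The objective then coincides with Lemma~\ref{lemma2} using $p(\va_t|\va,t)=\gN(\va_t|\alpha_t\va,\sigma_t^2\mI)$, whose optimum is $Q_\theta^*(\vs,\va_t,t)=\log\E_{\mu_t(\va|\va_t,\vs,t)}e^{Q(\vs,\va)}+C(\vs)=Q_t(\vs,\va_t,t)+C(\vs)$. Setting $\beta=1$ and again applying the density interpretation of $f_\theta^\pi,f_\phi^\mu$ gives $\pi_{t,\theta}^*(\va_t|\vs,t)\propto\mu_{t,\phi}(\va_t|\vs,t)e^{Q_t(\vs,\va_t,t)}$, and Lemma~\ref{lemma3} identifies this right-hand side with the diffusion of the part-(a) policy $\pi_\theta^*\propto\mu_\phi e^{Q}$, which is the claimed consistency.

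I expect the self-normalized-versus-unnormalized weight mismatch to be the main obstacle, and it is precisely what dictates the $K\to\infty$ assumption: making the concentration rigorous requires interchanging the limit in $K$ with the optimization (for instance via a uniform or dominated-convergence bound on the loss), because the common denominator couples all $K$ slots and cannot be factored out at finite $K$. A secondary but conceptually important point is \emph{why} $\beta=1$ is required: carrying a general $\beta$ through the same steps produces a learned tilt $e^{Q_t/\beta}=e^{\frac{1}{\beta}\log\E_{\mu_t}e^{Q}}$, whereas the genuine diffusion of $\mu_\phi e^{Q/\beta}$ tilts by $e^{\log\E_{\mu_t}e^{Q/\beta}}$; since $\log$ and $\E$ do not commute with the nonlinear rescaling by $1/\beta$, these agree for all $\va_t$ only when $\beta=1$ (or $Q$ is constant in $\va$). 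I would flag both hypotheses explicitly so their roles are transparent.
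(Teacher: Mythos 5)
Your proposal follows essentially the same route as the paper's proof: part (a) specializes the objective to $t=0$ and applies Lemma~\ref{lemma1} with the identification $\hat{Q} := \beta[f_\theta^\pi - f_\phi^\mu]$, and part (b) invokes $K \to \infty$ to replace the self-normalized weights by unnormalized ones, then applies Lemma~\ref{lemma2} followed by Lemma~\ref{lemma3}. Your added remarks---that the limit-versus-optimization interchange needs justification, and that $\beta=1$ is forced because $\log$ and $\E$ do not commute with the $1/\beta$ rescaling---are correct refinements of steps the paper performs tersely, not a different argument.
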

\begin{proof}
We first rewrite Problem \ref{eq:CEP_DPO_loss} below:
\begin{equation}
\label{eq:proof_rewriteloss}
    \max_\theta \gL_{f}(\theta) = \E_{\mu(\vs) \prod_{i=1}^K \mu(\va^i|\vs) p(\va_t^i| \va^i,t)}\Bigg[\sum_{k=1}^K \frac{e^{Q(\vs, \va^k)}}{\sum_{i=1}^{K} e^{Q(\vs, \va^i)}} \log \frac{e^{\beta [f_\theta^\pi(\va_t^k|\vs,t) - f_\phi^\mu(\va_t^k|\vs,t)]}}{\sum_{i=1}^{K} e^{\beta [f_\theta^\pi(\va_t^i|\vs,t) - f_\phi^\mu(\va_t^i|\vs,t)]}}\Bigg]. 
\end{equation}
\textbf{(a) Optimality Guarantee.} At time $t=0$, we have $p(\va_t| \va,t) =\gN(\va_t | \alpha_t\va, \sigma_t^2\mI) = \gN(\va_t | \va,0\mI)$ such that $\va_t = \va$. 

Define $\hat{Q}^*(\vs,\va) := \beta [f_\theta^\pi(\va_t^k|\vs,t=0) - f_\phi^\mu(\va_t^k|\vs,t=0)]$. Since we assume unlimited model capacity for $f_\theta$, $\hat{Q}^*$ can be arbitrary scalar functions. Lemma \ref{lemma1} can then be applied:
\begin{align*}
    \pi_{\theta}^*(\va|\vs) =&\pi_{t=0, \theta}^*(\va|\vs)\\
     \propto&e^{f_\theta^\pi(\va_t^i|\vs,t=0)}\\
     =&e^{f_\phi^\mu(\va_t^i|\vs,t=0) + \hat{Q}^*(\vs,\va) /\beta}\\
     =&e^{f_\phi^\mu(\va_t^i|\vs,t=0)}e^{[Q(\vs, \va) + C(\vs)] /\beta}\\
     \propto& \mu_\phi(\va|\vs) e^{Q(\vs, \va)/\beta}
\end{align*}
\textbf{(b) Diffusion Consistency.} When $K \rightarrow \infty$, we $\sum_{i=1}^{K} e^{Q(\vs, \va^i)} = \E_\mu(\va|\vs) e^{Q(\vs, \va)}$ becomes constant and can be removed. Set $\beta=1$, the optimization problem \eqref{eq:CEP_DPO_loss} becomes
\begin{equation}
\label{eq:proof_rewriteloss2}
    \max_\theta \gL_{f}(\theta) = \E_{\mu(\vs) \prod_{i=1}^K \mu(\va^i|\vs) p(\va_t^i| \va^i,t)}\Bigg[\sum_{k=1}^K e^{Q(\vs, \va^k)} \log \frac{e^{\ [f_\theta^\pi(\va_t^k|\vs,t) - f_\phi^\mu(\va_t^k|\vs,t)]}}{\sum_{i=1}^{K} e^{ [f_\theta^\pi(\va_t^i|\vs,t) - f_\phi^\mu(\va_t^i|\vs,t)]}}\Bigg]. 
\end{equation}
We can then similarly apply Lemma \ref{lemma2} and get
\begin{align*}
    \log \frac{\pi^*_{t,\theta}(\va_t|\vs,t)}{\mu_t(\va_t|\vs,t)} &= f_\theta^\pi(\va_t|\vs,t) - f_\phi^\mu(\va_t|\vs,t) = \log \E_{\mu_t(\va|\va_t,\vs,t)} e^{Q(\vs,\va)} + Z(\vs)
\end{align*}
\begin{align*}
    \pi^*_{t,\theta}(\va_t|\vs,t) \propto \mu_t(\va_t|\vs,t) \E_{\mu_t(\va|\va_t,\vs,t)} e^{Q(\vs,\va)}
\end{align*}
According to Lemma \ref{lemma3}, we have
\begin{equation*}
    \pi_{t, \theta}^*(\va|\vs,t)=\int \gN(\va_t | \alpha_t\va, \sigma_t^2\mI) \pi_{\theta}^*(\va_0|\vs) \mathrm{d} \va_0
\end{equation*}
\end{proof}

\clearpage
\newpage
\section{Implementation Details for D4RL Tasks}
\label{sec:details}
We use NVIDIA A40 GPU cards to run all experiments. 

\textbf{Behavior pretraining.} For pretraining the bottleneck diffusion model, we extract a reward-free behavior dataset $\{\vs, \va\}$ from $\mathcal{D}^\mu:=\{\vs, \va, r, \vs'\}$. We adopt the model architecture used by IDQL \citep{idql} and SRPO \citep{srpo} but sum up the final $|\mathcal{A}|$-dimensional output to form a scalar network. The resulting model is basically a 6-layer MLP with residual connections, layer normalizations, and dropout regularization. We train the behavior network for 1M steps to ensure convergence. The batch size is 2048. The optimizer is Adam with a learning rate of 3e-4. We adopt default VPSDE \citep{sde} hyperparameters as the diffusion data perturbation method. 

\textbf{Constructing alignment dataset.}
First, we leverage $\mathcal{D}^\mu$ and use existing methods such as IQL \citep{iql} to learn a critic network that will later be utilized for data annotation. Then, for a random portion of state $\vs$ in $\mathcal{D}^\mu$, we leverage the pretrained behavior model to generate $K=16$ action samples. The original action in $\mathcal{D}^\mu$ is thrown away. We use the critic model to annotate each state-action pair and store them together as the alignment dataset $\mathcal{D}^f:=\{\vs, \va^{1:K}, Q(\vs, \va^k)|_{k\in 1:K} \}$. We use 2-layer MLPs with 256 hidden states. Critic training details is exactly the same with previous work \citep{iql}. Other critic learning methods used in ablation studies are also consistent with respective prior work \citep{idql,qgpo}.

\textbf{Policy fine-tuning.} The policy network is initialized to be the behavior network. Throughout the training, we fix the behavior model and only optimize the policy model. Behavior weights are frozen. The optimizer is Adam and the learning rate is 5e-5. All policy models are trained for 200k gradient steps though we observe convergence at 20K steps in most tasks. We do not employ dropout regularization during fine-tuning because we find it harms performance. For the temperature coefficient, we sweep over $\beta \in \{0.1, 0.2, 0.3, 0.5, 0.8, 1.0, 2.0\}$ (Figure \ref{fig:ablate_alpha}\&\ref{fig:ablate_alphawrf}). Similarly to \citep{bcq, idql, sfbc, idql}, we find the performance can be improved by adopting a rejection sampling technique. We select the action with the highest Q-value among 4 candidates during evaluation. We do not use such techniques in experimental plots (Figure \ref{fig:efficiency}\&\ref{fig:plots}) to better reflect policy improvement. We use 20 test seeds in all experiments and report numbers at the end of training. We train all experiments independently with 5 seeds in our main experiments and 3-5 seeds in ablation studies.

\newpage
\section{Additional Experiment Results}
\label{appendix:ablation}
\begin{figure}[!h]
\vspace{3mm}
\centering
\includegraphics[width = .32\linewidth]{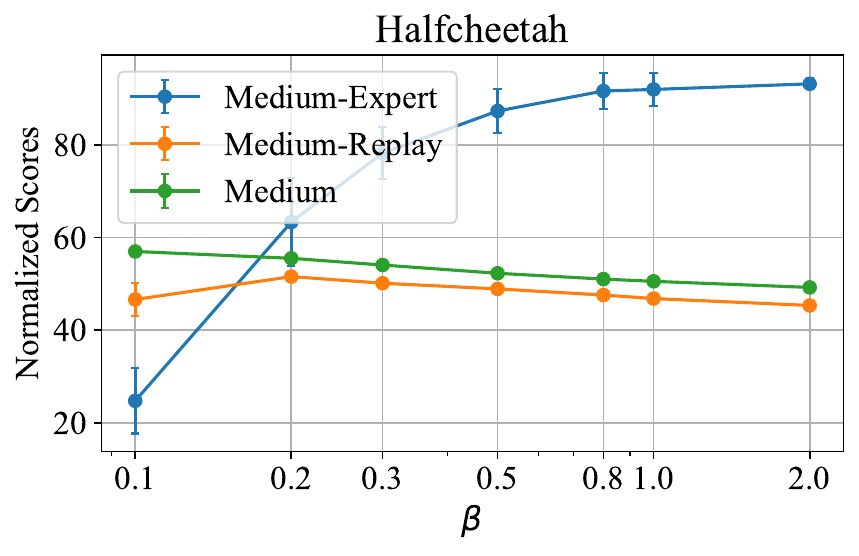}
\includegraphics[width = .32\linewidth]{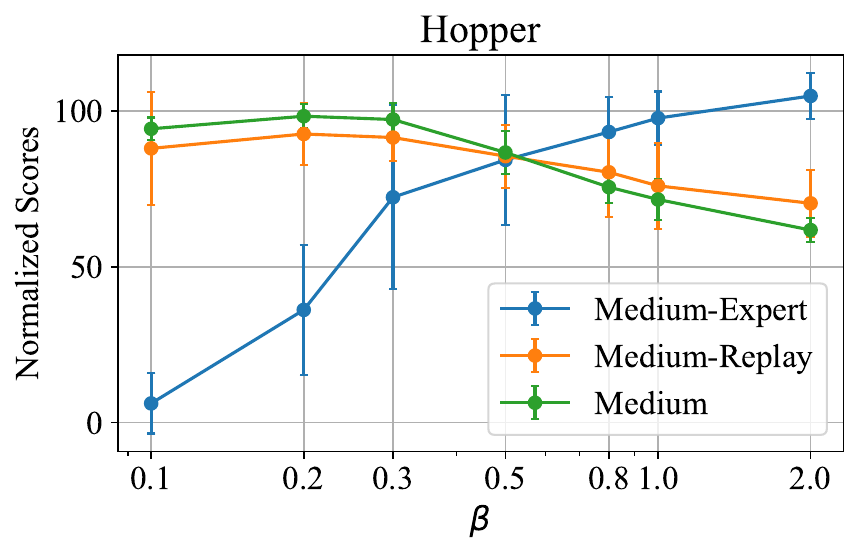}
\includegraphics[width = .32\linewidth]{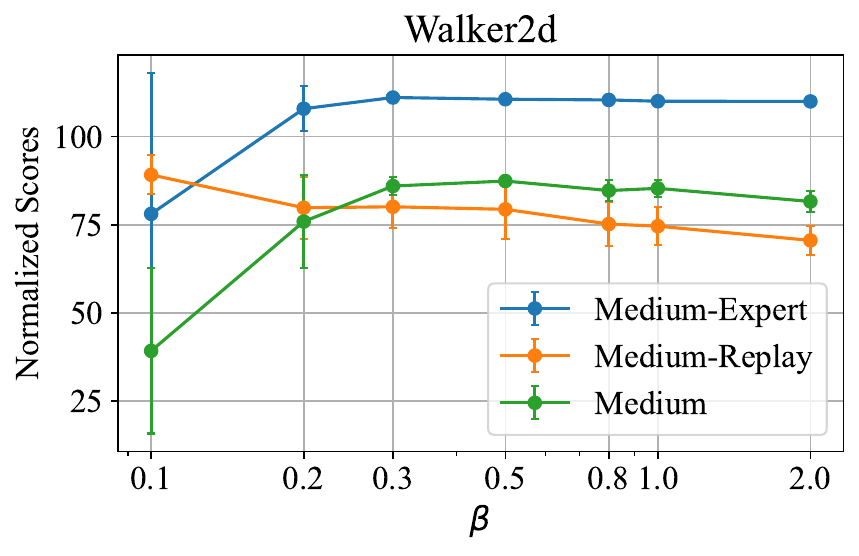}\\
\includegraphics[width = .32\linewidth]{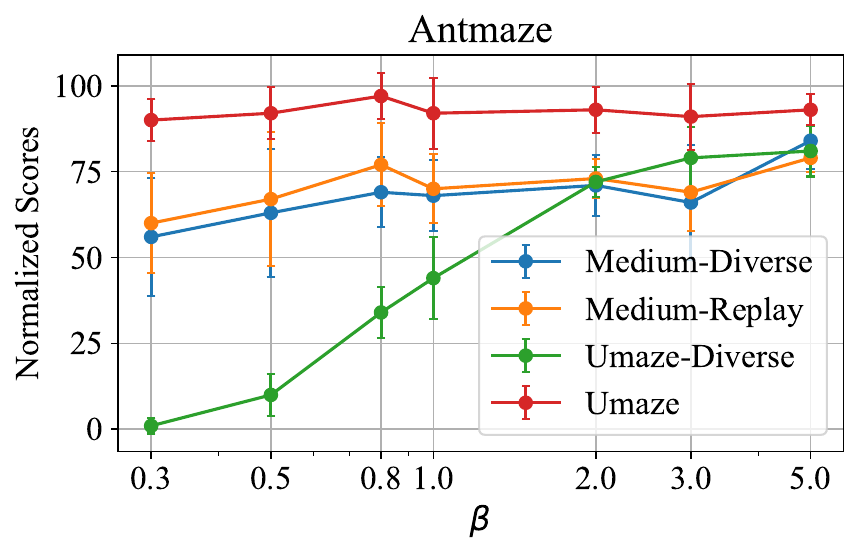}
\includegraphics[width = .32\linewidth]{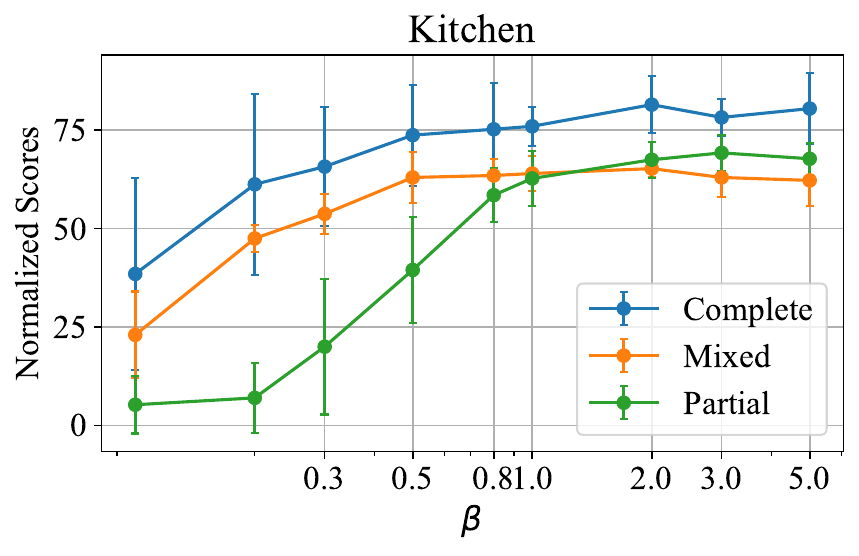}
\vspace{-2mm}
\caption{Ablation of the temperature coefficient $\beta$ in D4RL benchmarks.}
\label{fig:ablate_alpha}
\end{figure}

\begin{figure}[!h]
\vspace{3mm}
\centering
\includegraphics[width = .32\linewidth]{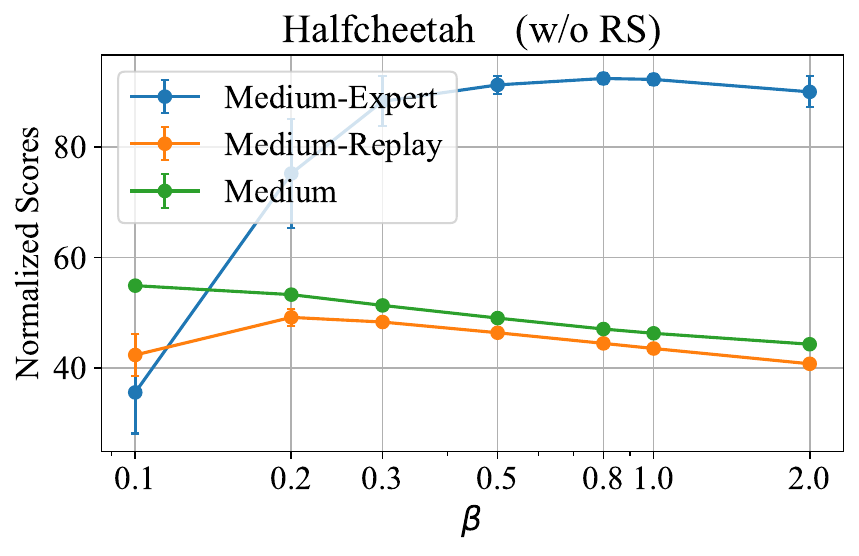}
\includegraphics[width = .32\linewidth]{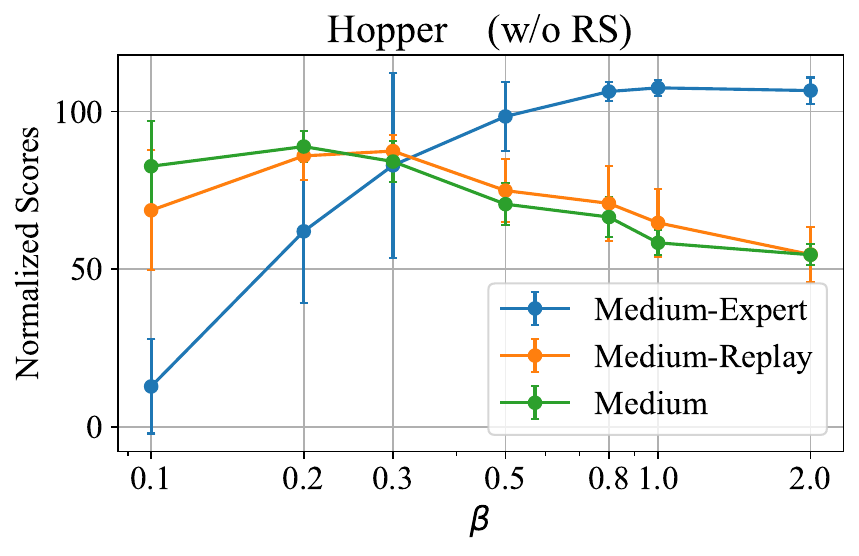}
\includegraphics[width = .32\linewidth]{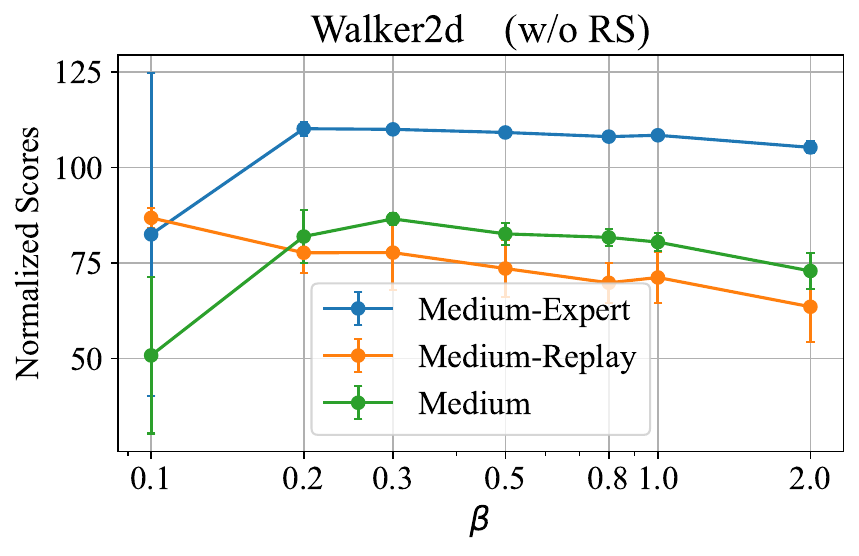}\\
\includegraphics[width = .32\linewidth]{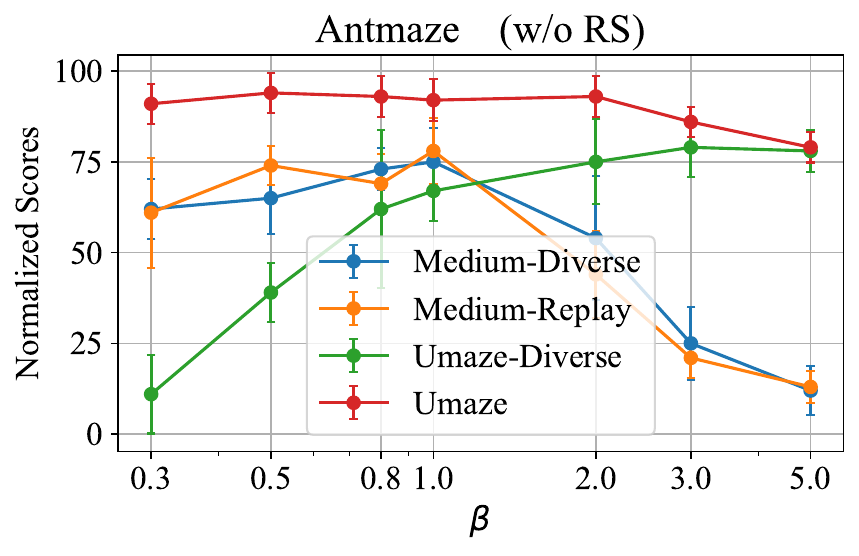}
\includegraphics[width = .32\linewidth]{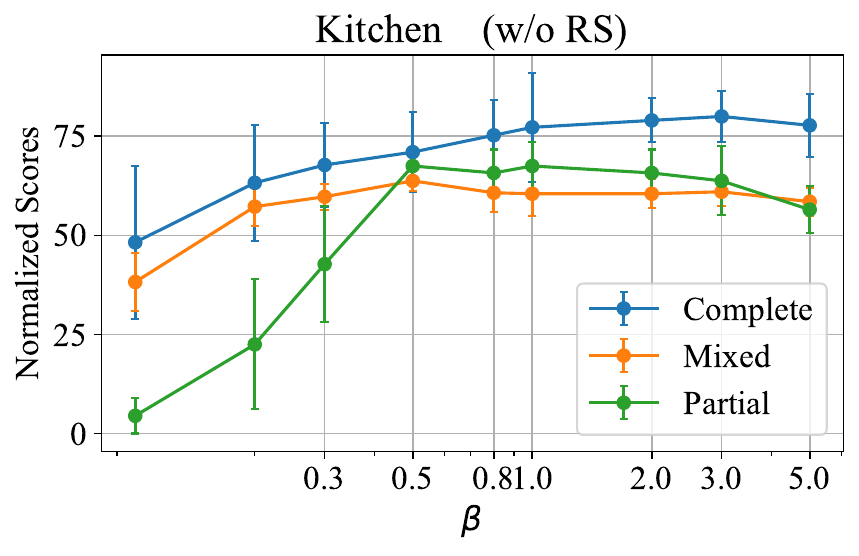}
\vspace{-2mm}
\caption{Ablation of the temperature coefficient $\beta$ without rejection sampling.}
\label{fig:ablate_alphawrf}
\end{figure}

\begin{table}[h]
\centering
\begin{tabular}{c|c|c|c}
\hline
\multirow{2}*{Halfcheetah} & Medium-expert & Medium & Medium-Replay\\
~ & 2.0 & 0.1 & 0.2\\
\hline
\multirow{2}*{Hopper} & Medium-expert & Medium & Medium-Replay\\
~ & 2.0 & 0.2 & 0.2\\
\hline
\multirow{2}*{Walker2d} & Medium-expert & Medium & Medium-Replay\\
~ & 0.3 & 0.5 & 0.1\\
\hline    
\multirow{2}*{AntMaze} & Umaze & Umanze-diverse & Medium (both)\\
~ & 0.5 & 5.0 & 1.0\\
\hline
\multirow{2}*{Kitchen} & Complete & Mixed & Partial\\
~ & 2.0 & 3.0 & 3.0\\
\hline
\end{tabular}
\caption{Temperature coefficient $\beta$ for every individual task.}
\label{tbl:gradient_scales_rl}
\end{table}


\vspace{5mm}

\clearpage
\newpage
\section{Additional Training Curves}
\label{sec:training_curves}
\begin{figure}[!h]
\vspace{-2mm}
\centering
\includegraphics[width = .32\linewidth]{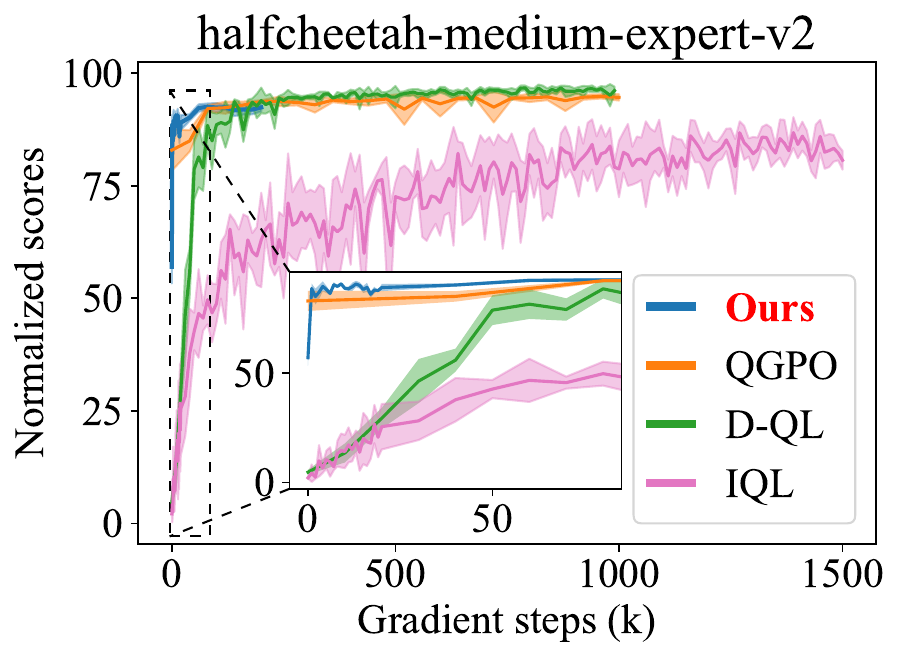}
\includegraphics[width = .32\linewidth]{pics/hopper-medium-expert-v2_plot.pdf}
\includegraphics[width = .32\linewidth]{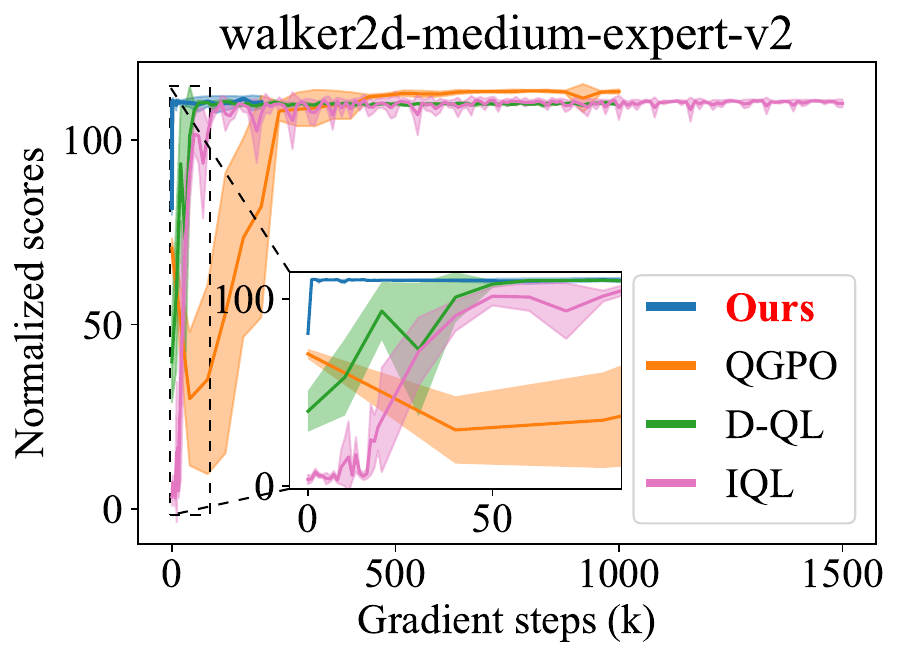}\\
\includegraphics[width = .32\linewidth]{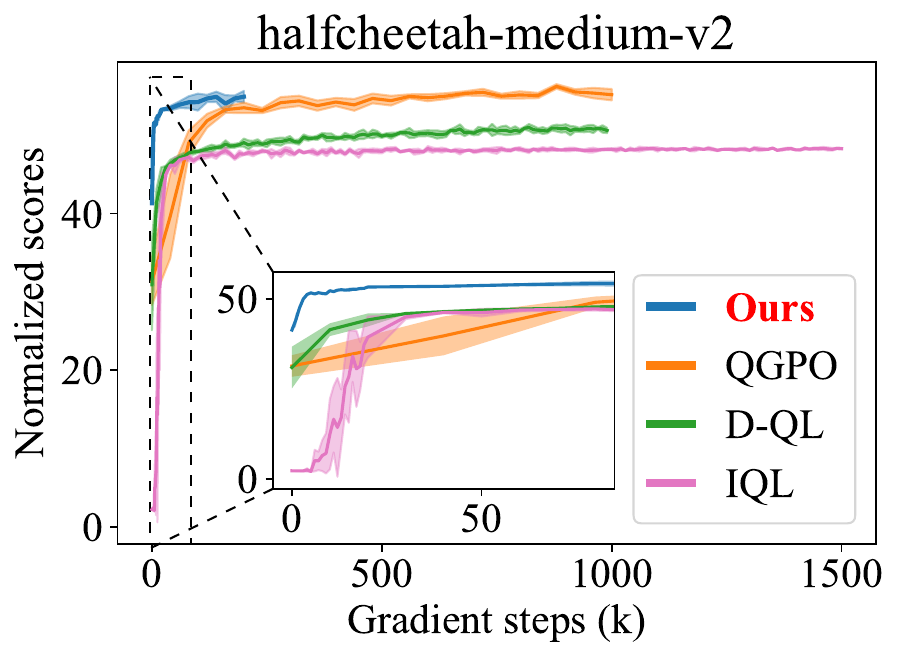}
\includegraphics[width = .32\linewidth]{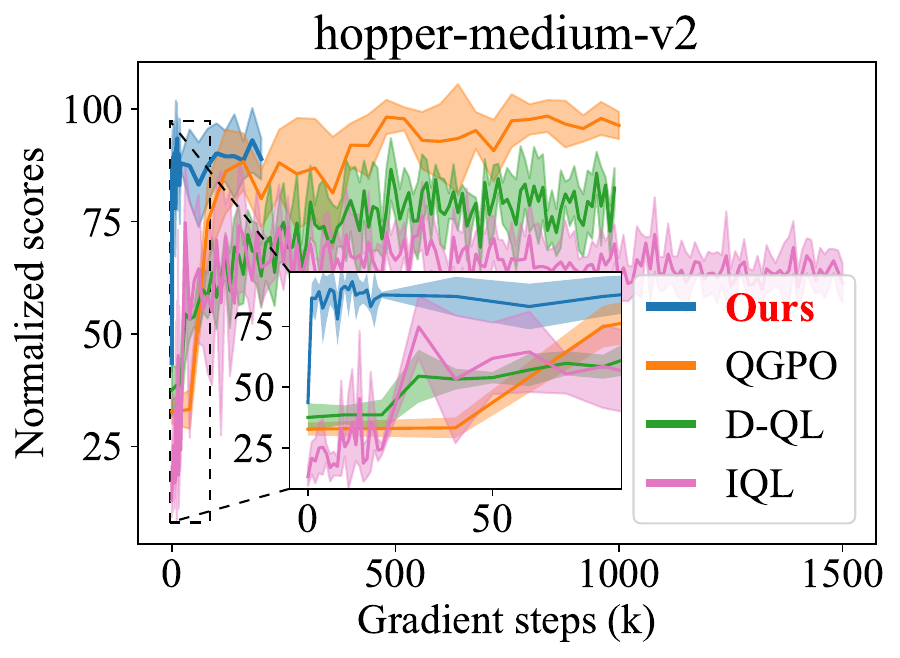}
\includegraphics[width = .32\linewidth]{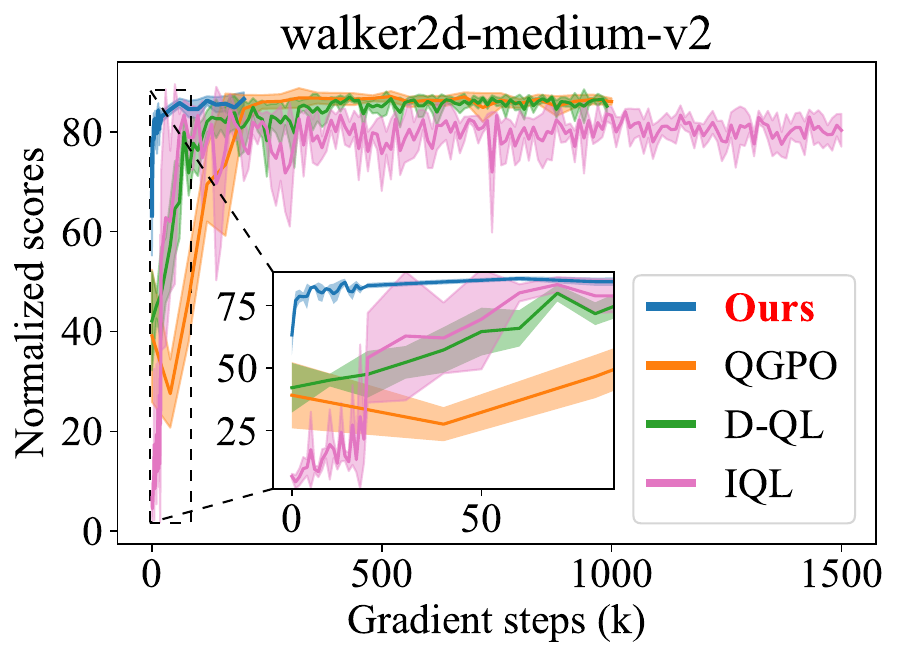}\\
\includegraphics[width = .32\linewidth]{pics/halfcheetah-medium-replay-v2_plot.pdf}
\includegraphics[width = .32\linewidth]{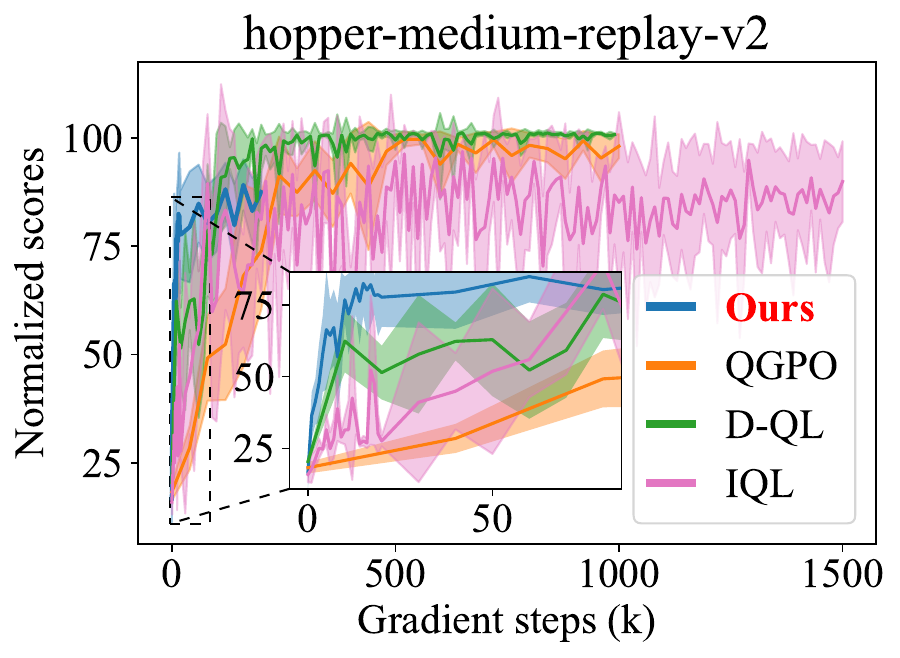}
\includegraphics[width = .32\linewidth]{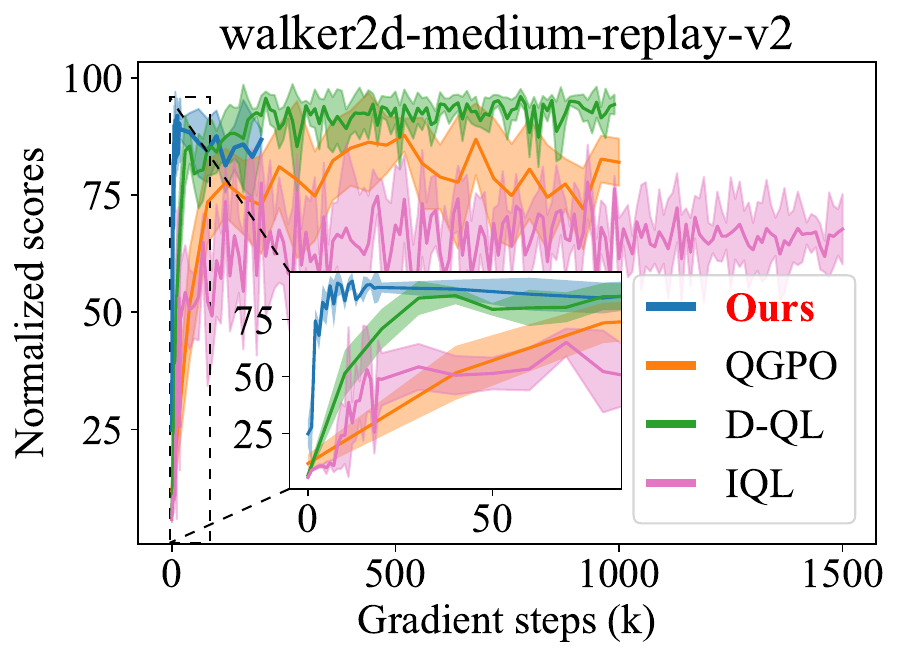}\\
\vspace{-2mm}
\caption{Training curves of EDA (ours) and several baselines.}
\vspace{-4mm}
\label{fig:plots}
\end{figure}
\clearpage

\end{document}